\documentclass{article}

\PassOptionsToPackage{numbers, compress}{natbib}


\usepackage[preprint]{neurips_2024}



\usepackage[utf8]{inputenc} 
\usepackage[T1]{fontenc}    
\usepackage{hyperref}       
\usepackage{url}            
\usepackage{booktabs}       
\usepackage{amsfonts}       
\usepackage{nicefrac}       
\usepackage{microtype}      
\usepackage{xcolor}         
\usepackage{mathtools}
\usepackage{enumitem}
\usepackage{xcolor}
\usepackage{etoc}
\usepackage{amsthm}
\usepackage{physics}
\usepackage{multirow} 
\usepackage{makecell}
\usepackage{caption}
\usepackage{subfigure}
\usepackage{nicematrix,tikz}
\usepackage{pifont}
\usepackage{algorithm}
\usepackage{algorithmic}
\usepackage{graphicx}

\etocdepthtag.toc{mtchapter}
\etocsettagdepth{mtchapter}{subsection}
\etocsettagdepth{mtappendix}{none}

\newtheorem{proposition}{Proposition}
\newtheorem{definition}{Definition}
\newtheorem{theorem}{Theorem}
\newtheorem{remark}{Remark}
\newtheorem{assumption}{Assumption}
\newtheorem{corollary}{Corollary}

\newtheorem{lemma}{Lemma}
\newcommand{\matr}[1]{\mathbf{#1}} 

\usepackage[normalem]{ulem}

\title{BELM: Bidirectional Explicit Linear Multi-step Sampler for Exact Inversion in Diffusion Models}

%
\author{%
Fangyikang Wang$^{1}\thanks{Equal contribution.  This work was done when Fangyikang Wang was an intern at WeChat.}$ \quad Hubery Yin$^{2*}$ \quad Yuejiang Dong$^3$ \quad Huminhao Zhu$^1$ \\
\quad \textbf{Chao Zhang}$^1$\thanks{Corresponding author.} \quad
\textbf{Hanbin Zhao}$^1$ \quad \textbf{Hui Qian}$^1$ \quad \textbf{Chen Li}$^2$\\
$^1$Zhejiang University \quad $^2$WeChat, Tencent Inc. \quad $^3$Tsinghua University\\
\texttt{\{wangfangyikang,zhuhuminhao,zczju,zhaohanbin,qianhui\}@zju.edu.cn}\\
\texttt{\{hubery,chaselli\}@tencent.com}\\
\texttt{dongyj21@mails.tsinghua.edu.cn}
}

\newcommand{\RD}{\mathbb{R}^d}
\newcommand{\vx}{\mathbf{x}}
\newcommand{\ve}{\mathbf{e}}
\newcommand{\bvx}{\Bar{\mathbf{x}}}
\newcommand{\bvz}{\Bar{\mathbf{z}}}
\newcommand{\bsigma}{\Bar{\sigma}}
\newcommand{\vy}{\mathbf{y}}
\newcommand{\vz}{\mathbf{z}}
\newcommand{\rd}{\mathrm{d}}
\newcommand{\vepsilon}{\boldsymbol{\varepsilon}}
\newcommand{\bvepsilon}{\Bar{\boldsymbol{\varepsilon}}}

\begin{document}

\maketitle

\begin{abstract}
The inversion of diffusion model sampling, which aims to find the corresponding initial noise of a sample, plays a critical role in various tasks.
Recently, several heuristic exact inversion samplers have been proposed to address the inexact inversion issue in a training-free manner. 
However, the theoretical properties of these heuristic samplers remain unknown and they often exhibit mediocre sampling quality.
In this paper, we introduce a generic formulation, \emph{Bidirectional Explicit Linear Multi-step} (BELM) samplers, of the exact inversion samplers, which includes all previously proposed heuristic exact inversion samplers as special cases.
The BELM formulation is derived from the variable-stepsize-variable-formula linear multi-step method via integrating a bidirectional explicit constraint. We highlight this bidirectional explicit constraint is the key of mathematically exact inversion.
We systematically investigate the Local Truncation Error (LTE) within the BELM framework and show that the existing heuristic designs of exact inversion samplers yield sub-optimal LTE.
Consequently, we propose the Optimal BELM (O-BELM) sampler through the LTE minimization approach.
We conduct additional analysis to substantiate the theoretical stability and global convergence property of the proposed optimal sampler.
Comprehensive experiments demonstrate our O-BELM sampler establishes the exact inversion property while achieving high-quality sampling.
Additional experiments in image editing and image interpolation highlight the extensive  potential of applying O-BELM in varying applications.
\end{abstract}

\section{Introduction}
The emerging diffusion models (DMs) \cite{sohl2015deep, ho2020denoising,song2019generative, song2020score}, generating samples of data distribution from initial noise by learning a reverse diffusion process, have been proven to be an effective technique for modeling data distribution, especially in generating high-quality images \cite{pmlr-v162-nichol22a, NEURIPS2021_49ad23d1, NEURIPS2022_ec795aea, ramesh2022hierarchical, rombach2022high,JMLR:v23:21-0635}. 
The diffusion process along with its sampling processes in DMs can be delineated as the forward and corresponding backward stochastic differential equations (SDE) \cite{song2020score, anderson1982reverse}. 
Furthermore, the sampling process can also be represented as a deterministic diffusion ordinary differential equation (ODE)  \cite{song2020score,song2021denoising}, which is also called Probability Flow ODE (PF-ODE) in some papers. 
Notably, the backward SDE and diffusion ODE share the same marginal distribution\cite{song2020score}.

The inversion of the diffusion sampling, which aims to elucidate the correspondences between samples and initial noise, plays a critical role in various tasks of DMs.
The diffusion inversion has a variety of downstream applications, including image editing \cite{hertz2023prompttoprompt, su2022dual}, image interpolation \cite{song2021denoising}, inpainting \cite{chung2022improving}, and super-resolution \cite{wang2018esrgan}.
Several studies \cite{kawar2021snips,kawar2022denoising,chung2022improving} have endeavored to tackle the inversion task within the context of SDE-based diffusion sampling. However, these works have not been able to achieve a mathematically exact inversion due to the inherent stochasticity of SDE.

In contrast, the diffusion ODE naturally gives out a correspondence between samples and noise. The famous DDIM \cite{song2021denoising} and its inversion are formulated by considering a first-order explicit Euler discretization to the diffusion ODE. However, as noted in the work of \cite{hertz2023prompttoprompt}, the DDIM inversion introduces an inconsistency problem due to the schematic mismatch between DDIM and its inversion (see Figure \ref{fig:linear_relation}). Encoding from $x_0$ to $x_T$ using DDIM inversion and then decoding using DDIM often leads to inexact reconstructions of the original samples (see Figure \ref{fig:recon}).
To enable exact inversion, the work of null-text inversion \cite{mokady2023null} introduces intensive training for iterative optimization but still falls short of achieving a mathematically exact inversion.
\begin{figure*}[!t]
\centering
\includegraphics[width=0.95\textwidth]{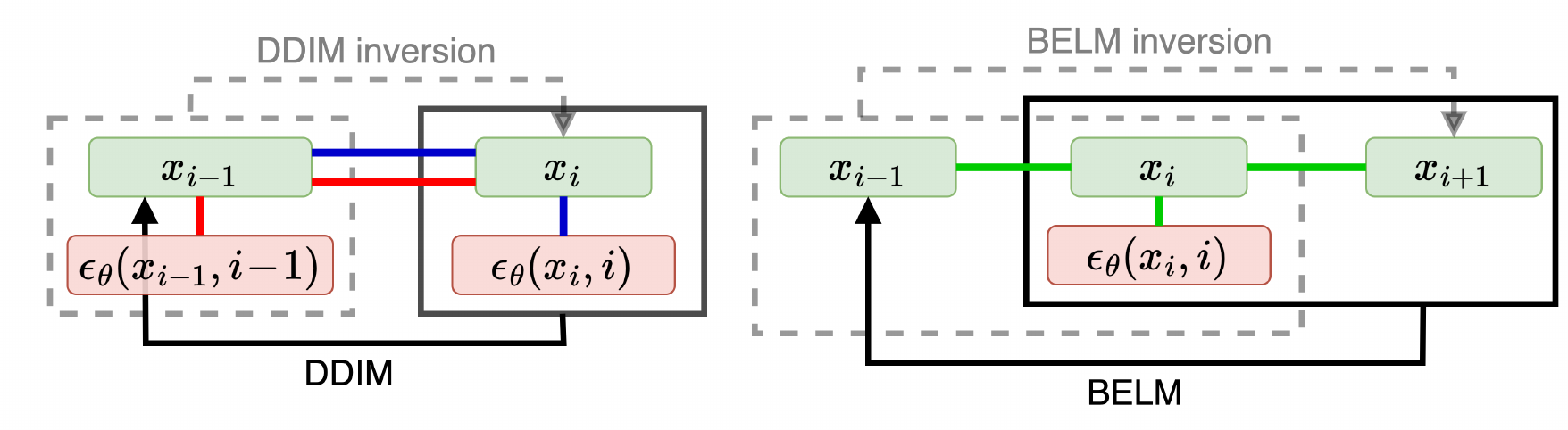} 
\caption{\textbf{Schematic description} of DDIM (left) and BELM (right). DDIM uses $\vx_i$ and $\vepsilon_\theta(\vx_i,i)$ to calculate $\vx_{i-1}$ based on a linear relation between $\vx_i$, $\vx_{i-1}$ and $\vepsilon_\theta(\vx_i,i)$ (represented by the \textcolor{blue}{blue line}). However, DDIM inversion uses $\vx_{i-1}$ and $\vepsilon_\theta(\vx_{i-1},i-1)$ to calculate $\vx_{i}$ based on a different linear relation represented by the \textcolor{red}{red line}. This mismatch leads to the inexact inversion of DDIM. In contrast, BELM seeks to establish a linear relation between $\vx_{i-1}$, $\vx_i$, $\vx_{i+1}$ and $\vepsilon_\theta(\vx_{i}, i)$ (represented by the \textcolor{green}{green line}). BELM and its inversion are derived from this unitary relation, which facilitates the exact inversion. Specifically, BELM uses the linear combination of $\vx_i$, $\vx_{i+1}$ and $\vepsilon_\theta(\vx_{i},i)$ to calculate $\vx_{i-1}$, and the BELM inversion uses the linear combination of $\vx_{i-1}$, $\vx_i$ and $\vepsilon_\theta(\vx_{i},i)$ to calculate $\vx_{i+1}$. The bidirectional explicit constraint means this linear relation does not include the derivatives at the bidirectional endpoint, that is, $\vepsilon_\theta(\vx_{i-1},i-1)$ and $\vepsilon_\theta(\vx_{i+1},i+1)$.
}
\label{fig:linear_relation}
\vspace{-7mm}
\end{figure*}

Recently, several heuristic exact inversion samplers have been proposed to address this inexact inversion issue in a training-free manner \cite{wallace2023edict,zhang2023exact}. These samplers enable the mathematically exact inversion without the need for additional training and are thus compatible with pre-trained models. 
Taking inspiration from affine coupling layers in normalizing flows \cite{dinh2014nice,dinh2016density}, EDICT \cite{wallace2023edict} intuitively introduces an auxiliary diffusion state and performs alternating mixture updates on the primal and auxiliary diffusion states. 
Later, BDIA \cite{zhang2023exact} employs a symmetric bidirectional integration structure to achieve exact inversion intuitively.
However, these heuristic exact inversion samplers often compromise the sampling quality due to their intuitive formula design. They may also introduce undesirable extra computational overhead or non-robust hyperparameters. 

In this paper, we develop a generic formula for the general exact inversion samplers, termed as Bidirectional Explicit Linear Multi-step (BELM) samplers. 
We demonstrate that all previously proposed heuristic exact inversion samplers are, in fact, special instances of BELM samplers. 
The concept of BELM originates from the observation of the mismatch between DDIM formula and its inversion formula. BELM is formulated by establishing an unifying relationship, from which both BELM and its inversion are derived.
More specifically, the unifying relationship of BELM is constructed in a variable-stepsize-variable-formula (VSVF) linear multi-step manner, supplemented with an additional bidirectional explicit constraint to facilitate exact inversion.

We systematically investigate the Local Truncation Error (LTE) within the BELM framework and show that the existing heuristic designs of exact inversion samplers yield sub-optimal LTE.
Consequently, we employed a LTE minimization approach to design the formula of the optimal case within BELM, which we refer to as O-BELM. The formula for O-BELM dynamically adjusts in accordance with the timesteps, thereby ensuring minimized local error and consequently yielding the highest possible sampling accuracy.
Furthermore, we demonstrate that O-BELM possesses the desirable property of zero-stability, which makes O-BELM robust to initial values. It also has the beneficial property of global convergence, which prevents O-BELM from diverging during sampling. 
To the best of our knowledge, O-BELM is the first theoretically guaranteed exact inversion diffusion sampler. 


We perform an image reconstruction experiment on the COCO dataset to validate that our O-BELM indeed achieves exact inversion, thereby enabling it to precisely recover complex image features.
Furthermore, experiments involving both unconditional and conditional image generation demonstrate that O-BELM can ensure high sampling quality. 
Additional experiments in downstream tasks such as image editing and image interpolation highlight the extensive application potential of O-BELM.


\section{Preliminaries}
\subsection{Diffusion Models and Diffusion SDEs}
Suppose that we have a d-dimensional random variable $\vx(0) \in \RD$ following an unknown target distribution $q_0(x_0)$. Diffusion Models (DMs) define a forward process $\{\vx(t)\}_{t\in[0,T]}$ with $T>0$ starting with $\vx(0)$, such that the distribution of $\vx(t)$ conditioned on $\vx(0)$ satisfies
\begin{equation} \label{}
  q_{t\vert 0}(\vx(t) \vert \vx(0)) = \mathcal{N}(\vx(t); \alpha(t) \vx(0), \sigma^2(t)\mathbf{I}), 
\end{equation}
where $\alpha(\cdot),\sigma(\cdot)\in \mathcal{C}(\left[ 0,T\right],\mathbb{R}^{+})$ have bounded derivatives, and we denote them as $\alpha_t$ and $\sigma_t$ for simplicity. The choice for $\alpha_t$ and $\sigma_t$ is referred to as the noise schedule of a DM.
According to \cite{NEURIPS2021_b578f2a5,karras2022elucidating, lu2022dpm}, with some assumption on $\alpha(\cdot)$ and $\sigma(\cdot)$, the forward process can be modeled as a linear SDE which is also called Ornstein–Uhlenbeck process:
\begin{equation}\label{diffusion_sde}
  \rd \vx(t) = f(t) \vx(t) \rd t + g(t) \rd B_t,
\end{equation}
where $B_t$ is the standard d-dimensional Brownian Motion (BM), 
  $f(t) = \frac{\rd \log \alpha_t}{\rd t}$ and $g^2(t)=\frac{\rd \sigma_t^2}{\rd t}-2 \frac{\rd \log \alpha_t}{\rd t}\sigma_t^2$.
Under some regularity conditions, the above forward SDE \eqref{diffusion_sde} have a reverse SDE from time $T$ to $0$, which starts from $\vx(t)$ \cite{anderson1982reverse}:
\begin{equation} \label{diffusion_sde_rev}
  \rd \vx(t) = \left[f(t) \vx(t) - g^2(t) \nabla_{\vx(t)} \log q(\vx(t), t)\right]\rd t + g(t) \rd \tilde{B}_t,
\end{equation}
where $\tilde{B}_t$ is the reverse-time Brownian motion and $q(\vx(t), t)$ is the single-time marginal distribution of the forward process. In practice, DMs \cite{ho2020denoising,song2020score} use $\vepsilon_\theta(\vx(t),t)$ to estimate $-\sigma(t)\nabla_{\vx(t)} \log q(\vx(t), t)$ and the parameter $\theta$ is optimized by the following objective:
\begin{eqnarray}\label{dpm_objective}
  \theta^* = \mathop{\arg\min}\limits_{\theta} \mathbb{E}_t \left\{ \lambda_t \mathbb{E}_{x_0, x_t} \left[ \lVert s_\theta(x_t, t) - \nabla_{x_t} \log p(x_t, t | x_0, 0) \rVert^2 \right]\right\},
\end{eqnarray}

\subsection{Diffusion ODE and DDIM}
It is noted that the reverse SDE \eqref{diffusion_sde_rev} has an associated probability flow ODE (also called diffusion ODE), which is a deterministic process that shares the same single-time marginal distribution \cite{song2020score}:
\begin{equation}\label{PFODE}
  \rd \vx(t) = \left[ f(t)\vx(t) - \frac{1}{2}g^2(t) \nabla_{\vx(t)} \log q(\vx(t), t) \right]\rd t.
\end{equation}
Upon substituting the $f(t)$ and $g(t)$ into Eq. \eqref{PFODE}, we obtain the following first-order form:
\begin{equation}\label{PFODE_1}
  \rd \left( \frac{\vx(t)}{\alpha_t}\right) = \vepsilon_\theta\left(\vx(t),t\right) \rd \left( \frac{\sigma_t}{\alpha_t}\right).
\end{equation}
The famous DDIM sampler \cite{song2021denoising} can be obtained by applying the explicit Euler method to Eq. \eqref{PFODE_1}.
\begin{equation}\label{DDIM}
  \vx_{i-1} = \frac{\alpha_{i-1}}{\alpha_{i}}\vx_i+\left( \sigma_{i-1} - \frac{\alpha_{i-1}}{\alpha_{i}} \sigma_{i} \right) \vepsilon_\theta(\vx_i,i).
\end{equation}
The inversion of DDIM is obtained by applying the explicit Euler method in the reverse of Eq. \eqref{PFODE_1}:
\begin{equation}\label{DDIM_inversion}
    \vx_{i} = \frac{\alpha_{i}}{\alpha_{i-1}}\vx_{i-1}+\left( \sigma_{i} - \frac{\alpha_{i}}{\alpha_{i-1}} \sigma_{i-1} \right) \vepsilon_\theta(\vx_{i-1},i-1).
\end{equation}

\subsection{Intuitive Exact Inversion Samplers of Diffusion Models}
In practice, we observe an inconsistency issue with the DDIM inversion \eqref{DDIM_inversion}. Consider a sample $\vx_0$; using DDIM inversion, we obtain the corresponding noise $\vx_T$ and then use DDIM to reconstruct a $\vx_0^*$. The reconstructed $\vx_0^*$ would exhibit significant inconsistency with the original sample $\vx_0$. Recently, two exact inversion samplers, EDICT and BDIA, have been heuristically proposed to address this inconsistency issue in a training-free manner.

\paragraph{EDICT sampler}
Taking inspiration from affine coupling layers in normalizing flows \cite{dinh2014nice,dinh2016density}, the recent work \cite{wallace2023edict} proposed EDICT to enforce exact diffusion inversion. The basic idea lies in introducing an auxiliary diffusion state $\vy_t$ to be coupled with $\vx_t$. Denoting $a_i = \frac{\alpha_{i-1}}{\alpha_{i}}$ and $b_i = \sigma_{i-1} - \frac{\alpha_{i-1}}{\alpha_{i}} \sigma_{i}$, the formulation of EDICT writes:

\begin{equation} \label{EDICT}
\left\{
\begin{aligned}
    &\hspace{0.5em} \vx_{i}^{inter} = a_i\vx_i+b_i \vepsilon_\theta(\vy_i,i), 
    & &\hspace{0.5em}\vy_{i}^{inter} = a_i\vy_i+b_i \vepsilon_\theta(\vx(t)^{inter},i), 
    \\
    &\hspace{0.5em}\vx_{i-1} = p\vx_{i}^{inter}+(1-p)\vy_{i}^{inter}, 
    & &\hspace{0.5em}\vy_{i-1} = p\vy_{i}^{inter} +(1-p)\vx_{i-1}. 
\end{aligned}
\right.
\end{equation}
 where $p \in (0,1)$ is the mixing coefficient. The details of EDICT inversion defers to Appendix \ref{app:detail_edict}.
\paragraph{BDIA sampler}
BDIA sampler \cite{zhang2023exact} utilizes a symmetric bidirectional integration structure to achieve exact inversion. BDIA reformulate the expression of DDIM \eqref{DDIM} to be $\vx^{\text{DDIM}}_{i-1} = \vx^{\text{DDIM}}_i + \Delta \left( i \to i-1 | \vx^{\text{DDIM}}_i \right) $ and the expression of DDIM inversion \eqref{DDIM_inversion} to be $\vx^{\text{DDIM}}_{i} = \vx^{\text{DDIM}}_{i-1} + \Delta \left( i-1 \to i | \vx^{\text{DDIM}}_{i-1} \right) $.
BDIA intuitively leverage $- \left[(1-\gamma)(\vx_{i+1} - \vx_i) + \gamma \Delta \left( i \to i+1 | \vx_{i} \right)\right]$ to approximate the increment from $x_{i+1}$ to $x_i$ and $\Delta \left( i \to i-1 | \vx_i \right)$ as the increment from $x_i$ to $x_{i-1}$ .
Thus, the updating rule of BDIA writes:
\begin{equation} \label{BDIA}
\begin{aligned}
    \vx_{i-1}=& \vx_{i+1}  \underbrace{- \left[(1-\gamma)(\vx_{i+1} - \vx_i) + \gamma \Delta \left( i \to i+1 | \vx_{i} \right)\right]}_{increment(\vx_{i+1} \to \vx_{i})}+ \underbrace{\Delta \left( i \to i-1 | \vx_i \right)}_{increment(\vx_{i} \to \vx_{i-1})}.\\
\end{aligned}
\end{equation}
The comprehensive formulation of BDIA and its inversion can be found in Appendix \ref{app:detail_bdia}.

However, the theoretical properties of these heuristic samplers remain unknown and they often exhibit compromised sampling quality. 
To the best of our knowledge, there is no systematic approach to derive a diffusion sampler that simultaneously possesses the exact diffusion inversion property and maintains high sampling quality.

\section{The Generic Bidirectional Explicit Linear Multi-step (BELM) Samplers}\label{sec:metho}
In this section, we first model the diffusion sampling process as a well-posed initial value problem to facilitate subsequent analysis.
By the rethinking of DDIM inversion, we propose the generic Bidirectional Explicit Linear Multi-step (BELM) samplers in a variable-stepsize-variable-formula (VSVF) manner.
We further illustrate that EDICT and BDIA are, in fact, special instances of the BELM framework.

\paragraph{The diffusion sampling problem as an IVP}
By denoting $\bvx(t) \equiv \frac{\vx(t)}{\alpha_t}$, $\bsigma(t) \equiv \frac{\sigma_t}{\alpha_t}$ and $\bvepsilon_\theta(\bvx(t),\bsigma_t) \equiv \vepsilon_\theta\left(\vx(t),t\right)$,
 the deterministic sampling process of DMs \eqref{PFODE_1} can be seen as an special reverse-time diffusion initial value problem (IVP) \cite[p.310]{suli2003introduction}\cite[p.3]{butcher2016numerical}: 
 
\begin{equation} \label{ivp-diffusion}
\begin{aligned}
    \rd \bvx(t) = \bvepsilon_\theta\left(\bvx(t),\bsigma_t\right) \rd  \bsigma_t,
\end{aligned}
\end{equation}

where $\bvx(T) = \vx(T)/\alpha_T$.
A fundamental question before any further analysis is whether the given diffusion IVP \eqref{ivp-diffusion} admits any solution and, if so, whether this solution is unique.
Firstly, we need to establish some regularity assumptions on our diffusion sampling problem \eqref{PFODE_1}.
\begin{assumption}\label{asump:epsilon}
    $\vepsilon_\theta(\vx,t)$ is continuous w.r.t. $t$ and Lipschitz continuous w.r.t. $\vx$ with the Lipschitz constant $L_{\vepsilon_\theta}$, which implies $\lVert \vepsilon_\theta(\vx_1,t) - \vepsilon_\theta(\vx_2,t) \rVert_2 \leq L_{\vepsilon_\theta} \lVert \vx_1-\vx_2 \rVert_2$.
\end{assumption}
The Assumption \ref{asump:epsilon}  is a common assumption of the noise predictor $\vepsilon_\theta(\vx,t)$ in the DMs literature \cite{10.5555/3618408.3619743}.
Under the condition of Assumption \ref{asump:epsilon}, we can confirm the diffusion IVP \eqref{ivp-diffusion} is well-posed by a direct application of the existence and uniqueness theorem in the IVP theory \cite[p.~23]{butcher2016numerical}.
\begin{proposition}\label{prop:exist}
 Under Assumption \ref{asump:epsilon}, there exists a unique solution to the diffusion IVP \eqref{ivp-diffusion}.
\end{proposition}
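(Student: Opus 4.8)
The plan is to recast the diffusion IVP \eqref{ivp-diffusion} as a standard explicit first-order ODE and then invoke the classical Picard--Lindel\"of existence-and-uniqueness theorem cited at \cite[p.~23]{butcher2016numerical}. Writing $\dot{\bsigma}_t \equiv \rd \bsigma_t / \rd t$, Eq. \eqref{ivp-diffusion} becomes $\rd \bvx(t)/\rd t = F(\bvx(t),t)$ with $F(\bvx,t) \equiv \bvepsilon_\theta(\bvx,\bsigma_t)\,\dot{\bsigma}_t$. The theorem guarantees a unique solution on $[0,T]$ once we check that (i) $F$ is continuous in $t$ and (ii) $F$ is Lipschitz in $\bvx$, uniformly over $t\in[0,T]$. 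Both checks reduce to propagating Assumption \ref{asump:epsilon} through the change of variables $\bvx = \vx/\alpha_t$, $\bsigma_t = \sigma_t/\alpha_t$.

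For the Lipschitz estimate, I would use the identity $\bvepsilon_\theta(\bvx,\bsigma_t) = \vepsilon_\theta(\alpha_t\bvx,t)$ together with Assumption \ref{asump:epsilon}, which yields $\lVert \bvepsilon_\theta(\bvx_1,\bsigma_t) - \bvepsilon_\theta(\bvx_2,\bsigma_t)\rVert_2 \le L_{\vepsilon_\theta}\,\alpha_t\,\lVert \bvx_1-\bvx_2\rVert_2$. Since $\alpha(\cdot)$ is continuous on the compact interval $[0,T]$, it is bounded above by some $\alpha_{\max}$, so multiplying by the uniform bound on $\lvert \dot{\bsigma}_t\rvert$ (established below) furnishes a single global Lipschitz constant for $F$ in $\bvx$. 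Continuity of $F$ in $t$ then follows from the assumed continuity of $\vepsilon_\theta$ in $t$, the continuity of $\alpha_t$ and $\bsigma_t$, and the continuity of $\dot{\bsigma}_t$.

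The main obstacle is verifying that $\dot{\bsigma}_t$ is continuous and uniformly bounded, which is what makes $F$ well behaved. Differentiating the quotient gives
\begin{equation*}
  \dot{\bsigma}_t = \frac{\dot{\sigma}_t\,\alpha_t - \sigma_t\,\dot{\alpha}_t}{\alpha_t^2}.
\end{equation*}
Here I would invoke that $\alpha(\cdot),\sigma(\cdot)$ have bounded derivatives and are continuous, hence bounded, on $[0,T]$, and crucially that $\alpha_t>0$ is continuous on the compact $[0,T]$ and therefore attains a strictly positive minimum $\alpha_{\min}>0$; this keeps the denominator bounded away from zero and bounds $\lvert \dot{\bsigma}_t\rvert$ uniformly. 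With (i) and (ii) in hand, the existence-uniqueness theorem delivers the unique solution and completes the proof. I expect the only genuinely delicate point to be this positivity-and-compactness argument for $\alpha_t$; everything else is a routine transfer of the hypotheses through the rescaling.
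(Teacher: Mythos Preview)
Your proposal is correct and follows exactly the route the paper takes: the paper simply asserts that Proposition~\ref{prop:exist} is ``a direct application of the existence and uniqueness theorem'' from \cite[p.~23]{butcher2016numerical}, and you have spelled out the verification of the Picard--Lindel\"of hypotheses that this application entails. The paper provides no further detail, so your write-up is in fact more thorough than the original while remaining on the same path.
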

In this paper, $\vx(\cdot)$ denote the continuous solution, and $\vx_i$ denote numerical approximations.

\paragraph{Rethinking on DDIM inversion}
As shown in Figure \ref{fig:linear_relation}, DDIM \eqref{DDIM} and its inversion \eqref{DDIM_inversion} are derived based on different linear relationships. We highlight that this mismatch results in the inexact inversion of DDIM.
Building on this observation, a natural idea is to construct the DDIM inversion based on the same linear relationships as the DDIM to eliminate this mismatch.
Regrettably, DDIM is constructed on a relationship between $\vx_{i}$, $\vx_{i-1}$, and $\vepsilon_\theta(\vx_{i}, i)$ (utilizes $\vx_{i}$, and $\vepsilon_\theta(\vx_{i}, i)$ to compute $\vx_{i-1}$), which DDIM inversion cannot leverage to directly calculate $\vx_{i}$, as $\vepsilon_\theta(\vx_{i}, i)$ is also unknown in the DDIM inversion case. 
This relation is explicit for DDIM but implicit for DDIM inversion.
It should be noted that implicit equations must be solved using iterative methods such as Newton's method \cite[p.~19]{suli2003introduction}, which are time-consuming and can introduce numerical error in the context of DMs \cite{hong2023exact,meiri2023fixed}.

To address this issue, we establish a new relationship between adjacent states and derivatives, which can be explicitly computed in both directions. Subsequently, we formulate both the sampler and its inversion based on this singular linear relationship to achieve exact inversion. This is the fundamental concept of BELM samplers.

\paragraph{Bidirectional Explicit Linear Multi-step (BELM) samplers}
In an attempt to establish a linear relationship between $\vx_{i}$, $\vx_{i-1}$, $\vepsilon_\theta(\vx_{i}, i)$, and $\vepsilon_\theta(\vx_{i-1},i-1)$ that can be explicitly computed bidirectionally, we must exclude both $\vepsilon_\theta(\vx_{i}, i)$ and $\vepsilon_\theta(\vx_{i-1},i-1)$. However, this exclusion results in a relationship that lacks sufficient information.
Consequently, it becomes imperative to take more states into account.
This prompts us to explore the concept of the linear multi-step (LM) method \cite[p.111]{butcher2016numerical} as a means to derive a linear relationship between adjacent states and the derivatives of the diffusion IVP.
However, the commonly used noise schedule of DMs would lead to a non-equidistant series of $\{\bsigma_i\},i=1\dots N$. 
So, instead of the classical LM methods with fixed stepsize, we shall consider it in the variable-stepsize-variable-formula (VSVF) manner \cite{crouzeix1984convergence}, which use dynamic multistep formulae w.r.t. different stepsizes. Let $t_0 < t_1 < \dots t_N = t_0 + T$ be a grid in $[t_0,t_0+T]$, $h_i = \bsigma_i - \bsigma_{i-1}, i=N\dots 1, h_0 = \bsigma_0$ and $h=\max h_i$, the k-step VSVF LM methods w.r.t. Eq. \eqref{ivp-diffusion} will calculate $\bvx_{i-1}$ at the points $\bsigma_{i-1}$ with the following difference equation:
\begin{equation} \label{vsvfm-general}
\begin{aligned}
\bvx_{i-1} = \sum_{j=1}^{k} a_{i,j}\cdot \bvx_{i-1+j} +\sum_{j=0}^{k}b_{i,j}\cdot h_{i-1+j}\cdot\bvepsilon_\theta(\bvx_{i-1+j},\bsigma_{i-1+j}),
\end{aligned}
\end{equation}
where the coefficient of updates and stepsizes are all dependent on $i$. 
Throughout this paper, any reference to LM will, by default, imply VSVF LM unless explicitly stated otherwise.
If $b_{i,0}=0$ for all $i$ in Eq. \eqref{vsvfm-general}, the method is called \textbf{explicit}, since the formula can directly compute $\bvx_{i-1}$. 
Clearly, the LM \eqref{vsvfm-general} have a reversed formula which is also a k-step LM as follows (assume $a_{i,k} \neq 0 $),
\begin{equation} \label{vsvfm_reverse}
\begin{aligned}
    & \bvx_{i-1+k} = \frac{1}{a_{i,k}}\cdot \bvx_{i-1} - \sum^{k-1}_{j=1}\frac{a_{i,j}}{a_{i,k}}\cdot \bvx_{i-1+j} +\sum^{k}_{j=0}\frac{b_{i,j}}{a_{i,k}}\cdot h_{i-1+j}\cdot \bvepsilon_\theta(\bvx_{i-1+j},\bsigma_{i-1+j}).
\end{aligned}
\end{equation}
 If the reversed VSVFM is explicit, i.e. $b_{i,k} = 0$ for all $i$, we call the origin LM \eqref{vsvfm-general} to be \textbf{backward explicit}. Now we can define a k-step LM to be \textbf{bidirectional explicit} when it is explicit as well as backward explicit. We call the LM samplers abide by the bidirectional explicit constraint as the Bidirectional Explicit Linear Multi-step \textbf{(BELM)} samplers, which have the general form: 
\begin{equation} \label{belm-k}
\begin{aligned}
\bvx_{i-1} = \sum_{j=1}^{k} a_{i,j}\cdot \bvx_{i-1+j} +\sum_{j=1}^{k-1}b_{i,j}\cdot h_{i-1+j}\cdot\bvepsilon_\theta(\bvx_{i-1+j},\bsigma_{i-1+j}).
\end{aligned}
\end{equation}
We highlight this bidirectional explicit constraint is key to mathematically exact diffusion inversion:
\begin{proposition}\label{exact_inversion} 
    Any BELM method \eqref{belm-k} with $a_{i,k} \neq 0$ has the exact inversion property.
\end{proposition}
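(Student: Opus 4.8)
The plan is to recognize the sampling and inversion updates as two closed-form rearrangements of the single implicit relation \eqref{belm-k}, and to show that the bidirectional explicit constraint $b_{i,0}=b_{i,k}=0$ is exactly what makes both rearrangements explicit, so that one exactly undoes the other in exact arithmetic. First I would pin down the meaning of the exact inversion property: if a sampling run produces the numerical trajectory $\bvx_N,\bvx_{N-1},\dots,\bvx_0$, then seeding the inversion recursion with the same $k$ boundary states and running it upward reproduces this identical trajectory, with no residual error.

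Next I would exhibit the two maps. Because $b_{i,0}=0$, the term $\bvepsilon_\theta(\bvx_{i-1},\bsigma_{i-1})$ is absent from \eqref{belm-k}, so $\bvx_{i-1}$ appears only on the left-hand side and the sampling step is the explicit map
\begin{equation}
\bvx_{i-1}=F_i(\bvx_i,\dots,\bvx_{i-1+k}):=\sum_{j=1}^{k}a_{i,j}\bvx_{i-1+j}+\sum_{j=1}^{k-1}b_{i,j}h_{i-1+j}\bvepsilon_\theta(\bvx_{i-1+j},\bsigma_{i-1+j}).
\end{equation}
Because $a_{i,k}\neq0$ and $b_{i,k}=0$ (so $\bvepsilon_\theta(\bvx_{i-1+k},\bsigma_{i-1+k})$ is likewise absent), solving \eqref{belm-k} for the top state gives the explicit inversion map
\begin{equation}
\bvx_{i-1+k}=G_i(\bvx_{i-1},\dots,\bvx_{i+k-2}):=\frac{1}{a_{i,k}}\Bigl(\bvx_{i-1}-\sum_{j=1}^{k-1}a_{i,j}\bvx_{i-1+j}-\sum_{j=1}^{k-1}b_{i,j}h_{i-1+j}\bvepsilon_\theta(\bvx_{i-1+j},\bsigma_{i-1+j})\Bigr).
\end{equation}
The decisive point is that both $F_i$ and $G_i$ evaluate $\bvepsilon_\theta$ only at the interior states $\bvx_i,\dots,\bvx_{i+k-2}$, which are known in both sweep directions; since $\bvepsilon_\theta$ is a fixed deterministic function, these evaluations coincide, eliminating the evaluation mismatch that causes DDIM inversion to be inexact.

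The core of the argument is then the logical equivalence $\bvx_{i-1}=F_i(\cdots)\iff\bvx_{i-1+k}=G_i(\cdots)$, since both are literal rearrangements of the same equation \eqref{belm-k}; no approximation intervenes precisely because each rearrangement is explicit rather than requiring an iterative (e.g.\ Newton) solve. I would then run an induction over the inversion sweep: assuming the inversion pass has so far reproduced $\bvx_0,\dots,\bvx_{m-1}$ from the sampling trajectory, applying $G_i$ with $i=m-k+1$ uses exactly the matched states $\bvx_{m-k},\dots,\bvx_{m-1}$ and, by the equivalence, returns the sampler's value of $\bvx_m$ exactly. The base case is the shared set of $k$ seed states. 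This closes the induction and yields exact reconstruction of the full trajectory.

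I expect the algebra in the two displayed maps to be entirely routine; the genuine care lies elsewhere. The main obstacle is making the claim precise and bookkeeping-clean: fixing the seeding convention, aligning the step index $i$ between the downward sampling sweep and the upward inversion sweep so that both invoke the same instance of \eqref{belm-k}, and articulating that \emph{exact} means exactness in exact arithmetic---the constraint $b_{i,0}=b_{i,k}=0$ removes the need for an implicit solve, and it is this, together with the identical $\bvepsilon_\theta$ evaluations, that upgrades mere algebraic invertibility into numerically exact inversion.
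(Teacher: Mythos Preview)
Your proposal is correct and follows essentially the same approach as the paper: establish that a single inversion step exactly undoes a single sampling step because both are algebraic rearrangements of the same relation \eqref{belm-k} (the paper calls this ``local reconstruction error is zero''), then extend to the full trajectory by induction. Your write-up is more explicit than the paper's about why the bidirectional explicit constraint is essential---namely, that $\bvepsilon_\theta$ is evaluated only at interior states so the inversion needs no implicit solve---but the underlying argument is the same.
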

As an instance, setting $k=2$ in Eq. \eqref{belm-k} yields the 2-step BELM diffusion sampler:
\begin{equation} \label{belm-2}
\begin{aligned}
\bvx_{i-1} = a_{i,2}\bvx_{i+1} +a_{i,1}\bvx_{i} + b_{i,1} h_i\bvepsilon_\theta(\bvx_i,\bsigma_i).
\end{aligned}
\end{equation}
For detailed information on the 3-step BELM diffusion sampler, the general k-step case, and their optimal design, readers are referred to Appendix \ref{app:belm-3} and \ref{app:belm-k}. 
In the main body of this paper, we will default mean 2-step case unless explicitly stated.

\paragraph{BDIA and EDICT as special case of BELM}
We find that, although developed from heuristic ideas, both BDIA and EDICT are special cases within the BELM framework. That is, their exact inversion property is inherited from the fact that they are fundamentally instances of BELM samplers.
\begin{remark}\label{remark:special_case}
    EDICT \eqref{EDICT} and BDIA \eqref{BDIA} are both special cases within the BELM framework. 
\end{remark}
The detailed mathematical derivation for Remark \ref{remark:special_case} can be found in Appendices \ref{app:bdia_special} and \ref{app:edict_special}.

\begin{table}[t]
\centering
\small
\renewcommand{\arraystretch}{1.2}
\caption{Theoretical properties comparison of different samplers.}
\begin{tabular}{c|cccc}
\Xhline{1pt}
{\multirow{2}{*}{}} & \multicolumn{4}{c}{Theoretical properties}\\
\Xcline{2-5}{0.5pt}
& exact inversion & local error  & zero-stable & global convergence\\
\Xhline{1pt}
DDIM\cite{song2021denoising} & \ding{55} & $\mathcal{O}\left({\alpha_i h_{i}}^2\right)$  & \ding{51} & \ding{51}\\
EDICT\cite{wallace2023edict} & \ding{51}  & $\mathcal{O}\left(\sqrt{\alpha_{i-1}}h_{i}\right)$ & unclear & unclear  \\
BDIA\cite{zhang2023exact} & \ding{51} & $\mathcal{O}\left({\alpha_i(h_{i}+h_{i+1})}^2\right)$  & unclear & unclear \\
\textbf{O-BELM (Ours)} & \ding{51} & $\mathcal{O}\left({\alpha_i(h_{i}+h_{i+1})}^3\right)$ & \ding{51} & \ding{51} \\
\Xhline{1pt}
\end{tabular}
\label{tab:theory}
\vspace{-3mm}
\end{table}

\section{The Optimal-BELM (O-BELM) Sampler}
In this section, we systematically investigate the Local Truncation Error (LTE) within the BELM framework and show that the existing heuristic designs of exact inversion samplers yield sub-optimal LTE.
Consequently, we introduce Optimal-BELM (O-BELM), which utilizes a more refined dynamic formula developed through the LTE minimization approach.
Additional analysis is conducted to substantiate the theoretical stability and global convergence property of O-BELM.

\subsection{Analysis on Local Truncation Error}
The Local Truncation Error (LTE) quantifies the error introduced in a step update. Specifically, it computes the difference between the numerical solution and its underlying true solution, assuming perfect knowledge of the true solution at the previous states. 
\begin{definition}
    The LTE of BELM \eqref{belm-2} on $\bvx_i$ at each step $i$ is defined as :
\begin{equation} \label{belm_lte_def}
\begin{aligned}
\tau_i = \bvx(t_{i-1}) - a_{i,2}\bvx_{i+1} -a_{i,1}\bvx_{i} - b_{i,1} h_i\bvepsilon_\theta(\bvx_i,\bsigma_i).
\end{aligned}
\end{equation}
\end{definition}
Under Assumption \ref{asump:smooth} (details in Appendix \ref{app:con_assump}), we can utilize the Taylor expansion to investigate the LTE of BELM \eqref{belm-2} as follows:
\begin{proposition}\label{prop:local_error_general_belm2}
Under Assumption \ref{asump:smooth}, the LTE of the BELM \eqref{belm-2} gives general form as follows:
\begin{equation} \label{belm_lte_detail}
    \begin{aligned}
    \tau_i  =&c_{i,1}\bvx(t_{i-1})
    +c_{i,2}\bvepsilon_\theta\left(\bvx(t_{i-1}),\bsigma_{i-1}\right)
    +c_{i,3}\nabla_{\bsigma_{i-1}}\bvepsilon_\theta\left(\bvx(t_{i-1}),\bsigma_{i-1}\right)
    +\mathcal{O}\left({(h_{i}+h_{i+1})}^3\right),
    \end{aligned}
    \end{equation} 
    where $c_{i,1} = 1-a_{i,1}-a_{i,2}$, $c_{i,2}=- a_{i,1} h_i - a_{i,2}\left({h_{i}+h_{i+1}}\right) -b_{i,1} h_i$, and $c_{i,3} = -\frac{a_{i,1}}{2} h_i^2 -\frac{a_{i,2}}{2}(h_i+h_{i+1})^2 -b_{i,1} h_i^2$.
\end{proposition}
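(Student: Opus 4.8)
The plan is to reframe the LTE entirely in terms of the exact trajectory $\bvx(\cdot)$ viewed as a function of the rescaled noise level $\bsigma$, and then Taylor-expand every term of the defining expression \eqref{belm_lte_def} about the base point $\bsigma_{i-1}$. Because the LTE presupposes exact knowledge of the previous states, I would first replace $\bvx_{i+1}$, $\bvx_{i}$ and $\bvepsilon_\theta(\bvx_i,\bsigma_i)$ in \eqref{belm_lte_def} by their exact counterparts $\bvx(t_{i+1})$, $\bvx(t_{i})$ and $\bvepsilon_\theta(\bvx(t_i),\bsigma_i)$. Regarding $\bvx$ as a function of $\bsigma$, the IVP \eqref{ivp-diffusion} reads $\bvx'(\bsigma)=\bvepsilon_\theta(\bvx(\bsigma),\bsigma)$, so that $\bvx'(\bsigma_{i-1})=\bvepsilon_\theta(\bvx(t_{i-1}),\bsigma_{i-1})$ and, differentiating once more along the trajectory, $\bvx''(\bsigma_{i-1})=\nabla_{\bsigma_{i-1}}\bvepsilon_\theta(\bvx(t_{i-1}),\bsigma_{i-1})$ (interpreting $\nabla_{\bsigma_{i-1}}$ as the total derivative in $\bsigma$ along the solution). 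These are precisely the two derivative quantities appearing in \eqref{belm_lte_detail}, which tells me the expansion should be organized around $\bvx(t_{i-1})$, $\bvx'(\bsigma_{i-1})$ and $\bvx''(\bsigma_{i-1})$.

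Next I would record the node displacements relative to the base point, namely $\bsigma_i-\bsigma_{i-1}=h_i$ and $\bsigma_{i+1}-\bsigma_{i-1}=h_i+h_{i+1}$, and invoke Assumption \ref{asump:smooth} to guarantee the smoothness needed for bounded third derivatives of $\bvx$ along the trajectory. This licenses the second-order Taylor expansions with explicit remainders
\[
\bvx(t_i) = \bvx(t_{i-1}) + h_i\,\bvx'(\bsigma_{i-1}) + \tfrac{1}{2}h_i^2\,\bvx''(\bsigma_{i-1}) + \mathcal{O}(h_i^3),
\]
\[
\bvx(t_{i+1}) = \bvx(t_{i-1}) + (h_i+h_{i+1})\,\bvx'(\bsigma_{i-1}) + \tfrac{1}{2}(h_i+h_{i+1})^2\,\bvx''(\bsigma_{i-1}) + \mathcal{O}\!\left((h_i+h_{i+1})^3\right),
\]
together with the first-order expansion of the derivative term, $\bvepsilon_\theta(\bvx(t_i),\bsigma_i)=\bvx'(\bsigma_{i-1})+h_i\,\bvx''(\bsigma_{i-1})+\mathcal{O}(h_i^2)$, obtained by differentiating $\bvepsilon_\theta(\bvx(\bsigma),\bsigma)$ along the solution.

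I would then substitute these three expansions into the reframed LTE and collect the coefficients of $\bvx(t_{i-1})$, of $\bvx'(\bsigma_{i-1})=\bvepsilon_\theta$, and of $\bvx''(\bsigma_{i-1})=\nabla_{\bsigma_{i-1}}\bvepsilon_\theta$. The constant level gives $1-a_{i,1}-a_{i,2}=c_{i,1}$; the first-derivative level gives $-a_{i,1}h_i-a_{i,2}(h_i+h_{i+1})-b_{i,1}h_i=c_{i,2}$; and the second-derivative level gives $-\tfrac{a_{i,1}}{2}h_i^2-\tfrac{a_{i,2}}{2}(h_i+h_{i+1})^2-b_{i,1}h_i^2=c_{i,3}$, matching \eqref{belm_lte_detail} term by term.

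The computation itself is routine bookkeeping, so I expect the only delicate point to be the uniform control of the remainder. Specifically, I must argue that every leftover contribution collapses into a single $\mathcal{O}\!\left((h_i+h_{i+1})^3\right)$: this uses $h_i\le h_i+h_{i+1}$ to absorb the $\mathcal{O}(h_i^3)$ remainder from $\bvx(t_i)$ and the $b_{i,1}h_i\cdot\mathcal{O}(h_i^2)$ contribution from the derivative term, and it uses the boundedness of the relevant derivatives of $\bvx$ (from Assumption \ref{asump:smooth}) so that the constants hidden inside the $\mathcal{O}$ symbols remain controlled. I would also flag that the coefficients $a_{i,1},a_{i,2},b_{i,1}$ are treated as $\mathcal{O}(1)$ in $i$, which is what makes the stated remainder order meaningful.
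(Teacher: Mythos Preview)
Your proposal is correct and follows essentially the same approach as the paper's own proof: replace the numerical states in \eqref{belm_lte_def} by their exact counterparts, Taylor-expand $\bvx(t_i)$, $\bvx(t_{i+1})$, and $\bvepsilon_\theta(\bvx(t_i),\bsigma_i)$ about $\bsigma_{i-1}$ using the IVP identity $\bvx'(\bsigma)=\bvepsilon_\theta$, and then collect the coefficients of $\bvx(t_{i-1})$, $\bvepsilon_\theta$, and $\nabla_{\bsigma_{i-1}}\bvepsilon_\theta$. Your discussion of the remainder control is in fact more explicit than the paper's, which simply records the $\mathcal{O}\left((h_i+h_{i+1})^3\right)$ term without further comment.
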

In the task of DMs, our primary concern is the LTE on $\vx_{i-1}$ rather than $\bvx_{i-1}$. We denote the LTE on $\vx_i$ as $\ve_i$. It is clear that $\ve_i = \alpha_{i-1} \tau_i$. We investigate the LTE of existing samplers as follows:
\begin{corollary}\label{prop:local_error_3}
    Under Assumption \ref{asump:smooth}, the LTE $\ve_i$ of DDIM sampler \eqref{DDIM}  is $\mathcal{O}\left({\alpha_{i-1} h_{i}}^2\right)$; The LTE $\ve_i$ of BDIA sampler \eqref{BDIA}  is $\mathcal{O}\left({\alpha_{i-1}(h_{i}+h_{i+1})}^2\right)$ for any fixed $\gamma \in [0,1]$; The LTE $\ve_i$ of EDICT sampler \eqref{EDICT} is $\mathcal{O}\left(\sqrt{\alpha_{i-1}}h_{i}\right)$ for any constant $p\in (0,1)$.
\end{corollary}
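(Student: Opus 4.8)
The plan is to reduce everything to the master formula of Proposition~\ref{prop:local_error_general_belm2} together with the identity $\ve_i = \alpha_{i-1}\tau_i$. Since that proposition is obtained by a Taylor expansion that only uses the \emph{values} of the coefficients $a_{i,1},a_{i,2},b_{i,1}$ (and is agnostic to whether any of them vanish), the corollary becomes a bookkeeping exercise. For each sampler I would (i) rewrite it in the $\bvx$-coordinates of the IVP~\eqref{ivp-diffusion} so that it takes the BELM-2 form~\eqref{belm-2} and read off $a_{i,1},a_{i,2},b_{i,1}$; (ii) substitute into $c_{i,1}=1-a_{i,1}-a_{i,2}$, $c_{i,2}=-a_{i,1}h_i-a_{i,2}(h_i+h_{i+1})-b_{i,1}h_i$, and $c_{i,3}=-\tfrac{a_{i,1}}{2}h_i^2-\tfrac{a_{i,2}}{2}(h_i+h_{i+1})^2-b_{i,1}h_i^2$; and (iii) locate the lowest-order surviving term in~\eqref{belm_lte_detail}, recalling that $c_{i,1}$ is $\mathcal{O}(1)$, $c_{i,2}$ is $\mathcal{O}(h)$ and $c_{i,3}$ is $\mathcal{O}(h^2)$. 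Multiplying by $\alpha_{i-1}$ then gives $\ve_i$.

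For DDIM, applying explicit Euler to $\rd\bvx=\bvepsilon_\theta\,\rd\bsigma$ gives $\bvx_{i-1}=\bvx_i-h_i\bvepsilon_\theta(\bvx_i,\bsigma_i)$, i.e. the degenerate BELM-2 with $a_{i,2}=0$, $a_{i,1}=1$, $b_{i,1}=-1$. Then $c_{i,1}=0$, $c_{i,2}=0$, and $c_{i,3}=\tfrac12 h_i^2\neq0$; since $a_{i,2}=0$ removes every $h_{i+1}$-contribution and the remainder collapses to $\mathcal{O}(h_i^3)$, we get $\tau_i=\mathcal{O}(h_i^2)$ and hence $\ve_i=\mathcal{O}(\alpha_{i-1}h_i^2)$. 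For BDIA I would first convert the increments of~\eqref{BDIA} into $\bvx$-coordinates, where $\Delta(i\to i-1\mid\cdot)=-h_i\bvepsilon_\theta(\bvx_i,\bsigma_i)$ and $\Delta(i\to i+1\mid\cdot)=h_{i+1}\bvepsilon_\theta(\bvx_i,\bsigma_i)$; collecting terms turns~\eqref{BDIA} into~\eqref{belm-2} with $a_{i,2}=\gamma$, $a_{i,1}=1-\gamma$, and $b_{i,1}h_i=-(h_i+\gamma h_{i+1})$ (this is exactly the representation behind Remark~\ref{remark:special_case} and App.~\ref{app:bdia_special}). A short computation then yields $c_{i,1}=0$ and $c_{i,2}=0$ for every fixed $\gamma\in[0,1]$, while $c_{i,3}=\tfrac12\bigl(h_i^2-\gamma h_{i+1}^2\bigr)$, which is $\mathcal{O}((h_i+h_{i+1})^2)$ and does not vanish for generic stepsizes; hence $\ve_i=\mathcal{O}(\alpha_{i-1}(h_i+h_{i+1})^2)$.

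EDICT is the genuinely hard case, and I expect it to be the main obstacle. Unlike DDIM and BDIA, EDICT~\eqref{EDICT} runs on a \emph{coupled} pair $(\vx,\vy)$ with cross-evaluations of $\vepsilon_\theta$ and a mixing step with coefficient $p$, so the single-variable LTE definition~\eqref{belm_lte_def} does not apply verbatim and must be read through the single-variable BELM representation of EDICT (App.~\ref{app:edict_special}). The plan is to initialise one step from exact data, run the four coupled updates, and bound $\bvx_{i-1}-\bvx(t_{i-1})$ directly using Assumption~\ref{asump:epsilon} (Lipschitzness of $\vepsilon_\theta$) together with the Taylor machinery behind Assumption~\ref{asump:smooth}. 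The crux is that for $p\neq1$ the auxiliary state carries a non-decaying offset $\vy-\vx$ whose leakage through the mixing produces an error that is only \emph{first order} in $h_i$ — in contrast with the second-order cancellation enjoyed by DDIM and BDIA — so here $c_{i,2}\neq0$ at leading order. The delicate point is pinning down the exact prefactor: tracking the $\alpha$-schedule dependence of EDICT's effective BELM coefficients (recall $b_i=\sigma_{i-1}-a_i\sigma_i=-\alpha_{i-1}h_i$) through the coupling and the conversion $\ve_i=\alpha_{i-1}\tau_i$ is what should produce the anomalous $\mathcal{O}(\sqrt{\alpha_{i-1}}\,h_i)$ scaling rather than a plain $\mathcal{O}(\alpha_{i-1}h_i)$. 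Establishing this $\sqrt{\alpha_{i-1}}$ factor cleanly, while keeping the bound uniform in the constant $p\in(0,1)$, is the step I would budget the most care for.
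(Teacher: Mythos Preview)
Your treatment of DDIM is correct and mirrors the paper. For BDIA there is a coefficient slip: passing to $\bvx$-coordinates does not strip the $\alpha$-ratios the way you assume, and the actual BELM coefficients (Appendix~\ref{app:bdia_special}) are $a_{i,2}=\gamma\,\alpha_{i+1}/\alpha_{i-1}$, $a_{i,1}=1-\gamma\,\alpha_{i+1}/\alpha_{i-1}$, $b_{i,1}=-1-\gamma\,(\alpha_{i+1}/\alpha_{i-1})(h_{i+1}/h_i)$, not the simplified values you wrote. One still gets $c_{i,1}=c_{i,2}=0$ and $c_{i,3}=\tfrac12\bigl(h_i^2-\gamma\,\tfrac{\alpha_{i+1}}{\alpha_{i-1}}h_{i+1}^2\bigr)$, generically nonzero and of second order, so your conclusion survives; but the intermediate expressions need correcting.

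The real gap is in EDICT, where you misidentify the mechanism. You attribute the first-order error to $c_{i,2}\neq 0$ via leakage of the auxiliary offset $\vy-\vx$; the paper's argument rests on a different point your proposal misses. In the BELM representation of EDICT (Appendix~\ref{app:edict_special}) one original step splits into four sub-steps $j\in\{4l,4l-1,4l-2,4l-3\}$ with a \emph{sub-step} alpha schedule, in particular $\alpha_{4l-2}=\sqrt{\alpha_i\alpha_{i-1}}$. At $j=4l$ and $j=4l-2$ the coefficients give $c_{j,1}=1-a_{j,1}-a_{j,2}\neq 0$; for instance at $j=4l-2$ one has $a_{j,1}=0$, $a_{j,2}=\sqrt{\alpha_{i-1}/\alpha_i}$, hence $c_{j,1}=(\sqrt{\alpha_i}-\sqrt{\alpha_{i-1}})/\sqrt{\alpha_i}$. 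This is a failure at the \emph{zeroth} order of the Taylor expansion, not the first. The reason it is nevertheless $\mathcal{O}(h_i)$ rather than $\mathcal{O}(1)$ is a separate ingredient (Lemma~\ref{lemma:edict_order}): writing $\sqrt{\alpha}=(\bsigma^2+1)^{-1/4}$ and differentiating shows $\sqrt{\alpha_i}-\sqrt{\alpha_{i-1}}=\mathcal{O}(h_i)$. The anomalous $\sqrt{\alpha_{i-1}}$ prefactor then drops out directly from multiplying $\tau_j$ by the sub-step alpha $\sqrt{\alpha_i\alpha_{i-1}}$ rather than by $\alpha_{i-1}$. Without identifying the failing $c_{j,1}$-term and the $\sqrt{\alpha}$-increment lemma, your plan of ``tracking the $\alpha$-schedule dependence through the coupling'' has no concrete route to either the order or the prefactor.
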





\subsection{Optimal BELM Sampler via LTE Minimization}
We then demonstrate that, through a meticulous design of formulae, we can achieve a higher order of LTE within the BELM framework compared to existing sub-optimal instances. Specifically, we utilize an LTE minimization approach, inspired by the design of renowned LM methods such as the Adams–Bashforth methods \cite{bashforth1883attempt} or the Adams–Moulton methods \cite{moulton1926new,milne1926numerical}.


\begin{proposition}\label{prop:local_belm_2}
    Under Assumption \ref{asump:smooth}, the LTE $\tau_i$ of BELM diffusion sampler \eqref{belm-2}
    can be accurate up to $\mathcal{O}\left({(h_{i}+h_{i+1})}^3\right)$ when formulae are designed as $a_{i,1} = \frac{h_{i+1}^2 - h_i^2}{h_{i+1}^2}$,$a_{i,2}=\frac{h_i^2}{h_{i+1}^2}$,$b_{i,1}=- \frac{h_i+h_{i+1}}{h_{i+1}} $.
\end{proposition}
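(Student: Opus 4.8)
The plan is to build directly on Proposition~\ref{prop:local_error_general_belm2}, which already expands $\tau_i$ as the linear combination $c_{i,1}\bvx(t_{i-1}) + c_{i,2}\bvepsilon_\theta(\bvx(t_{i-1}),\bsigma_{i-1}) + c_{i,3}\nabla_{\bsigma_{i-1}}\bvepsilon_\theta(\bvx(t_{i-1}),\bsigma_{i-1})$ plus a remainder that is already $\mathcal{O}((h_i+h_{i+1})^3)$. Since the three leading quantities are, for a generic diffusion IVP, linearly independent ``directions,'' the only way to guarantee $\tau_i=\mathcal{O}((h_i+h_{i+1})^3)$ uniformly is to annihilate each leading coefficient simultaneously, i.e.\ to impose the order conditions $c_{i,1}=c_{i,2}=c_{i,3}=0$. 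This is exactly the Taylor-matching philosophy behind Adams-type methods that the paper invokes, specialized to the VSVF setting.

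First I would write out the three conditions explicitly using the formulas for $c_{i,1}$, $c_{i,2}$, $c_{i,3}$ supplied by Proposition~\ref{prop:local_error_general_belm2}. The condition $c_{i,1}=0$ is the consistency constraint $a_{i,1}+a_{i,2}=1$ and involves only the two $a$-coefficients. The conditions $c_{i,2}=0$ and $c_{i,3}=0$ supply two further linear equations coupling $a_{i,1}$, $a_{i,2}$, and $b_{i,1}$. Thus I obtain a $3\times3$ linear system in the three unknowns; after eliminating $a_{i,1}=1-a_{i,2}$ via the first equation, it collapses to a $2\times2$ system in $a_{i,2}$ and $b_{i,1}$ whose coefficients are polynomials in $h_i$ and $h_{i+1}$.

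Next I would solve that reduced system. Subtracting the two remaining equations to cancel $b_{i,1}$ isolates $a_{i,2}=h_i^2/h_{i+1}^2$; back-substitution then yields $a_{i,1}=(h_{i+1}^2-h_i^2)/h_{i+1}^2$ and $b_{i,1}=-(h_i+h_{i+1})/h_{i+1}$, matching the claimed formulas. To close the argument I would verify directly that these three values annihilate all of $c_{i,1}$, $c_{i,2}$, $c_{i,3}$; the only nontrivial check is that the bracketed quadratic in $c_{i,3}$ collapses, namely that $a_{i,1}h_i^2+a_{i,2}(h_i+h_{i+1})^2 = -2b_{i,1}h_i^2$, which confirms the third-order cancellation and hence $\tau_i=\mathcal{O}((h_i+h_{i+1})^3)$.

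The main obstacle is purely the bookkeeping of setting up and solving the linear system and confirming the cancellations; there is no analytic difficulty once Proposition~\ref{prop:local_error_general_belm2} is available, since the problem reduces to matching order conditions. The one conceptual point worth stating carefully is the linear-independence justification for demanding that each coefficient vanish separately, rather than merely some combination of them.
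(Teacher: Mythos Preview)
Your proposal is correct and follows essentially the same approach as the paper: both invoke Proposition~\ref{prop:local_error_general_belm2}, impose the three order conditions $c_{i,1}=c_{i,2}=c_{i,3}=0$, and solve the resulting $3\times 3$ linear system in $(a_{i,1},a_{i,2},b_{i,1})$ to obtain the stated coefficients. The paper presents the system in matrix form and appeals to a ``straightforward calculation,'' whereas you spell out the elimination and back-substitution; your explicit remark on the linear independence of the Taylor basis is a welcome addition but does not constitute a different route.
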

When this is satisfied, obviously, the LTE $\ve_i$ on $\vx_{i-1}$ is $\mathcal{O}\left({\alpha_{i-1}(h_{i}+h_{i+1})}^3\right)$.
Substituting the designed formulas into \eqref{belm-2}, we derive the Optimal-BELM (O-BELM) sampler:
\begin{equation} \label{belm}
\begin{aligned}
\vx_{i-1} = \frac{h_i^2}{h_{i+1}^2}\frac{\alpha_{i-1}}{\alpha_{i+1}}\vx_{i+1} +\frac{h_{i+1}^2 - h_i^2}{h_{i+1}^2}\frac{\alpha_{i-1}}{\alpha_{i}}\vx_{i} - \frac{h_i(h_i+h_{i+1})}{h_{i+1}}\alpha_{i-1}\vepsilon_\theta(\vx_i,i).
\end{aligned}
\end{equation}
The inversion of O-BELM diffusion sampler \eqref{belm} writes:
\begin{equation} \label{belm_inversion}
\begin{aligned}
\vx_{i+1}= \frac{h_{i+1}^2}{h_i^2}\frac{\alpha_{i+1}}{\alpha_{i-1}}\vx_{i-1} + \frac{h_i^2-h_{i+1}^2}{h_i^2}\frac{\alpha_{i+1}}{\alpha_{i}}\vx_{i}+\frac{h_{i+1}(h_i+h_{i+1})}{h_i}\alpha_{i+1} \vepsilon_\theta(\vx_i,i).
\end{aligned}
\end{equation}



\subsection{Further Theoretical Analysis on O-BELM}
Here, we further demonstrate that the O-BELM not only surpasses in terms of local accuracy but also excels in \textbf{stability} and \textbf{global convergence} properties.

As is clear from \eqref{belm-2}, we need starting values before we can apply a method to the diffusion IVP. Of these, the initial one is given by the initial condition, but the others, have to be computed by other means, say, by using DDIM. At any rate, the starting values will contain numerical errors and it is crucial to ensure that perturbations of the initial values do not lead to an error explosion in the subsequent steps. 
This concept is encapsulated in numerical analysis as zero-stability.

\begin{definition}
The LM \eqref{vsvfm-general} is said to be \textbf{zero-stable} if there exists a constant $K$ such that, for any two sequences $\{\bvx_i\}$ and $\{\bvz_i\}$ that have been generated by the same formulae but different starting values $\bvx_N,\bvx_{N-1},\dots,\bvx_{N-k+1}$ and $\bvz_N,\bvz_{N-1},\dots,\bvz_{N-k+1}$, respectively, we have
\begin{equation} \label{stable_def}
\begin{aligned}
\norm{\bvx_i - \bvz_i} \leq K \max \left\{ \norm{\bvx_{N} - \bvz_{N}},\norm{\bvx_{N-1} - \bvz_{N-1}},\dots,\norm{\bvx_{N-k+1} - \bvz_{N-k+1}} \right\},
\end{aligned}
\end{equation}
for all $i$, and as $h$ tends to $0$.
\end{definition}

We also want to ensure that a method will gradually converge to the underlying truth as the stepsizes decrease, a concept that aligns with the global convergence property.
\begin{definition}
    The LM \eqref{vsvfm-general} is \textbf{globally convergent} if for every solution $\bvx(t)$ of \eqref{ivp-diffusion}
\begin{equation} \label{convergent_def}
\begin{aligned}
\lim_{h\to 0}\max_{0 \leq i\leq N} \norm{\bvx_i - \bvx(t_i)} = 0,
\end{aligned}
\end{equation}
when initial error $\sum_{j=N}^{N-1+k}\left( \norm{\bvx_j - \bvx(t_j)}+h_i \norm{\bvepsilon_\theta(\bvx_{j},\bsigma_{j}) - \bvepsilon_\theta(\bvx(t_{j}),\bsigma_{j})} \right)$ tends to zero.
\end{definition}
We affirm that our O-BELM sampler possesses the nice zero-stable property as well as the global convergence property.
\begin{proposition}\label{prop:convergence_stable_belm}
    The O-BELM sampler \eqref{belm}  is (a) \textbf{zero-stable} and (b) \textbf{globally convergent}.
\end{proposition}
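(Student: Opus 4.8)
The plan is to treat the O-BELM recursion \eqref{belm} (equivalently \eqref{belm-2} with the optimal coefficients of Proposition \ref{prop:local_belm_2}) as a two-step VSVF linear multi-step scheme and to exploit the special structure induced by those coefficients, namely that the consistency identity $a_{i,1}+a_{i,2}=1$ (i.e. $c_{i,1}=0$ in Proposition \ref{prop:local_error_general_belm2}) holds at every step, together with $a_{i,2}=h_i^2/h_{i+1}^2$. Throughout I would assume the standard VSVF mesh regularity that the stepsize ratios $h_i/h_{i+1}$ stay uniformly bounded as $h\to 0$ (as in \cite{crouzeix1984convergence}), which the smooth noise schedules used in DMs satisfy.

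For part (a), I set $\mathbf{E}_i := \bvx_i - \bvz_i$ for two sequences sharing the formula but started differently. Subtracting the two recursions and invoking Assumption \ref{asump:epsilon} (Lipschitz $\bvepsilon_\theta$, with induced constant $\bar L$ in the $\bvx$ coordinate) gives
\[
\mathbf{E}_{i-1} = a_{i,1}\mathbf{E}_i + a_{i,2}\mathbf{E}_{i+1} + b_{i,1}h_i\boldsymbol{\Delta}_i,\qquad \|\boldsymbol{\Delta}_i\|\le \bar L\,\|\mathbf{E}_i\|.
\]
The crucial move is the change of variables to the difference $\mathbf{D}_i:=\mathbf{E}_i-\mathbf{E}_{i+1}$: using $a_{i,1}-1=-a_{i,2}$ the homogeneous part collapses to the scalar-coefficient telescoping recursion
\[
\mathbf{D}_{i-1} = -\frac{h_i^2}{h_{i+1}^2}\,\mathbf{D}_i + b_{i,1}h_i\boldsymbol{\Delta}_i,
\]
so in the homogeneous case $\mathbf{D}_{i-1}=(-1)^{N-i}(h_i^2/h_N^2)\mathbf{D}_{N-1}$, which is $\le C\|\mathbf{D}_{N-1}\|$ under the bounded-ratio assumption; the parasitic mode telescopes away instead of amplifying. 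The consistent mode is recovered from $\mathbf{E}_{i-1}=\mathbf{E}_i+\mathbf{D}_{i-1}$; summing the collinear $\mathbf{D}_j$ and using that the signed magnitudes $h_j^2/h_N^2$ have $\mathcal{O}(1)$ partial sums (alternating-sum cancellation under mesh smoothness) bounds the homogeneous evolution of $\mathbf{E}_i$ by a constant multiple of $\max\{\|\mathbf{E}_N\|,\|\mathbf{E}_{N-1}\|\}$. Folding in the Lipschitz perturbation $b_{i,1}h_i\boldsymbol{\Delta}_i=\mathcal{O}(h\|\mathbf{E}_i\|)$ via a discrete Gr\"onwall inequality then yields the stability constant $K$ of \eqref{stable_def} uniformly as $h\to 0$.

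For part (b), I let $\mathbf{E}_i:=\bvx_i-\bvx(t_i)$ be the global error and note that the exact solution obeys the scheme up to the local error, $\bvx(t_{i-1})=a_{i,1}\bvx(t_i)+a_{i,2}\bvx(t_{i+1})+b_{i,1}h_i\bvepsilon_\theta(\bvx(t_i),\bsigma_i)+\tau_i$ with $\|\tau_i\|=\mathcal{O}((h_i+h_{i+1})^3)$ by Proposition \ref{prop:local_belm_2}. Subtracting gives the same difference recursion as in (a) but with forcing $r_i=b_{i,1}h_i[\bvepsilon_\theta(\bvx_i,\bsigma_i)-\bvepsilon_\theta(\bvx(t_i),\bsigma_i)]-\tau_i$; solving the telescoping recursion for $\mathbf{D}$ and summing back to $\mathbf{E}$ writes the global error as the uniformly bounded propagation of the starting error plus the accumulated forcing. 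The truncation part sums to $\sum_i\|\tau_i\|=\mathcal{O}(h^2)$ (one order lost over the $\mathcal{O}(1/h)$ steps), the Lipschitz part is absorbed by the same Gr\"onwall argument, and the starting error vanishes by hypothesis; hence $\max_i\|\mathbf{E}_i\|\to 0$, which is \eqref{convergent_def}. I expect the main obstacle to lie in part (a) rather than (b): in the equal-stepsize limit the scheme reduces to the weakly stable explicit midpoint/leapfrog rule, and its companion matrix carries a parasitic eigenvalue $-h_i^2/h_{i+1}^2$ whose modulus may exceed one at a single step, so a naive root-condition reading fails. The real content is the uniform boundedness of the companion-matrix products; the telescoping identity $\mathbf{D}_{i-1}=-(h_i^2/h_{i+1}^2)\mathbf{D}_i$ is exactly what rescues it, but controlling the accumulation of the neutral (eigenvalue-one) mode over $\mathcal{O}(1/h)$ steps through alternating-sum cancellation and the mesh-regularity hypothesis is the delicate step to make rigorous in the VSVF setting.
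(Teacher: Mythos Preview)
Your plan differs from the paper's and has a real gap in part (a). The paper does not work under bounded stepsize ratios alone; it adds a schedule assumption (Assumption~\ref{asump:convex}) whose operative consequence is $\eta:=\max_i h_i^2/h_{i+1}^2<1$, and then applies the Crouzeix similarity criterion (Theorem~\ref{theorem:stability}): for the $2\times 2$ companion matrices $R_i$ of \eqref{belm-2} it exhibits a single invertible $H$ with $\|H^{-1}R_iH\|_1\le 1$ for every $i$, giving uniform boundedness of all companion-matrix products and hence zero-stability. Your telescoping identity $\mathbf{D}_{i-1}=(-1)^{N-i}(h_i^2/h_N^2)\,\mathbf{D}_{N-1}$ is correct, but the inference ``$\le C\|\mathbf{D}_{N-1}\|$ under the bounded-ratio assumption'' fails: bounded adjacent ratios $h_i/h_{i+1}\in[C_0^{-1},C_0]$ give only $h_i/h_N\in[C_0^{-(N-i)},C_0^{N-i}]$, which is not uniformly bounded in $N$. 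The telescoping collapses the product of parasitic roots to a single quotient of squared stepsizes, but that quotient can still grow geometrically unless you impose the one-sided condition $|h_i|\le|h_{i+1}|$, i.e.\ $\eta\le 1$, which is exactly what the paper's extra hypothesis supplies. Under that hypothesis your alternating-sum bound on $\mathbf{E}_i$ would also go through (the sequence $h_j^2/h_N^2$ is then monotone and $\le 1$, so Abel summation controls the partial sums); without it, neither your argument nor the $H$-similarity works, and you rightly flag the equal-step leapfrog limit as the borderline case. The fix is therefore not generic ``mesh smoothness'' but the specific monotonicity $\eta<1$ that the paper assumes.

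For part (b) the two routes agree in substance: the paper records that O-BELM has bounded coefficients and accuracy order $\ge 1$, hence is consistent, and then quotes the classical ``zero-stable $+$ consistent $\Rightarrow$ convergent'' theorem (Theorem~\ref{theorem:convergent}). Your direct Gr\"onwall argument is just the standard proof of that implication written out; it is fine, but it requires the uniform homogeneous-propagator bound from every intermediate step $j$ to $i$, so it inherits the dependence on whatever hypothesis resolves (a).
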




\begin{figure*}[!t]
\centering
\includegraphics[width=0.95\textwidth]{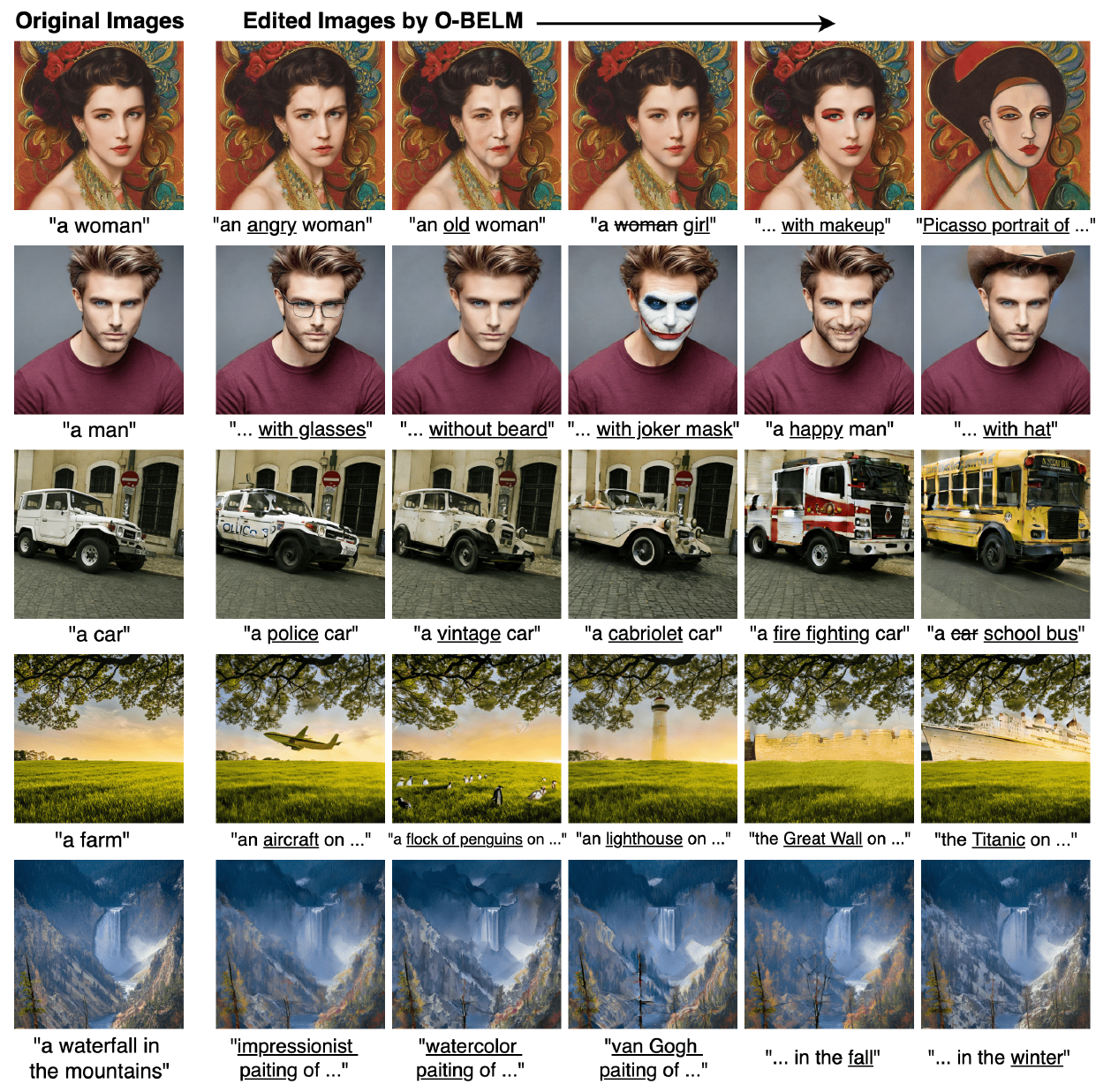} 
\caption{Examples of editing results using O-BELM on both synthesized and real images. We showcase the diverse editing capabilities of O-BELM across a range of tasks, including human face modifications, content change, entity addition and global style transfer. The exact inversion property of O-BELM enables large-scale image alterations while preserving auxiliary details (background in first row, hairstyle in second row, traffic sign in third row, tree and crop
 in fourth row, composition in last row). Its stability and accuracy further ensure the high quality of the resulting images.}
\label{fig:edit_show}
\vspace{-5mm}
\end{figure*}

\begin{figure*}[b]
\vspace{-5mm}
\centering
\includegraphics[width=0.85\textwidth]{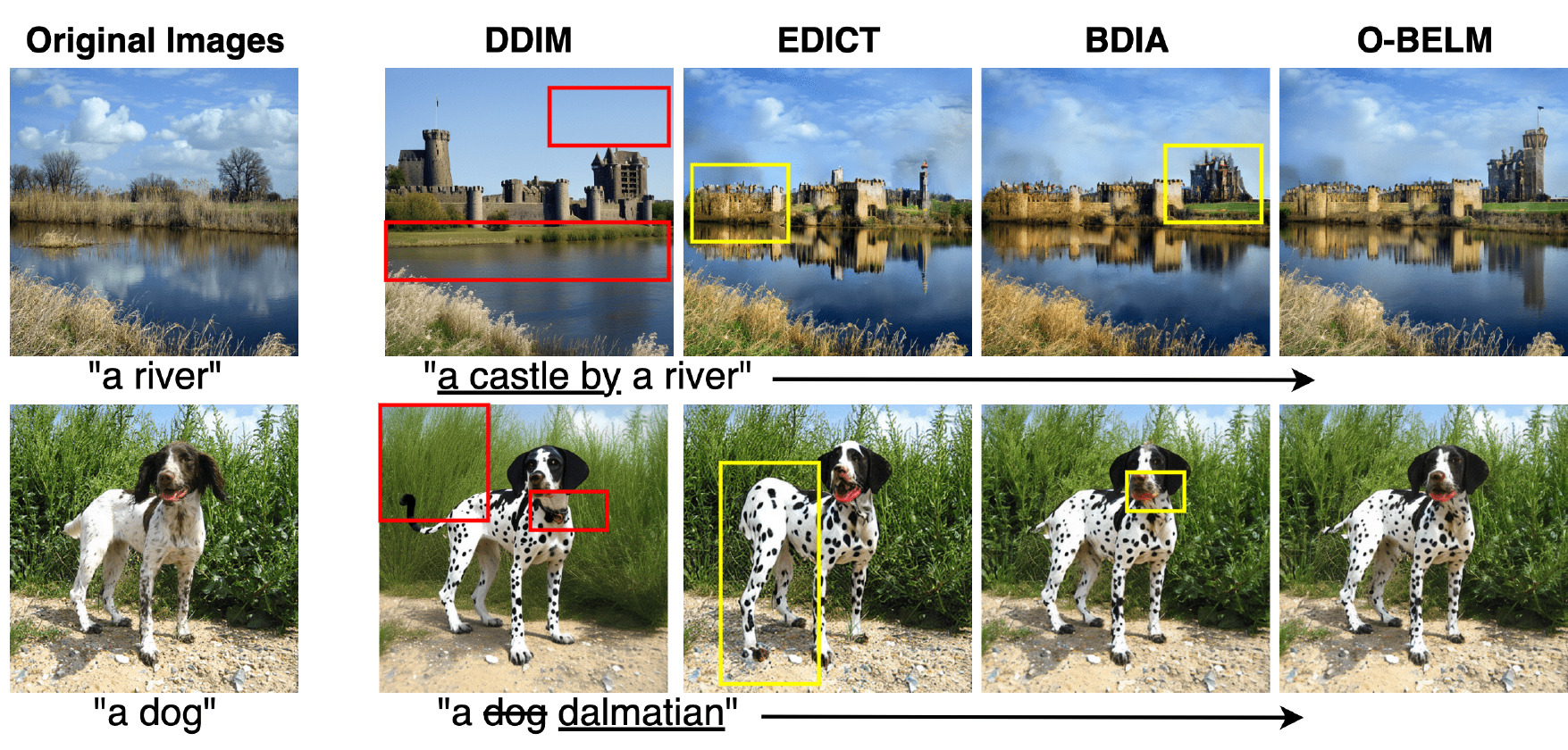} 
\caption{
Comparison of editing results from different samplers under 50 steps.
DDIM leads to inconsistencies (highlighted by the \textcolor{red}{red rectangle}), and the EDICT and BDIA samplers may introduce unrealistically low-quality sections (highlighted by the \textcolor{orange}{yellow rectangle}). Our O-BELM sampler ensures consistency and demonstrates high-quality results.}
\label{fig:edit_compare}
\vspace{-5mm}
\end{figure*}

\section{Experiments}
\vspace{-2mm}
In this section, we conduct experiments to verify that O-BELM achieves the exact inversion property while maintaining high-quality sampling ability. 
We further demonstrate the extensive potential of applying the O-BELM sampler in various applications, such as image editing and image interpolation (deferred to Appendix \ref{app:exp_inter}). 
All the pre-trained models utilized are listed in Appendix \ref{app:exp_models}.
\vspace{-3mm}
\subsection{Image Reconstruction}
\vspace{-1mm}
We adopt the experimental setting from \cite{wallace2023edict} to demonstrate the exact diffusion inversion property of O-BELM using 10k images in the MS-COCO-2014 validation set \cite{lin2014microsoft}.
Given an image, inverted latents are calculated and used to reconstruct the image using SD-1.5. Mean-square error (MSE) is calculated on pixels normalized to $[-1,1]$ and averaged across 10k images. The autoencoder (AE) reconstruction error in the SD pipeline serves as a lower bound. 
From Table \ref{tab:recon_mse}, we observe that, regardless of the stepsize, O-BELM and its sub-optimal siblings BDIA and EDICT consistently achieve the lowest MSE, signifying their exact inversion at the latent level. In contrast, DDIM tends to suffer from inconsistency. More visual reconstruction examples can be found in Appendix \ref{app:exp_recon}.


\begin{table}[t]
\centering
\small
\renewcommand{\arraystretch}{1.0}
\caption{Comparison of different samplers on MSE reconstruction loss on COCO-14.}
\begin{tabular}{c|ccccc}
\Xhline{1pt}
{\multirow{2}{*}{}} & \multicolumn{4}{c}{MSE loss of reconstruction}\\
\Xcline{2-6}{0.5pt}
& DDIM & AE & EDICT & BDIA & \textbf{O-BELM}\\
\Xhline{1pt}
10 steps & 0.026 & 0.004 & 0.004 & 0.004 & 0.004 \\
20 steps & 0.016 & 0.004 & 0.004 & 0.004 & 0.004  \\
50 steps & 0.008 & 0.004 & 0.004 & 0.004 & 0.004  \\
100 steps & 0.007 & 0.004 & 0.004 & 0.004 & 0.004  \\
\Xhline{1pt}
\end{tabular}
\label{tab:recon_mse}
\vspace{-5mm}
\end{table}

\subsection{Unconditional Image Generation}
\vspace{-2mm}
In this section, we conduct an unconditional image generation task to validate the high-quality sampling ability of O-BELM. Utilizing a pre-trained model, we generate 50k artificial images over a specific number of steps and compute the corresponding Fréchet Inception Distance (FID) score with the real data.
Specifically, Fréchet Inception Distance (FID) \cite{heusel2017gans} calculates the Fréchet distance between the real data and the generated data. A lower FID implies more realistic generated data.
Table \ref{tab:unconditional_fid} summarizes the computed FID scores for the CIFAR10 and CelebA-HQ datasets. It is evident that O-BELM consistently outperforms other exact inversion samplers in terms of sampling quality. This experimental result corroborates the error analysis presented in Table \ref{tab:theory}.
The parameters $\gamma$ for BDIA and $p$ for EDICT are determined through grid search. Details can be found in Appendix \ref{app:exp_generation}.

\begin{table}[t]
\centering
\small
\renewcommand{\arraystretch}{1.0}
\caption{Comparison of different samplers on FID score( $\downarrow$) for the task of unconditional generation.}
\begin{tabular}{c|cccc||cccc}
\Xhline{1pt}
{\multirow{2}{*}{}} & \multicolumn{4}{c||}{CIFAR10 (32 $\cross$ 32)} & \multicolumn{4}{c}{CelebA-HQ (256 $\cross$ 256)}\\
\Xcline{2-9}{0.5pt}
& DDIM & EDICT & BDIA & \textbf{O-BELM} & DDIM & EDICT & BDIA & \textbf{O-BELM}\\
\Xhline{1pt}
10 steps & 17.45 & 87.11 & 12.27 & \textbf{10.98} & 27.13 & 57.82 & 27.41 & \textbf{19.13}\\
20 steps & 10.60 & 38.84 & \textbf{7.27} & \textbf{7.17} & 16.33 & 39.24 & 16.18 & \textbf{11.54} \\
50 steps & 6.96 & 10.24 & 5.77 & \textbf{5.24} & \textbf{10.77} & 16.72 & \textbf{10.65} & \textbf{10.41} \\
100 steps & 5.72 & 5.31 & 5.07 & \textbf{4.18} & \textbf{10.19} & 12.24 & \textbf{10.30} & \textbf{10.17} \\
\Xhline{1pt}
\end{tabular}
\label{tab:unconditional_fid}
\vspace{-5mm}
\end{table}
\vspace{-2mm}
\subsection{Conditional Image Generation}
\vspace{-2mm}
We further evaluate these samplers under conditional image generation tasks. We employ the StableDiffusion V1.5 and V2-base models to generate 30k images of resolution 512$\times$512, based on text prompts from the COCO-14 validation set. All methods utilize the same seed and the same text prompts set. As evident from Table \ref{tab:sd_fid}, O-BELM also exhibits superior sampling quality in the context of conditional image generation. We ensure a fair comparison by selecting appropriate guidance weights and hyperparameters, details of which can be found in Appendix \ref{app:exp_generation}.

\begin{table}[t]
\centering
\small
\renewcommand{\arraystretch}{1.0}
\caption{Comparison of different samplers on FID score( $\downarrow$) for the task of text-to-image generation with pretrained stable diffusion models.}
\begin{tabular}{c|cccc||cccc}
\Xhline{1pt}
{\multirow{2}{*}{}} & \multicolumn{4}{c||}{SD-1.5 (512 $\cross$ 512)} & \multicolumn{4}{c}{SD-2.0-base (512 $\cross$ 512)}\\
\Xcline{2-9}{0.5pt}
& DDIM & EDICT & BDIA & \textbf{O-BELM} & DDIM & EDICT & BDIA & \textbf{O-BELM}\\
\Xhline{1pt}
10 steps & 21.44 & 85.77 & 23.96 & \textbf{18.19} & 20.40 & 75.14 & 22.00 & \textbf{17.01}\\
20 steps & 19.45 & 27.17 & 20.39 & \textbf{17.92} & 18.57 & 24.15 & 18.72 & \textbf{16.53} \\
50 steps & 18.93 & 21.30 & 19.38 & \textbf{17.96} & 17.82 & 19.76 & 17.98 & \textbf{16.52} \\
100 steps & 18.83 & 21.13 & 19.21 & \textbf{18.19} & 17.64 & 19.49 & 17.86 & \textbf{16.75} \\
\Xhline{1pt}
\end{tabular}
\label{tab:sd_fid}
\vspace{-5mm}
\end{table}

\subsection{Training-free Image Editing}
\vspace{-2mm}
In this section, we present the results of the O-BELM sampler in an image editing task as shown in Figure \ref{fig:edit_show}, and compare the editing effects of different samplers in Figure \ref{fig:edit_compare}. We demonstrate that the exact inversion property of O-BELM ensures the preservation of image features that we do not wish to edit. Furthermore, we illustrate how the high accuracy and stability properties of O-BELM contribute to the high quality of the edited image.

We emphasis that the goal of experiments here is not going to use our O-BELM sampler alone to achieve commercial-grade level image editing. It's quite unfair for training-free exact sampler methods to compete with commercial-grade image editing pipelines involving domain-specific training \cite{huang2024diffstyler,wang2023stylediffusion}, attention modification \cite{hertz2023prompttoprompt,parmar2023zero}, testing-time finetuning \cite{valevski2022unitune,huang2023kv,choi2023custom}, complex control \cite{zhang2023adding}, real-data inversion alignment \cite{NEURIPS2023_61960fdf} or input text refinement \cite{ravi2023preditor,liu2023meshdiffusion,kim2023user}. In fact, our O-BELM sampler is orthogonal to these image editing techniques, using a better exact inversion sampler like O-BELM in the commercial-grade image editing pipeline remains a promising future work.

\vspace{-3mm}
\section{Conclusions}
\vspace{-3mm}
We tackle the inexact inversion issue of DMs in a training-free manner.
We introduce the generic Bidirectional Explicit Linear Multi-step (BELM) framework based on a linear multi-step observation, which encompasses existing heuristic exact inversion samplers as special cases.
Furthermore, we devise a Local Truncation Error (LTE) minimization approach to construct the Optimal-BELM (O-BELM) within the BELM framework, which achieves a higher order of local error.
We provide a theoretical guarantee of global stability and convergence for O-BELM and conduct various experiments to demonstrate that O-BELM not only accomplishes exact inversion but also maintains a high-quality sampling capability.
Please refer to further discussion and limitations in appendix \ref{app:discussion}.

\section{Acknowledgments} 
We would like to thank all the reviewers for their constructive comments.
Fangyikang Wang wishes to express gratitude to Pengze Zhang from ByteDance, as well as Yiling Zhang and Yinan Li from Zhejiang University, for their insightful discussions on the experiments.

\bibliographystyle{plain}
\bibliography{reference}
\newpage
{\huge \bfseries Appendix}
\etocdepthtag.toc{mtappendix}
\etocsettagdepth{mtchapter}{none}
\etocsettagdepth{mtappendix}{subsection}
\tableofcontents
\appendix
\section{Formulations}
\subsection{Detail Formulation of EDICT}\label{app:detail_edict}
A sequential inversion and rearrangement of EDICT \eqref{EDICT} yields the EDICT inversion:
\begin{equation} \label{EDICT_noising}
\left\{
\begin{aligned}
    &\vy_{i}^{inter} = (\vy_{i-1} - (1-p) \vx_{i-1}) / p ,\\
    &\vx_{i}^{inter} = (\vx_{i-1}-(1-p)\vy_{i}^{inter}) /p , \\
    &\vy_{i} = \left(\vy_{i}^{inter} - b_i \vepsilon_\theta(\vx_{i}^{inter},i)\right)/ a_i, \\
    &\vx_{i} = \left(\vx_{i}^{inter}  - b_i  \vepsilon_\theta(\vy_{i},i)\right)/ a_i.
\end{aligned}
\right.
\end{equation}

\subsection{Detail Formulation of BDIA}\label{app:detail_bdia}
BDIA sampler \cite{zhang2023exact} utilizes bi-directional integration to achieve exact inversion, also introducing an additional hyperparameter. Reformulate the expression of DDIM \eqref{DDIM} to be $\vx^{\text{DDIM}}_{i-1} = \vx^{\text{DDIM}}_i + \Delta \left( i \to i-1 | \vx^{\text{DDIM}}_i \right) $ and the expression of DDIM inversion \eqref{DDIM_inversion} to be $\vx^{\text{DDIM}}_{i} = \vx^{\text{DDIM}}_{i-1} + \Delta \left( i-1 \to i | \vx^{\text{DDIM}}_{i-1} \right) $, that is,
\begin{equation} \label{BDIA_incre}
\left\{
\begin{aligned}
    \Delta \left( i \to i-1 | \vx_i \right) &= \left(\frac{\alpha_{i-1}}{\alpha_i}-1\right)\vx_i
+\left(\sigma_{i-1}-\frac{\alpha_{i-1}}{\alpha_i}\sigma_i\right)\vepsilon_\theta(\vx_i,i)\\
\Delta \left( i-1 \to i | \vx_{i-1} \right)&=\left(\frac{\alpha_i}{\alpha_{i-1}}-1\right)\vx_{i-1}+\left(\sigma_i-\frac{\alpha_i}{\alpha_{i-1}}\sigma_{i-1}\right)\vepsilon_\theta(x_{i-1},i-1).
\end{aligned}
\right.
\end{equation}

The updating rule of BDIA write:
\begin{equation} \label{BDIA_detail}
\begin{aligned}
    \vx_{i-1}=& \vx_{i+1}  \underbrace{- \left[(1-\gamma)(\vx_{i+1} - \vx_i) + \gamma \Delta \left( i \to i+1 | \vx_{i} \right)\right]}_{increment(\vx_{i+1} \to \vx_{i})}+ \underbrace{\Delta \left( i \to i-1 | \vx_i \right)}_{increment(\vx_{i} \to \vx_{i-1})},\\
    =&\vx_{i+1} - (1-\gamma)(\vx_{i+1} - \vx_i) - \gamma\left[ \left(\frac{\alpha_{i+1}}{\alpha_{i}}-1\right)\vx_{i}+\left(\sigma_{i+1}-\frac{\alpha_{i+1}}{\alpha_{i}}\sigma_{i}\right)\vepsilon_\theta(x_{i},i) \right] \\
    &+\left[ \left(\frac{\alpha_{i-1}}{\alpha_i}-1\right)\vx_i
+\left(\sigma_{i-1}-\frac{\alpha_{i-1}}{\alpha_i}\sigma_i\right)\vepsilon_\theta(\vx_i,i) \right]
    \\=& \gamma \vx_{i+1} + \left(\frac{\alpha_{i-1}}{\alpha_i} - \gamma\frac{\alpha_{i+1}}{\alpha_i}\right)\vx_i + \left[\sigma_{i-1} - \frac{\alpha_{i-1}}{\alpha_i}\sigma_i - \gamma\left(\sigma_{i+1} - \frac{\alpha_{i+1}}{\alpha_i}\sigma_i\right)\right]\vepsilon_\theta(\vx_i,i).
\end{aligned}
\end{equation}
By rearranging the BDIA \eqref{BDIA_detail}, the inversion of BDIA is
\begin{equation} \label{BDIA_inversion_detail}
\begin{aligned}
    \vx_{i+1} =& \vx_{i-1} / \gamma +(1-1/\gamma)\vx_i +\Delta \left( i \to i+1 | \vx_{t} \right)- (1/\gamma)\Delta \left( i \to i-1 | \vx_i \right),\\
    =& \frac{1}{\gamma}\vx_{i-1} +\left( 1-\frac{1}{\gamma} \right)\vx_i + \left[ \left(\frac{\alpha_{i+1}}{\alpha_{i}}-1\right)\vx_{i}+\left(\sigma_{i+1}-\frac{\alpha_{i+1}}{\alpha_{i}}\sigma_{i}\right)\vepsilon_\theta(x_{i},i)\right] \\
    &- \frac{1}{\gamma} \left[ \left(\frac{\alpha_{i-1}}{\alpha_i}-1\right)\vx_i
+\left(\sigma_{i-1}-\frac{\alpha_{i-1}}{\alpha_i}\sigma_i\right)\vepsilon_\theta(\vx_i,i)\right]\\
    =& \frac{1}{\gamma}\vx_{i-1}+\left( \frac{\alpha_{i+1}}{\alpha_i} - \frac{1}{\gamma} \frac{\alpha_{i-1}}{\alpha_i} \right) \vx_i + \left[ \left(\sigma_{i+1}-\frac{\alpha_{i+1}}{\alpha_i}\sigma_i\right) - \frac{1}{\gamma} \left(\sigma_{i-1}-\frac{\alpha_{i-1}}{\alpha_i}\sigma_i\right)\right]\vepsilon_\theta(\vx_i,i).
\end{aligned}
\end{equation}

\subsection{Continuity Assumption and Other Mathematical Remarks}\label{app:con_assump}
\paragraph{Continuity Assumption}
Much of our Local Truncation Error (LTE) analysis such as Proposition \ref{prop:local_error_3} and \ref{prop:local_belm_2}, is built on the Taylor expansion, which requires that the noise predictor satisfies the necessary continuity conditions. 
Therefore, we establish the following continuity assumption:
\begin{assumption}\label{asump:smooth}
    Denote $\boldsymbol{\mathcal{E}}_\theta(\bsigma_t) = \bvepsilon_\theta(\bvx(t),\bsigma_t)$, assume $\boldsymbol{\mathcal{E}}_\theta(\bsigma_t)$ is continuous w.r.t. $\bsigma_t$ :
\begin{equation} 
\begin{aligned}
\boldsymbol{\mathcal{E}}_\theta(\bsigma_t) \in  C^{\infty}(\mathbb{R}, \mathbb{R}^n).
\end{aligned}
\end{equation}
\end{assumption}
This assumption can be met by selecting a differentiable activation design in the noise predictor U-Net \cite{ronneberger2015u}.

\paragraph{Variable of IVP}
 Here, we further wish to clarify that the notation $\bvepsilon_\theta(\bvx(t),\bsigma_t) \equiv \vepsilon_\theta\left(\vx(t),t\right)$ presented in Section \ref{sec:metho} is well-defined. This is because there exists a bijective relationship between $\bsigma_t$ and $t$, and $\bvx(t)$ is simply a scaled version of $\vx(t)$.

\paragraph{Singularity Issue}
In Assumption \ref{asump:epsilon}, we do not consider the singularity points at $t=0$ and $t=1$ because these points can lead to unusual performance of the noise predictor as discussed in \cite{zhang2024tackling}. In fact, our numerical method is minimally affected by these singularity points, thus making Assumption \ref{asump:epsilon} reasonable.

\subsection{Detailed Formulation of 3-step BELM}\label{app:belm-3}
For 3-step BELM, we got five coefficients in the formulation:

\begin{equation} \label{belm-3}
\begin{aligned}
\bvx_{i-1} = a_{i,3}\bvx_{i+2} + a_{i,2}\bvx_{i+1} +a_{i,1}\bvx_{i}+ b_{i,2} h_{i+1}\bvepsilon_\theta(\bvx_{i+1},\bsigma_{i+1}) + b_{i,1} h_i\bvepsilon_\theta(\bvx_i,\bsigma_i).
\end{aligned}
\end{equation}
Follow the idea of Proposition \ref{prop:local_belm_2}, The local truncation error of the 3-step BELM diffusion sampler \eqref{belm-3} $\tau_i$ can be accurate up to the fifth order of step sizes $\tau_i = \mathcal{O}\left({(h_{i}+h_{i+1}+h_{i+2})}^5\right)$ by setting coefficients as the following linear system
\begin{equation}\label{coefficient_system_3}
\begin{aligned}
\begin{bmatrix}
1 & 1 & 1 & 0 & 0\\
h_i & h_i + h_{i+1} & h_i + h_{i+1} + h_{i+2} & h_i & h_{i+1}\\
\frac{1}{2}h_i^2 & \frac{1}{2}(h_i + h_{i+1})^2 & \frac{1}{2}(h_i + h_{i+1} + h_{i+2})^2& h_i^2 & h_{i+1}(h_i + h_{i+1})\\
\frac{1}{6}h_i^3 & \frac{1}{6}(h_i + h_{i+1})^3 & \frac{1}{6}(h_i + h_{i+1} + h_{i+2})^3& \frac{1}{2}h_i^3 & \frac{1}{2}h_{i+1}(h_i + h_{i+1})^2\\
\frac{1}{24}h_i^4 & \frac{1}{24}(h_i + h_{i+1})^4 & \frac{1}{24}(h_i + h_{i+1} + h_{i+2})^4& \frac{1}{6}h_i^4 & \frac{1}{6}h_{i+1}(h_i + h_{i+1})^3\\
\end{bmatrix}
\begin{bmatrix}
a_{i,1} \\
a_{i,2} \\
a_{i,3} \\
b_{i,1} \\
b_{i,2} \\
\end{bmatrix}
=
\begin{bmatrix}
1 \\
0 \\
0  \\
0 \\
0 \\
\end{bmatrix}.
\end{aligned}
\end{equation}
There is no linear dependence between any two equations in \eqref{coefficient_system_3}. Through a calculation by hands or equation-solving tools like Matlab \cite{MATLAB}, the linear system above yields the unique solution provided below, which can be verified by readers.
\begin{equation}
\begin{aligned}
\begin{bmatrix}
a_{i,1} \\
a_{i,2} \\
a_{i,3} \\
b_{i,1} \\
b_{i,2} \\
\end{bmatrix}
\!\!\!=\!\!\!
\begin{bmatrix}
\frac{-((h_i + h_{i+1})^2(3h_i^2h_{i+1} + 2h_i^2h_{i+2} + 2h_ih_{i+1}^2 + 4h_ih_{i+1}h_{i+2} + 2h_ih_{i+2}^2 - h_{i+1}^3 - 2h_{i+1}^2h_{i+2} - h_{i+1}h_{i+2}^2))}{h_{i+1}^3(h_{i+1} + h_{i+2})^2}\\
\frac{(h_i^2(- h_i^2h_{i+1} + 2h_i^2h_{i+2} - 2h_ih_{i+1}^2 + 4h_ih_{i+1}h_{i+2} + 2h_ih_{i+2}^2 - h_{i+1}^3 + 2h_{i+1}^2h_{i+2} + 3h_{i+1}h_{i+2}^2))}{h_{i+1}^3h_{i+2}^2}\\
\frac{h_i^2(h_i + h_{i+1})^2}{h_{i+2}^2(h_{i+1} + h_{i+2})^2}\\
\frac{-((h_i + h_{i+1})^2(h_i + h_{i+1} + h_{i+2}))}{h_{i+1}^2(h_{i+1} + h_{i+2})}
\\
\frac{-(h_i^2(h_i + h_{i+1})(h_i + h_{i+1} + h_{i+2}))}{h_{i+1}^3h_{i+2}}
\\
\end{bmatrix}.
\end{aligned}
\end{equation}


\subsection{Detailed Formulation of k-step BELM}\label{app:belm-k}
For general k-step BELM, we got $2k-1$ coefficients in the formulation:
\begin{equation} \label{belm-k-duplicate}
\begin{aligned}
\bvx_{i-1} = \sum_{j=1}^{k} a_{i,j}\cdot \bvx_{i-1+j} +\sum_{j=1}^{k-1}b_{i,j}\cdot h_{i-1+j}\cdot\bvepsilon_\theta(\bvx_{i-1+j},\bsigma_{i-1+j}).
\end{aligned}
\end{equation}
Following the derivation of 2-step case, we first applying the Taylor's expansion to $\bvx_{i-1+j}$ and $\bvepsilon_\theta(\bvx_{i-1+j},\bsigma_{i-1+j})$:
\begin{equation} \label{belm-k-taylor}
\begin{aligned}
 & \sum_{j=1}^{k} a_{i,j}\cdot \bvx_{i-1+j} +\sum_{j=1}^{k-1}b_{i,j}\cdot h_{i-1+j}\cdot\bvepsilon_\theta(\bvx_{i-1+j},\bsigma_{i-1+j})\\
=&\sum_{j=1}^{k} a_{i,j} \left( \bvx_{i-1} + \sum_{l=1}^{2k-2} \frac{1}{(l)!}\left(\sum_{m=0}^{j-1}h_{i+m}\right)^{l}\bvepsilon^{(l-1)}_\theta\left(\bvx(t_{i-1}),\bsigma_{i-1}\right) \right)  
\\&+\sum_{j=1}^{k-1}b_{i,j} h_{i-1+j}\left( \sum_{l=1}^{2k-2} \frac{1}{(l-1)!}\left(\sum_{m=0}^{j-1}h_{i+m}\right)^{l-1}\bvepsilon^{(l-1)}_\theta\left(\bvx(t_{i-1}),\bsigma_{i-1}\right) \right)\\
&+\mathcal{O}\left(\left({\sum_{m=0}^{k-1}h_{i+m}}\right)^{(2k-1)}\right)\\
=& \sum_{j=1}^{k} a_{i,j} \bvx_{i-1} + \sum_{l=1}^{2k-2}\left(\frac{1}{(l!)   }\sum_{j=1}^{k}a_{i,j}\left( \sum_{m=1}^{j}h_{i+m-1}\right)^{l}\right)\bvepsilon^{(l-1)}_\theta\left(\bvx(t_{i-1}),\bsigma_{i-1}\right)\\
&+\sum_{l=1}^{2k-2}\left(\frac{1}{((l-1)!)   }\sum_{j=1}^{k-1}b_{i,j}h_{i+j-1}\left( \sum_{m=1}^{j}h_{i+m-1}\right)^{l-1}\right)\bvepsilon^{(l-1)}_\theta\left(\bvx(t_{i-1}),\bsigma_{i-1}\right)\\
&+\mathcal{O}\left(\left({\sum_{m=0}^{k-1}h_{i+m}}\right)^{(2k-1)}\right).
\end{aligned}
\end{equation}
Thus, the optimal coefficient can be computed by:
\begin{equation}\label{coefficient_system_k}
\begin{aligned}
\matr{A}_{(2k-1) \cross (2k-1)}
\begin{bmatrix}
a_{i,1} \\
\vdots \\
a_{i,k} \\
b_{i,1} \\
\vdots \\
b_{i,k-1} \\
\end{bmatrix}_{2k-1}
=
\begin{bmatrix}
1 \\
0 \\
\vdots\\
0 \\
\end{bmatrix}_{2k-1}.
\end{aligned}
\end{equation}
where $\matr{A}=
\begin{bmatrix}
\matr{A}_1&|&\matr{A}_2
\end{bmatrix}$,
and
\begin{equation}
\begin{aligned}
\matr{A}_1 = 
& \begin{pNiceMatrix}%
  [ 
    parallelize-diags = false,
    nullify-dots,
  ]
 1         & 1          & \Cdots & 1          \\
 h_{i} & h_{i}+h_{i+1}  & \Cdots & \sum_{j=0}^{k-1}h_{i+j}   \\
 \frac{1}{2}h_{i}^2 & \frac{1}{2}\left(h_{i}+h_{i+1}\right)^2  & \Cdots & \frac{1}{2}\left(\sum_{j=0}^{k-1}h_{i+j}\right)^2    \\
 \Vdots    & \Ddots     &        & \Vdots         \\
           &            &        &                     \\
 \frac{1}{(2k-2)!}h_{i}^{2k-2} & \frac{1}{(2k-2)!}\left(h_{i}+h_{i+1}\right)^{2k-2}  & \Cdots & \frac{1}{(2k-2)!}\left(\sum_{j=0}^{k-1}h_{i+j}\right)^{2k-2}  
\end{pNiceMatrix},
\end{aligned}
\end{equation}
\begin{equation}
\begin{aligned}
\matr{A}_2 = 
& \begin{pNiceMatrix}%
  [ 
    parallelize-diags = false,
    nullify-dots,
  ]
 0 & 0 & \Cdots & 0  \\
 h_i & h_{i+1}   & \Cdots & h_{i+k-2}  \\
 h_{i}^2 & h_{i+1} \left(h_{i}+h_{i+1}\right) & \Cdots & h_{i+k-2} \left(\sum_{j=0}^{k-2}h_{i+j}\right)  \\
 \frac{1}{2}h_{i}^3  & \frac{1}{2} h_{i+1} \left(h_{i}+h_{i+1}\right)^2 & \Cdots & \frac{1}{2} h_{i+k-2} \left(\sum_{j=0}^{k-2}h_{i+j}\right)^2  \\
 \Vdots      & \Ddots      &        & \Vdots     \\
             &             &        &            \\
 \frac{1}{(2k-3)!}h_ih_{i}^{2k-3} & \frac{1}{(2k-3)!}h_i\left(h_{i}+h_{i+1}\right)^{2k-3} & \Cdots & \frac{1}{(2k-3)!} h_{i+k-2} \left(\sum_{j=0}^{k-2}h_{i+j}\right)^{2k-3}  
\end{pNiceMatrix}.
\end{aligned}
\end{equation}

\subsection{Definitions of Consistency}\label{app:def_consis}
\paragraph{Consistency}
The consistency property  refers to the ability of the method to accurately represent the IVP equation it's trying to solve. More specifically, a method is said to be consistent if, as the step size approaches zero, the difference between the numerical method and the exact differential equation also approaches zero. 

\begin{definition}
    The LM method \eqref{vsvfm-general} is \textbf{consistent} if for every function $y\in C^1[t_0,t_0+T]$
\begin{equation} \label{consistent_def}
\begin{aligned}
\lim_{h\to 0}\sum_{i=k}^{N-1} \norm{\tau_i} = 0.
\end{aligned}
\end{equation}
\end{definition}

\subsection{BDIA as a Sub-Optimal Special Case of BELM}\label{app:bdia_special}
the updating rule of BDIA write:
\begin{equation} \label{BDIA_app_1}
\begin{aligned}
    \vx_{i-1}=& \gamma \vx_{i+1} + \left(\frac{\alpha_{i-1}}{\alpha_i} - \gamma\frac{\alpha_{i+1}}{\alpha_i}\right)\vx_i + \left[\sigma_{i-1} - \frac{\alpha_{i-1}}{\alpha_i}\sigma_i - \gamma\left(\sigma_{i+1} - \frac{\alpha_{i+1}}{\alpha_i}\sigma_i\right)\right]\vepsilon_\theta(\vx_i,i).
\end{aligned}
\end{equation}
With the same alpha, scaled sigma and stepsize schedule as the BELM, the BDIA update \eqref{BDIA_app_1} have an equivalent \textbf{bidirectional explicit} linear multi-step form with an easy rearrangement,
\begin{equation} 
\begin{aligned}
\bvx_{i-1} = a_{i,2}\cdot\bvx_{i+1} +a_{i,1}\cdot\bvx_{i} + b_{i,1}\cdot h_i\cdot\bvepsilon_\theta(\bvx_i,\bsigma_i),
\end{aligned}
\end{equation}
where
\begin{equation}
\begin{aligned}
a_{i,2}=\gamma \frac{\alpha_{i+1}}{\alpha_{i-1}},\quad\quad
a_{i,1}=1 - \gamma \frac{\alpha_{i+1}}{\alpha_{i-1}},\quad\quad
b_{i,1} = -1 - \gamma \frac{\alpha_{i+1}}{\alpha_{i-1}} \frac{h_{i+1}}{h_{i}}.
\end{aligned}
\end{equation}
Thus we find that BDIA is indeed a special case of our BELM framework.

\subsection{EDICT as a Sub-Optimal Special Case of BELM}\label{app:edict_special}
In this section, we will demonstrate that a sequence of $\{\vx_{i}\}$, $\{\vy_{i}\}$, $\{\vy_{i}^{inter}\}$, and $\{\vx_{i}^{inter}\}$, where $i\in [N \dots 1]$, generated by EDICT \eqref{EDICT}, indeed corresponds to a sequence of $\vz_{j}$, where $j\in [4N \dots 1]$, produced by a special BELM. 
The EDICT updates as follows:
\begin{equation} \label{EDICT_app_1}
\left\{
\begin{aligned}
    &\vx_{i}^{inter} = \frac{\alpha_{i-1}}{\alpha_{i}}\vx_i+\left(\sigma_{i-1} - \frac{\alpha_{i-1}}{\alpha_{i}} \sigma_{i}\right) \vepsilon_\theta(\vy_i,i), \\
    &\vy_{i}^{inter} = \frac{\alpha_{i-1}}{\alpha_{i}}\vy_i+\left(\sigma_{i-1} - \frac{\alpha_{i-1}}{\alpha_{i}} \sigma_{i}\right) \vepsilon_\theta(\vx(t)^{inter},i), \\
    &\vx_{i-1} = p\vx_{i}^{inter}+(1-p)\vy_{i}^{inter}, \\
    &\vy_{i-1} = p\vy_{i}^{inter} +(1-p)\vx_{i-1}.
\end{aligned}
\right.
\end{equation}
transfer $\vx_{i}$ to $\vz_{4l}$, $\vy_{i}$ to $\vz_{4l-1}$, $\vx_{i}^{inter}$ to $\vz_{4l-2}$ and $\vy_{i}^{inter}$ to $\vz_{4l-3}$, 
\begin{equation} \label{EDICT_app_2}
\left\{
\begin{aligned}
    &\vz_{4l-2} = \frac{\alpha_{i-1}}{\alpha_{i}}\vz_{4l}+\left(\sigma_{i-1} - \frac{\alpha_{i-1}}{\alpha_{i}} \sigma_{i}\right) \vepsilon_\theta(\vz_{4l-1},i), \\
    &\vz_{4l-3} = \frac{\alpha_{i-1}}{\alpha_{i}}\vz_{4l-1}+\left(\sigma_{i-1} - \frac{\alpha_{i-1}}{\alpha_{i}} \sigma_{i}\right) \vepsilon_\theta(\vz_{4l-2},i), \\
    &\vz_{4l-4} = p\vz_{4l-2}+(1-p)\vz_{4l-3}, \\
    &\vz_{4l-5} = p\vz_{4l-3} +(1-p)\vz_{4l-4}.
\end{aligned}
\right.
\end{equation}
We set alpha schedule to be 
\begin{equation} 
\alpha_{4l} = \alpha_i,\quad\alpha_{4l-1} = \alpha_i,\quad\alpha_{4l-2} = \sqrt{\alpha_i \alpha_{i-1}},\quad\alpha_{4l-3} = \alpha_{i-1}.
\end{equation}
Then we set sigma schedule to be 
\begin{equation} 
\sigma_{4l}=\sigma_i,\quad \sigma_{4l-1}=\sigma_i,\quad \sigma_{4l-2}=\frac{1}{2}\left(\sigma_i\frac{\sqrt{\alpha_{i-1}}}{\sqrt{\alpha_{i}}}+ \sigma_{i-1}\frac{\sqrt{\alpha_{i}}}{\sqrt{\alpha_{i-1}}}\right),\quad \sigma_{4l-3}=\sigma_{i-1}.
\end{equation}
Thus the scaled sigma writes 
\begin{equation} 
\bsigma_{4l} = \bsigma_{4l-1} = \frac{\sigma_i}{\alpha_i},\quad \bsigma_{4l-2} = \frac{\sigma_i}{\alpha_i}+\frac{\sigma_{i-1}}{\alpha_{i-1}},\quad \bsigma_{4l-3} = \frac{\sigma_{i-1}}{\alpha_{i-1}}.
\end{equation}
And the stepsize schedule will be
\begin{equation} 
h_{4l} = 0,\quad h_{4l-1} = \frac{1}{2}\left(\frac{\sigma_{i}}{\alpha_{i}}-\frac{\sigma_{i-1}}{\alpha_{i-1}}\right),\quad h_{4l-2} = \frac{1}{2}\left(\frac{\sigma_{i}}{\alpha_{i}}-\frac{\sigma_{i-1}}{\alpha_{i-1}}\right),\quad h_{4l-3} = 0.
\end{equation}
With easy substitution, the EDICT update \eqref{EDICT_app_2} have an equivalent \textbf{bidirectional explicit} linear multi-step form:
\begin{equation} \label{edict_j}
\begin{aligned}
\bvz_{j-1} = a_{j,2}\cdot\bvz_{j+1} +a_{j,1}\cdot\bvz_{j} + b_{j,1}\cdot h_j\cdot\bvepsilon_\theta(\bvz_j,\bsigma_j)
\end{aligned}
\end{equation}
where the coefficients take the following piece-wise function form:
\begin{align}\label{edict_belm_coefficient}
    a_{j,2}=
    \left\{
    \begin{aligned}
        p \frac{\sqrt{\alpha_{i+1}}}{\sqrt{\alpha_{i}}}&,j=4l\\
        p&,j=4l-1\\
        \frac{\sqrt{\alpha_{i-1}}}{\sqrt{\alpha_{i}}}&,j=4l-2\\
        1&,j=4l-3
    \end{aligned}
    \right.
    \quad a_{j,1}=
    \left\{
    \begin{aligned}
        1-p &,j=4l\\
        1-p &,j=4l-1\\
        0&,j=4l-2\\
        0&,j=4l-3
    \end{aligned}
    \right.
    \quad b_{j,1}=
    \left\{
    \begin{aligned}
        0 &,j=4l\\
        0 &,j=4l-1\\
        -2\frac{\sqrt{\alpha_{i-1}}}{\sqrt{\alpha_{i}}}&,j=4l-2\\
        -2&,j=4l-3
    \end{aligned}
    \right.
\end{align}
Despite the formulation of \eqref{EDICT} being subject to cyclic variations, the variable $a_{j,2}$ consistently remains non-zero, thereby satisfying the conditions of the BELM framework. Consequently, EDICT can indeed be considered a special case within our BELM framework.

\subsection{Order of Accuracy}
In this section, we further explore the order of accuracy of DDIM, EDICT, BDIA, and our proposed O-BELM. Our findings indicate that O-BELM achieves the superior order of accuracy among these methods. Intuitively, the order of accuracy provides insight into which functional class of the IVP can be accurately approximated by a given method.
\begin{definition}\label{def:accuracy}
    The method \eqref{belm-k} is said to have \textbf{order of accuracy p} if $p$ is the largest positive integer such that there exist constants $K$ and $h^*$ such that for all $i$,
\begin{equation} \label{}
\begin{aligned}
\norm{\tau_i}\leq K h^{p+1},\quad for\quad 0<h<h^*.
\end{aligned}
\end{equation}
\end{definition}
\begin{proposition}\label{prop:accuracy_order_belm}
    The BELM diffusion sampler \eqref{belm-2} is with \textbf{second-order accuracy};   
    The DDIM diffusion sampler \eqref{DDIM} is with \textbf{first-order accuracy};
    The EDICT diffusion sampler \eqref{EDICT} is with \textbf{zero-order accuracy};
    The BDIA diffusion sampler \eqref{BDIA} is with \textbf{first-order accuracy}.
\end{proposition}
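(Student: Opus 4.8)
The plan is to read the order of accuracy off the Local Truncation Error (LTE) estimates already in hand and then to check sharpness through the leading coefficient of the LTE expansion. Recall from Definition \ref{def:accuracy} that the order is the largest positive integer $p$ with $\norm{\tau_i} \leq K h^{p+1}$ for $0<h<h^*$, where $h=\max_i h_i$. The enabling observation is that each individual step size is dominated by $h$: since $h_i \leq h$ and $h_i + h_{i+1} \leq 2h$, any estimate phrased in terms of $h_i$ and $h_{i+1}$ converts, at the cost of an absolute constant, into a bound of the form $K h^{p+1}$. This turns the existing LTE bounds directly into one-sided (upper) bounds on the order.

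For these upper bounds I would invoke the results already established. Corollary \ref{prop:local_error_3} gives $\ve_i = \mathcal{O}(\alpha_{i-1} h_i^2)$ for DDIM and $\ve_i = \mathcal{O}(\alpha_{i-1}(h_i+h_{i+1})^2)$ for BDIA; using $\tau_i = \ve_i/\alpha_{i-1}$ these read $\tau_i = \mathcal{O}(h_i^2)$ and $\tau_i = \mathcal{O}((h_i+h_{i+1})^2)$, hence $\norm{\tau_i} \leq K h^{2} = K h^{1+1}$, so both have order at least $1$. For EDICT, $\ve_i = \mathcal{O}(\sqrt{\alpha_{i-1}}\,h_i)$ yields $\tau_i = \mathcal{O}(h_i)$, i.e. $\norm{\tau_i} \leq K h = K h^{0+1}$, so its order is at most $0$. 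For the $2$-step BELM with the optimal coefficients, Proposition \ref{prop:local_belm_2} gives $\tau_i = \mathcal{O}((h_i+h_{i+1})^3) \leq K h^{3} = K h^{2+1}$, so its order is at least $2$.

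The substantive part is sharpness, namely that the order is not larger than claimed, which I would establish from the explicit expansion of Proposition \ref{prop:local_error_general_belm2}, $\tau_i = c_{i,1}\bvx(t_{i-1}) + c_{i,2}\bvepsilon_\theta(\bvx(t_{i-1}),\bsigma_{i-1}) + c_{i,3}\nabla_{\bsigma_{i-1}}\bvepsilon_\theta(\bvx(t_{i-1}),\bsigma_{i-1}) + \mathcal{O}((h_i+h_{i+1})^3)$, by substituting each method's coefficients (DDIM as the one-step Euler instance $a_{i,2}=0,a_{i,1}=1,b_{i,1}=-1$; BDIA and EDICT through the identifications in Appendices \ref{app:bdia_special} and \ref{app:edict_special}). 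For DDIM and BDIA one checks that $c_{i,1}=c_{i,2}=0$ but the $\mathcal{O}(h^2)$ coefficient survives: DDIM gives $c_{i,3}=\tfrac12 h_i^2$, and BDIA gives $c_{i,3} = \tfrac12 h_i^2 - \tfrac12\gamma\tfrac{\alpha_{i+1}}{\alpha_{i-1}} h_{i+1}^2$, which does not vanish for a fixed $\gamma$ on generic grids; hence both are exactly first order. For EDICT the cyclic coefficients give a nonzero $\bvx$-coefficient, e.g. $c_{i,1}=p\,(1-\sqrt{\alpha_{i+1}/\alpha_i})$ on one sub-step, which is only $\mathcal{O}(h)$ but cannot be cancelled, so EDICT cannot attain first order and is exactly order $0$. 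For the $2$-step BELM, the optimal design forces $c_{i,1}=c_{i,2}=c_{i,3}=0$, and since only the three coefficients $a_{i,1},a_{i,2},b_{i,1}$ are available to match conditions, the next-order coefficient cannot also be annihilated (one verifies it is nonzero by direct substitution of the optimal coefficients), so the $\mathcal{O}(h^3)$ term persists and the order is exactly $2$.

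The hard part will be the EDICT case. Unlike the others, its coefficients vary cyclically over the four auxiliary sub-steps and its $\bvx$-coefficient $c_{i,1}$ does not vanish identically; the delicate point is to show the non-matching term is genuinely $\Theta(h)$, so it neither grows to $\mathcal{O}(1)$ (which would contradict the upper bound) nor decays to $\mathcal{O}(h^2)$ (which would wrongly promote EDICT to first order). I would resolve this by using the smoothness of the noise schedule to expand $\sqrt{\alpha_{i+1}/\alpha_i}-1 = \Theta(h)$ and confirming the coefficient is not accidentally cancelled across the four-step cycle.
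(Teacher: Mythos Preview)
Your approach is essentially the paper's: the paper's proof is a one-sentence appeal to Definition~\ref{def:accuracy} together with the LTE results of Corollary~\ref{prop:local_error_3} and Proposition~\ref{prop:local_belm_2}, and you are simply carrying that out more carefully by separating the upper bound on $\norm{\tau_i}$ (lower bound on the order) from the sharpness check (upper bound on the order) via the explicit leading coefficients $c_{i,1},c_{i,2},c_{i,3}$ of Proposition~\ref{prop:local_error_general_belm2}. The paper leaves the sharpness implicit in the LTE computations themselves, whereas you spell it out; the substance is the same.

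One slip to fix: in your upper-bound paragraph you write that the EDICT estimate $\norm{\tau_i}\le Kh=Kh^{0+1}$ shows ``its order is at most $0$''. That inequality shows the order is \emph{at least} $0$; the ``at most $0$'' direction is exactly what your sharpness paragraph establishes via the nonvanishing $c_{i,1}=p\bigl(1-\sqrt{\alpha_{i+1}/\alpha_i}\bigr)=\Theta(h)$. Also, when converting $\ve_i$ to $\tau_i$ for EDICT, remember that $\tau_i=\ve_i/\alpha_{i-1}$ gives $\tau_i=\mathcal{O}(h_i/\sqrt{\alpha_{i-1}})$ rather than $\mathcal{O}(h_i)$; this is harmless once the $\alpha$-schedule is bounded away from zero on the working interval, but you should say so.
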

\begin{proof}
    This proposition can be directly inferred from the Definition \ref{def:accuracy}, in conjunction with Proposition \ref{prop:local_error_3} and \ref{prop:local_belm_2}.
\end{proof}




\begin{remark}
    Though the order of accuracy of BDIA is the same as DDIM to be 1, its step size in local error of BDIA is about twice that of DDIM. This theoretical result confirms the experimental observation that the sampling quality of BDIA sometimes is inferior to that of DDIM.
\end{remark}

\subsection{Further Theoretical Properties of DDIM}
We have also conducted an analysis of global stability and convergence for DDIM.
It is apparent that the success of DDIM fundamentally stems from its nice theoretical property. Our O-BELM preserves these excellent theoretical properties of DDIM and maintains high-quality sampling performance.
\begin{proposition}\label{prop:ddim_stable_convergence}
    The DDIM diffusion sampler \eqref{DDIM} is (a) \textbf{zero-stable} and (b) \textbf{globally convergent}.
\end{proposition}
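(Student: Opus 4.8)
The plan is to first place DDIM in the scaled coordinates of the IVP \eqref{ivp-diffusion}, where \eqref{DDIM} becomes the explicit Euler scheme $\bvx_{i-1} = \bvx_{i} - h_i\bvepsilon_\theta(\bvx_i,\bsigma_i)$, i.e.\ the one-step ($k=1$) member of the BELM family with $a_{i,1}=1$, $b_{i,1}=-1$. Both parts of the proposition rest on a single Lipschitz estimate, so I would begin by transferring Assumption \ref{asump:epsilon} to the scaled predictor: since $\bvepsilon_\theta(\bvx,\bsigma_t)=\vepsilon_\theta(\alpha_t\bvx,t)$ and $\alpha_t$ is continuous on the compact interval $[0,T]$, it is bounded, so $\bvepsilon_\theta$ is Lipschitz in $\bvx$ with a \emph{uniform} constant $\bar L := L_{\vepsilon_\theta}\,\max_{t}\alpha_t$ that is independent of the index $i$.

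For part (a), I would take two DDIM sequences $\{\bvx_i\}$ and $\{\bvz_i\}$ produced by the same formula from different starting values $\bvx_N,\bvz_N$, subtract the two update rules, and apply the uniform Lipschitz bound to obtain the one-step recurrence $\norm{\bvx_{i-1}-\bvz_{i-1}}\le (1+\bar L h_i)\norm{\bvx_i-\bvz_i}$. Iterating this down the grid and bounding the resulting product by $\prod_j(1+\bar L h_j)\le \exp(\bar L\textstyle\sum_j h_j)\le \exp(\bar L\,\bsigma_N)$, where the sum of step sizes is at most the total length of the $\bsigma$-grid, yields $\norm{\bvx_i-\bvz_i}\le K\norm{\bvx_N-\bvz_N}$ with $K=\exp(\bar L\,\bsigma_N)$ uniform in $h$. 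This is exactly the zero-stability estimate \eqref{stable_def} specialized to $k=1$.

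For part (b), I would run the same argument with one sequence being the numerical solution and the other the exact solution sampled on the grid. Setting $E_i := \norm{\bvx_i-\bvx(t_i)}$ and inserting the local truncation error $\tau_i$, which by the LTE analysis (Corollary \ref{prop:local_error_3}) satisfies $\norm{\tau_i}=\mathcal{O}(h_i^2)$ on $\bvx$, subtraction gives $E_{i-1}\le (1+\bar L h_i)E_i+\norm{\tau_i}$. A discrete Gr\"onwall argument then bounds $\max_i E_i$ by $\exp(\bar L\,\bsigma_N)\,(E_N+\sum_j\norm{\tau_j})$. The term $E_N$ vanishes as $h\to 0$ by the vanishing-initial-error hypothesis of the convergence definition, and $\sum_j\norm{\tau_j}\le \mathcal{O}(h)\sum_j h_j\le \mathcal{O}(h)\,\bsigma_N\to 0$ encodes the consistency of DDIM; together they give \eqref{convergent_def}.

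The main obstacle I expect is not any individual estimate but the bookkeeping forced by the variable-stepsize (VSVF) setting: I must confirm that the Lipschitz constant, the accumulated step-size sum, and the Gr\"onwall factor are all controlled \emph{uniformly over the whole grid} rather than step-by-step, and that the per-step error orders sum to a quantity that genuinely tends to zero with $h=\max_i h_i$ and not merely per step. The remaining delicate point is the $\alpha_t$ scaling—ensuring that the estimates derived on $\bvx$ translate back into the error bound on $\vx$ reported in Table \ref{tab:theory}.
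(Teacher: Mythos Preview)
Your proposal is correct but follows a different, more hands-on route than the paper. You cast DDIM as explicit Euler in the scaled coordinates and run the classical Lipschitz-plus-Gr\"onwall argument directly: subtracting two trajectories for zero-stability, and subtracting the numerical from the exact solution (with the LTE as forcing) for convergence. The paper instead treats DDIM as the $k=1$ member of the VSVF linear-multistep family and appeals to the general machinery it had already assembled for O-BELM: for part~(a) it observes that the root matrix $\matr{R}_i$ of \eqref{vsvfm-general} reduces to the identity when $k=1$, so the Crouzeix--Lisbona root condition (Theorem~\ref{theorem:stability}) is satisfied trivially; for part~(b) it first establishes consistency via the bounded-coefficients-plus-first-order-accuracy criterion (Theorem~\ref{theorem:consistent}, Lemma~\ref{lemma:consis_ddim}) and then invokes the Dahlquist-type equivalence ``consistent + zero-stable $\Rightarrow$ convergent'' (Theorem~\ref{theorem:convergent}). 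Your approach is self-contained and makes the constants explicit, whereas the paper's is shorter precisely because it recycles the theorems already cited for O-BELM, giving a uniform treatment of both samplers. One minor point: the paper's zero-stability and global-convergence definitions \eqref{stable_def} and \eqref{convergent_def} are stated directly in the scaled variables $\bvx$, so the $\alpha_t$-rescaling concern you flag at the end is not actually needed to close the argument for this proposition.
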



\subsection{Pseudocode for O-BELM Sampling Process}
To more effectively elucidate the implementation of O-BELM, we provide the pseudocode for O-BELM in Algorithm \ref{alg:o-belm}. Upon examination, it is apparent that the implementation of O-BELM requires little modifications compared to DDIM, thus facilitating its easy portability to pretrained models.

\begin{algorithm}[h]
    \caption{O-BELM sampling process}
    \label{alg:o-belm}
    \begin{algorithmic}[1] 
    \STATE \textbf{Input}: pretrained noise predictor $\vepsilon_\theta$, number of timesteps $N$, noise schedule $\{\alpha_t\}$ and $\{\sigma_t\}$, x\_list = [].
    \STATE Sample $\vx_N \sim \mathcal{N}(0,\sigma_{t_N} I)$.
    \STATE x\_list.append($\vx_N$)
    \FOR{$i = N, N-1,...,1$}
    \IF{$i < N$}
    \STATE Calculate $h_i$, $a_{i,1}$, $a_{i,2}$ and $b_{i,1}$ according to \eqref{coefficient_system_2_app}.
    \STATE $\vx_{i-1} = a_{i,1}\text{x\_list[-1]} + a_{i,2}\text{x\_list[-2]} + b_{i,1}h_i \vepsilon_\theta(\text{x\_list[-1]},i)$
    \ELSE
    \STATE $\vx_{i-1} = \frac{\alpha_{i-1}}{\alpha_{i}}\text{x\_list[-1]}+\left( \sigma_{i-1} - \frac{\alpha_{i-1}}{\alpha_{i}} \sigma_{i} \right) \vepsilon_\theta(\text{x\_list[-1]},i).$
    \ENDIF
    \STATE x\_list.append($\vx_{i-1}$)
    \ENDFOR
    \STATE \textbf{Output}: x\_list
    \end{algorithmic}
\end{algorithm}

\section{Proofs}
\subsection{Proof of Proposition \ref{exact_inversion} }
\begin{proof}
     We demonstrate the exact inversion property of BELM \eqref{belm-k} by initially establishing that its local reconstruction error is zero. Assuming that we have already obtained $\{\bvx_{i-1+j}\}_{j=1}^{k}$, we compute $\bvx_{i-1}$ in accordance with \eqref{belm-k}, as follows:
\begin{equation}
\begin{aligned}
\bvx_{i-1} = \sum_{j=1}^{k} a_{i,j}\cdot \bvx_{i-1+j} +\sum_{j=1}^{k-1}b_{i,j}\cdot h_{i-1+j}\cdot\bvepsilon_\theta(\bvx_{i-1+j},\bsigma_{i-1+j}),
\end{aligned}
\end{equation}
and we will use $\{\bvx_{i-1+j}\}_{j=0}^{k-1}$ to reconstruct $\Tilde{\bvx}_{i-1+k}$ according to \eqref{vsvfm_reverse}, as follows:
\begin{equation} 
\begin{aligned}
    & \Tilde{\bvx}_{i-1+k} = \frac{1}{a_{i,k}}\cdot \bvx_{i-1} - \sum^{k-1}_{j=1}\frac{a_{i,j}}{a_{i,k}}\cdot \bvx_{i-1+j} +\sum^{k}_{j=0}\frac{b_{i,j}}{a_{i,k}}\cdot h_{i-1+j}\cdot \bvepsilon_\theta(\bvx_{i-1+j},\bsigma_{i-1+j}).
\end{aligned}
\end{equation}
The local reconstruction error, defined as the difference between $\Tilde{\bvx}_{i-1+k}$ and $\bvx_{i-1+k}$, can be calculated and is found to be zero. Furthermore, global exact inversion can be inferred from local exact inversion through the application of Mathematical Induction (MI).
\end{proof}

\subsection{Proof of Proposition \ref{prop:local_error_general_belm2}}
\begin{proof}
    The Local Truncation Error (LTE) of the BELM diffusion sampler \eqref{belm-2} can be computed by substituting $\bvx(t_{i})$, $\bvx(t_{i+1})$, and $\bvepsilon_\theta(\bvx(t_i),\bsigma_i)$ in\eqref{belm_lte_def} with their corresponding Taylor expansions at $\bsigma_{i-1}$ as follows:
    \begin{equation} 
    \begin{aligned}
    \tau_i  = & \bvx(t_{i-1}) -a_{i,1}\cdot\bvx(t_{i})- a_{i,2}\cdot\bvx(t_{i+1})  - b_{i,1}\cdot h_i\cdot\bvepsilon_\theta(\bvx(t_i),\bsigma_i)\\
    =& \bvx(t_{i-1}) 
    - a_{i,1}\left[\bvx(t_{i-1})+\frac{\bvepsilon_\theta\left(\bvx(t_{i-1}),\bsigma_{i-1}\right)}{1!} \left({h_{i}}\right)\right.\\
    &\quad\quad\left.+\frac{\nabla_{\bsigma_{i-1}}\bvepsilon_\theta\left(\bvx(t_{i-1}),\bsigma_{i-1}\right)}{2!} {(h_{i})}^2+\mathcal{O}\left({h_{i}}^3\right)\right] \\
    & -a_{i,2}\left[\bvx(t_{i-1})+\frac{\bvepsilon_\theta\left(\bvx(t_{i-1}),\bsigma_{i-1}\right)}{1!}\left({h_{i}+h_{i+1}}\right)+\right.\\
    &\quad\quad\left.\frac{\nabla_{\bsigma_{i-1}}\bvepsilon_\theta\left(\bvx(t_{i-1}),\bsigma_{i-1}\right)}{2!} {(h_{i}+h_{i+1})}^2+\mathcal{O}\left({(h_{i}+h_{i+1})}^3\right)\right] \\
    & -b_{i,1}\cdot h_i\left[\bvepsilon_\theta\left(\bvx(t_{i-1}),\bsigma_{i-1}\right)+\frac{\nabla_{\bsigma_{i-1}}\bvepsilon_\theta\left(\bvx(t_{i-1}),\bsigma_{i-1}\right)}{1!}\left({h_{i}}\right)+\mathcal{O}\left({h_{i}}^2\right)\right]\\
    =&\left(1-a_{i,1}-a_{i,2}\right)\bvx(t_{i-1})\\
    &+\left[- a_{i,1} h_i - a_{i,2}\left({h_{i}+h_{i+1}}\right) -b_{i,1}\cdot h_i\right]\cdot\bvepsilon_\theta\left(\bvx(t_{i-1}),\bsigma_{i-1}\right)\\
    &+\left[ -\frac{a_{i,1}}{2}\cdot h_i^2 -\frac{a_{i,2}}{2}(h_i+h_{i+1})^2 -b_{i,1}\cdot h_i^2 \right]\cdot\nabla_{\bsigma_{i-1}}\bvepsilon_\theta\left(\bvx(t_{i-1}),\bsigma_{i-1}\right)\\
    &+\mathcal{O}\left({(h_{i}+h_{i+1})}^3\right).
    \end{aligned}
    \end{equation}
\end{proof}

\subsection{Proof of Proposition \ref{prop:local_belm_2}}
\begin{proof}
    In the \eqref{belm_lte_detail} of Proposition \ref{prop:local_error_general_belm2}, 
    we have three degrees of freedom: $a_{i,1}$, $a_{i,2}$, and $b_{i,1}$ in the LTE of BELM \eqref{belm-2}. Therefore, the highest order that $\tau_i$ can achieve is three, under the condition that:
    \begin{align}\label{coefficient_system_2}
    \left\{
    \begin{aligned}
        &1-a_{i,1}-a_{i,2} = 0,\\
        &- a_{i,1} h_i - a_{i,2}\left({h_{i}+h_{i+1}}\right) -b_{i,1}\cdot h_i  =0,\\
        &-\frac{a_{i,1}}{2}\cdot h_i^2 -\frac{a_{i,2}}{2}(h_i+h_{i+1})^2 -b_{i,1}\cdot h_i^2 = 0.
    \end{aligned}\right.
    \end{align}
    whose matrix form writes
\begin{equation}
\begin{aligned}
\begin{bmatrix}
1 & 1 & 0 \\
h_i & (h_i + h_{i+1}) & h_i \\
\frac{1}{2}h_i^2 & \frac{1}{2}(h_i + h_{i+1})^2 & h_i^2
\end{bmatrix}
\begin{bmatrix}
a_{i,1} \\
a_{i,2} \\
b_{i,1}
\end{bmatrix}
=
\begin{bmatrix}
1 \\
0 \\
0
\end{bmatrix}.
\end{aligned}
\end{equation}
    There is no linear dependence between any two equations in \eqref{coefficient_system_2}. Through a straightforward calculation, the linear system above yields the unique solution provided below, which can be verified by readers.
    \begin{equation} \label{coefficient_system_2_app}
    \begin{aligned}
        a_{i,1} = \frac{h_{i+1}^2 - h_i^2}{h_{i+1}^2},\quad 
        a_{i,2}=\frac{h_i^2}{h_{i+1}^2},\quad
        b_{i,1}=- \frac{h_i+h_{i+1}}{h_{i+1}}. 
    \end{aligned}
    \end{equation}
\end{proof}

\subsection{Proof of Corollary \ref{prop:local_error_3}}
\subsubsection{LTE of DDIM}
\begin{proposition}\label{prop:local_error_ddim}
    The LTE $\ve_i$ of DDIM sampler \eqref{DDIM}  is $\mathcal{O}\left({\alpha_{i-1} h_{i}}^2\right)$.
\end{proposition}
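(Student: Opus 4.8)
The plan is to reduce DDIM to the explicit Euler discretization of the diffusion IVP \eqref{ivp-diffusion} written in the scaled coordinates, and then read off its LTE as a degenerate instance of the general BELM truncation-error formula of Proposition \ref{prop:local_error_general_belm2}. First I would rewrite the DDIM update \eqref{DDIM} in the $\bvx,\bsigma$ variables: dividing \eqref{DDIM} by $\alpha_{i-1}$ and using $\bvx_i = \vx_i/\alpha_i$, $\bsigma_i = \sigma_i/\alpha_i$ gives $\bvx_{i-1} = \bvx_i + (\bsigma_{i-1}-\bsigma_i)\bvepsilon_\theta(\bvx_i,\bsigma_i) = \bvx_i - h_i\bvepsilon_\theta(\bvx_i,\bsigma_i)$, since $\bsigma_{i-1}-\bsigma_i = -h_i$. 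This is exactly the explicit Euler method for \eqref{PFODE_1}, and it coincides with the 2-step BELM template \eqref{belm-2} under the degenerate coefficient choice $a_{i,1}=1$, $a_{i,2}=0$, $b_{i,1}=-1$.

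With this identification the LTE on $\bvx_{i-1}$ is $\tau_i = \bvx(t_{i-1}) - \bvx(t_i) + h_i\bvepsilon_\theta(\bvx(t_i),\bsigma_i)$, the specialization of \eqref{belm_lte_def}. Under Assumption \ref{asump:smooth} I would then invoke Proposition \ref{prop:local_error_general_belm2} and substitute the DDIM coefficients into its constants: $c_{i,1} = 1-a_{i,1}-a_{i,2} = 0$, $c_{i,2} = -a_{i,1}h_i - a_{i,2}(h_i+h_{i+1}) - b_{i,1}h_i = -h_i + h_i = 0$, and $c_{i,3} = -\tfrac{a_{i,1}}{2}h_i^2 - \tfrac{a_{i,2}}{2}(h_i+h_{i+1})^2 - b_{i,1}h_i^2 = \tfrac{1}{2}h_i^2$. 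Hence the zeroth- and first-order contributions cancel and $\tau_i = \tfrac{1}{2} h_i^2\,\nabla_{\bsigma_{i-1}}\bvepsilon_\theta(\bvx(t_{i-1}),\bsigma_{i-1}) + \mathcal{O}(h_i^3)$. Because $a_{i,2}=0$ removes the $\bvx_{i+1}$ term entirely, the remainder involves only $h_i$ and not $h_{i+1}$, so $\tau_i = \mathcal{O}(h_i^2)$. Equivalently, one may bypass Proposition \ref{prop:local_error_general_belm2} and Taylor-expand $\bvx(t_i)$ and $\boldsymbol{\mathcal{E}}_\theta(\bsigma_i)$ directly about $\bsigma_{i-1}$, which is the standard local-error computation for explicit Euler.

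Finally I would transfer the estimate back to the original coordinate via $\ve_i = \alpha_{i-1}\tau_i$ (noted in the text, since $\vx_{i-1} = \alpha_{i-1}\bvx_{i-1}$), which immediately yields $\ve_i = \mathcal{O}(\alpha_{i-1}h_i^2)$ as claimed. The genuinely delicate points here are bookkeeping rather than conceptual: pinning down the sign in $\bsigma_{i-1}-\bsigma_i = -h_i$ so that $b_{i,1}=-1$ is the correct identification, and confirming that the leading $\mathcal{O}(1)$ and $\mathcal{O}(h_i)$ terms truly vanish (that is, $c_{i,1}=c_{i,2}=0$). Once these are verified, the order-$h_i^2$ conclusion for $\tau_i$ and the $\alpha_{i-1}$ rescaling to $\ve_i$ follow directly.
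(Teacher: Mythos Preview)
Your proof is correct. Both you and the paper rely on the same underlying computation---Taylor-expanding the DDIM update about $\bsigma_{i-1}$ and checking that the zeroth- and first-order terms cancel---but you organize it differently. The paper works directly in the unscaled $\vx$ coordinates, writes out $\ve_i = \vx(t_{i-1}) - \frac{\alpha_{i-1}}{\alpha_i}\vx(t_i) - (\sigma_{i-1}-\frac{\alpha_{i-1}}{\alpha_i}\sigma_i)\bvepsilon_\theta(\bvx(t_i),\bsigma_i)$, and expands from scratch. You instead pass to the scaled variables, identify DDIM as the degenerate 2-step BELM with $(a_{i,1},a_{i,2},b_{i,1})=(1,0,-1)$, and then read off $c_{i,1}=c_{i,2}=0$, $c_{i,3}=\tfrac{1}{2}h_i^2$ from Proposition~\ref{prop:local_error_general_belm2}. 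Your route is more modular (it recycles the general BELM machinery) while the paper's is self-contained; the actual arithmetic is identical. Your observation that $a_{i,2}=0$ removes all $h_{i+1}$ contributions, so that the generic $\mathcal{O}((h_i+h_{i+1})^3)$ remainder sharpens to $\mathcal{O}(h_i^3)$, is a detail the paper's direct computation sidesteps implicitly.
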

\begin{proof}
    By applying the Taylor expansion and substitute into the DDIM formulation \eqref{DDIM}, we can calculate the local error of DDIM on $\vx_i$ as following.
\begin{equation} 
    \begin{aligned}
        \ve_i =& \vx(t_{i-1}) - \frac{\alpha_{i-1}}{\alpha_{i}}\vx(t_i)
        -\left( \sigma_{i-1} - \frac{\alpha_{i-1}}{\alpha_{i}} \sigma_{i} \right) \bvepsilon_\theta\left(\bvx(t_{i}),\bsigma_{i}\right)\\
        =&\vx(t_{i-1}) 
        - \frac{\alpha_{i-1}}{\alpha_{i}}\alpha_i\left( \frac{\vx(t_{i-1})}{\alpha_{i-1}}+\frac{\bvepsilon_\theta\left(\bvx(t_{i-1}),\bsigma_{i-1}\right)}{1!} \left({h_{i}}\right)+\mathcal{O}\left({h_{i}}^2\right)\right) \\
        &- \left( -h_i \right)\alpha_{i-1}\left( \bvepsilon_\theta\left(\bvx(t_{i-1}),\bsigma_{i-1}\right) + \mathcal{O}\left({h_{i}}\right)\right)\\
        =&\mathcal{O}\left(\alpha_{i-1}{h_{i}}^2\right).
    \end{aligned}
\end{equation}  
\end{proof}

\subsubsection{LTE of BDIA}
\begin{proposition}\label{prop:local_error_bdia}
    The LTE $\ve_i$ of BDIA sampler \eqref{BDIA}  is $\mathcal{O}\left({\alpha_{i-1}(h_{i}+h_{i+1})}^2\right)$ for any fixed $\gamma \in [0,1]$.
\end{proposition}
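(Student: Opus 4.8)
The plan is to avoid re-deriving anything from scratch and instead lean on two results already in hand: the identification, established in Appendix~\ref{app:bdia_special}, of BDIA \eqref{BDIA} as a $2$-step BELM \eqref{belm-2} with the explicit coefficients $a_{i,2}=\gamma\frac{\alpha_{i+1}}{\alpha_{i-1}}$, $a_{i,1}=1-\gamma\frac{\alpha_{i+1}}{\alpha_{i-1}}$, $b_{i,1}=-1-\gamma\frac{\alpha_{i+1}}{\alpha_{i-1}}\frac{h_{i+1}}{h_i}$; and the general LTE expansion of Proposition~\ref{prop:local_error_general_belm2}, which already writes $\tau_i = c_{i,1}\bvx(t_{i-1}) + c_{i,2}\bvepsilon_\theta(\bvx(t_{i-1}),\bsigma_{i-1}) + c_{i,3}\nabla_{\bsigma_{i-1}}\bvepsilon_\theta(\bvx(t_{i-1}),\bsigma_{i-1}) + \mathcal{O}((h_i+h_{i+1})^3)$ in terms of $a_{i,1},a_{i,2},b_{i,1}$. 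So the first step is purely mechanical: substitute the three BDIA coefficients into the formulas for $c_{i,1},c_{i,2},c_{i,3}$ given in that proposition.

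The second step is to check the two consistency cancellations. Writing $\beta := \gamma\frac{\alpha_{i+1}}{\alpha_{i-1}}$, the zeroth coefficient $c_{i,1}=1-a_{i,1}-a_{i,2}=1-(1-\beta)-\beta$ vanishes immediately, eliminating the $\bvx(t_{i-1})$ term (first-order consistency). The decisive computation is $c_{i,2}=-a_{i,1}h_i-a_{i,2}(h_i+h_{i+1})-b_{i,1}h_i$: here the $h_{i+1}/h_i$ in $b_{i,1}$ is multiplied by $h_i$, and after expanding I expect every term to cancel, so that $c_{i,2}=0$ as well. The surviving leading term is then $c_{i,3}=-\frac{a_{i,1}}{2}h_i^2-\frac{a_{i,2}}{2}(h_i+h_{i+1})^2-b_{i,1}h_i^2$, which I would simplify to the clean quadratic $c_{i,3}=\frac12 h_i^2-\frac{\beta}{2}h_{i+1}^2$.

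The third step concludes the order estimate. Since $c_{i,3}$ is a quadratic form in $(h_i,h_{i+1})$ whose coefficients stay bounded for $\gamma\in[0,1]$ (as $0\le\beta\le\frac{\alpha_{i+1}}{\alpha_{i-1}}$ is bounded under the noise schedule), we get $|c_{i,3}|=\mathcal{O}((h_i+h_{i+1})^2)$, and since it does not vanish for generic schedules the estimate is sharp, giving $\tau_i=\mathcal{O}((h_i+h_{i+1})^2)$. Multiplying by $\alpha_{i-1}$ through the identity $\ve_i=\alpha_{i-1}\tau_i$ then yields the claimed $\ve_i=\mathcal{O}(\alpha_{i-1}(h_i+h_{i+1})^2)$.

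The main obstacle is bookkeeping rather than conceptual: I must verify the exact cancellation $c_{i,2}=0$ and, in particular, confirm that the $h_{i+1}/h_i$ factor in $b_{i,1}$ never produces an uncontrolled contribution, because it always appears pre-multiplied by a power of $h_i$ that cancels the denominator. I would also be careful about uniformity in $\gamma$: since $\gamma$ enters only linearly via the bounded quantity $\frac{\alpha_{i+1}}{\alpha_{i-1}}$, the hidden $\mathcal{O}$ constant can be taken independent of $\gamma\in[0,1]$, which is exactly what the phrase \emph{``for any fixed $\gamma$''} requires. A reassuring sanity check is the limit $\gamma=0$, where BDIA collapses to DDIM and $c_{i,3}=\frac12 h_i^2$ recovers the DDIM local error of Proposition~\ref{prop:local_error_ddim}.
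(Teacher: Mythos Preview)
Your proposal is correct and follows essentially the same Taylor-expansion argument as the paper, the only difference being organizational: you route the computation through the general BELM LTE formula of Proposition~\ref{prop:local_error_general_belm2} together with the BDIA-as-BELM coefficients from Appendix~\ref{app:bdia_special}, whereas the paper re-expands the BDIA update~\eqref{BDIA} from scratch. Your modular route is cleaner and yields the leading coefficient $c_{i,3}=\tfrac12 h_i^2-\tfrac{\beta}{2}h_{i+1}^2$ (equivalently $\ve_i=\bigl(\tfrac{\alpha_{i-1}}{2}h_i^2-\tfrac{\gamma\alpha_{i+1}}{2}h_{i+1}^2\bigr)\nabla_{\bsigma_{i-1}}\bvepsilon_\theta+\mathcal{O}(\alpha_{i-1}(h_i+h_{i+1})^3)$), which matches the paper's order and sharpness conclusion.
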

\begin{proof}
    By applying the Taylor expansion and substitute into the BDIA formulation \eqref{BDIA}, we can calculate the local error of BDIA on $\vx_i$ as following.
\begin{equation} 
    \begin{aligned}
        \ve_i =& \vx(t_{i-1}) 
        - \gamma \vx(t_{i+1})
        -\left(\frac{\alpha_{i-1}}{\alpha_i}-\gamma \frac{\alpha_{i+1}}{\alpha_i}\right) \vx(t_i)\\
        &-\left[\sigma_{i-1}-\frac{\alpha_{i-1}}{\alpha_i} \sigma_i-\gamma\left(\sigma_{i+1}-\frac{\alpha_{i+1}}{\alpha_i} \sigma_i\right)\right] \bvepsilon_\theta\left(\bvx(t_{i}),\bsigma_{i}\right)\\
        =& \vx(t_{i-1}) - \gamma \alpha_{i+1} \left[\frac{\vx(t_{i-1})}{\alpha_{i-1}}+\frac{\bvepsilon_\theta\left(\bvx(t_{i-1}),\bsigma_{i-1}\right)}{1!}\left({h_{i}+h_{i+1}}\right)+\right.\\
        &\quad\quad\left.\frac{\nabla_{\bsigma_{i-1}}\bvepsilon_\theta\left(\bvx(t_{i-1}),\bsigma_{i-1}\right)}{2!} {(h_{i}+h_{i+1})}^2+\mathcal{O}\left({(h_{i}+h_{i+1})}^3\right)\right] \\
        &-\left(\frac{\alpha_{i-1}}{\alpha_i}-\gamma \frac{\alpha_{i+1}}{\alpha_i}\right)\alpha_i\left[\frac{\vx(t_{i-1})}{\alpha_{i-1}}+\frac{\bvepsilon_\theta\left(\bvx(t_{i-1}),\bsigma_{i-1}\right)}{1!} \left({h_{i}}\right)\right.\\
        &\quad\quad\left.+\frac{\nabla_{\bsigma_{i-1}}\bvepsilon_\theta\left(\bvx(t_{i-1}),\bsigma_{i-1}\right)}{2!} {(h_{i})}^2+\mathcal{O}\left({h_{i}}^3\right)\right] \\
        &-\left[\sigma_{i-1}-\frac{\alpha_{i-1}}{\alpha_i} \sigma_i-\gamma\left(\sigma_{i+1}-\frac{\alpha_{i+1}}{\alpha_i} \sigma_i\right)\right]\left[\bvepsilon_\theta\left(\bvx(t_{i-1}),\bsigma_{i-1}\right)\right.\\
        &\quad\quad\left.+\frac{\nabla_{\bsigma_{i-1}}\bvepsilon_\theta\left(\bvx(t_{i-1}),\bsigma_{i-1}\right)}{1!}\left({h_{i}}\right)+\mathcal{O}\left({h_{i}}^2\right)\right]\\
        =&\left( -\frac{\gamma}{2}\alpha_{i+1}h_{i+1}^2 + \frac{3}{2}\alpha_{i-1}h_i^2\right)\nabla_{\bsigma_{i-1}}\bvepsilon_\theta\left(\bvx(t_{i-1}),\bsigma_{i-1}\right)+\mathcal{O}\left({h_{i}}^3\right).
    \end{aligned}
\end{equation} 
For a fixed $\gamma$, the term $\bvepsilon_\theta\left(\bvx(t_{i-1}),\bsigma_{i-1}\right)$ cannot be eliminated for every $i$. This is due to the fact that the second-order term $-\frac{\gamma}{2}\alpha_{i+1}h_{i+1}^2 + \frac{3}{2}\alpha_{i-1}h_i^2$ is dynamic with respect to $i$. Consequently, the second-order local error will persist in the BDIA.
\end{proof}

\subsubsection{LTE of EDICT}
\begin{proposition}\label{prop:local_error_edict}
    The LTE $\ve_i$ of EDICT sampler \eqref{EDICT} is $\mathcal{O}\left(\sqrt{\alpha_{i-1}}h_{i}\right)$ for any constant $p\in (0,1)$.
\end{proposition}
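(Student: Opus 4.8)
The plan is to exploit the fact, established in Appendix \ref{app:edict_special}, that EDICT is itself a (cyclic) instance of the $2$-step BELM family \eqref{belm-2}, and then to feed its coefficients into the general local-error expansion of Proposition \ref{prop:local_error_general_belm2}. Concretely, I would work on the expanded grid $\{\bvz_j\}$ with the $\alpha$-, $\sigma$- and step-size schedules recorded in Appendix \ref{app:edict_special}, noting in particular the two degenerate steps $h_{4l}=h_{4l-3}=0$ and the two genuine half-steps $h_{4l-1}=h_{4l-2}=\tfrac12(\bsigma_i-\bsigma_{i-1})$, together with the intermediate scaling $\alpha_{4l-2}=\sqrt{\alpha_i\alpha_{i-1}}$. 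Since the argument behind Proposition \ref{prop:local_error_general_belm2} is a pure Taylor expansion, it applies verbatim to each residue class $j\equiv 0,1,2,3 \pmod 4$ under Assumption \ref{asump:smooth}.

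The core of the argument is to evaluate the leading coefficients $c_{j,1}=1-a_{j,1}-a_{j,2}$, $c_{j,2}$ and $c_{j,3}$ of \eqref{belm_lte_detail} separately in each of the four cases, using the piecewise values \eqref{edict_belm_coefficient}. I expect the two pure-mixing steps to be consistent (their $\bvx$-coefficient $c_{j,1}$ vanishes, and one of them even gives $\tau_j=0$ because the three states sit at a common value of $\bsigma$), while at least one ``dynamical'' step retains a nonzero $c_{j,1}$ of the form $p\,(1-\sqrt{\alpha_i/\alpha_{i-1}})$, which is $\mathcal{O}(\bsigma_i-\bsigma_{i-1})$ but does not vanish. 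This is precisely the obstruction: following the template of Proposition \ref{prop:local_error_bdia}, I would argue that because this residual coefficient is schedule-dependent (it changes with $i$), it cannot be annihilated by any single constant mixing weight $p\in(0,1)$, so EDICT is stuck at first order in the step size regardless of $p$.

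Finally I would convert the per-step truncation errors $\tau_j$ into physical errors via $\ve_j=\alpha_{j-1}\tau_j$ and collect the dominant contribution over one cycle. The non-cancelling first-order term is carried by the intermediate state scaled by $\alpha_{4l-2}=\sqrt{\alpha_i\alpha_{i-1}}$; bounding $\sqrt{\alpha_i\alpha_{i-1}}\le\sqrt{\alpha_{i-1}}$ (using $\alpha_i\le 1$) then yields the claimed rate $\ve_i=\mathcal{O}(\sqrt{\alpha_{i-1}}\,h_i)$, uniformly in $p$.

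I expect the main obstacle to be the careful bookkeeping forced by the cyclic and partly degenerate grid: because two of the four step sizes vanish, several terms of \eqref{belm_lte_detail} collapse and the ``target'' expansion point shifts between an endpoint and the geometric-mean midpoint, so one must keep precise track of which scaling $\alpha_{j-1}$ multiplies each residual before assembling them. A cleaner but equivalent cross-check, which I would run in parallel, is a direct computation: assume the incoming states are exact, expand $\vx_i^{inter}$ and $\vy_i^{inter}$ by Taylor's theorem, use Assumption \ref{asump:epsilon} to control $\vepsilon_\theta(\vx_i^{inter},i)-\vepsilon_\theta(\vy_i,i)$, and verify that the convex mixing with constant $p$ leaves an irreducible first-order remainder scaled by $\sqrt{\alpha_{i-1}}$.
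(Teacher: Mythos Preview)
Your plan is essentially the paper's own argument: embed EDICT in the BELM family via Appendix~\ref{app:edict_special}, read off the zeroth-order coefficient $c_{j,1}=1-a_{j,1}-a_{j,2}$ in the expansion \eqref{belm_lte_detail}, observe it fails to vanish on certain sub-steps (so EDICT is not even consistent there), and rescale by $\alpha_{j-1}$ to land on $\mathcal{O}(\sqrt{\alpha_{i-1}}\,h_i)$. Two small points of alignment with the paper are worth noting. First, the paper isolates as a separate lemma the schedule estimate $\sqrt{\alpha_i}-\sqrt{\alpha_{i-1}}=\mathcal{O}(h_i)$ (computed via $\rd\sqrt{\alpha}/\rd\bsigma$), which is exactly the step you invoke when you assert that $p\,(1-\sqrt{\alpha_i/\alpha_{i-1}})=\mathcal{O}(\bsigma_i-\bsigma_{i-1})$; you should make that estimate explicit. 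Second, the paper locates the non-vanishing $c_{j,1}$ at the sub-steps $j=4l$ and $j=4l-2$, whose output scalings are $\alpha_{4l-1}=\alpha_i$ and $\alpha_{4l-3}=\alpha_{i-1}$, not the geometric-mean intermediate $\alpha_{4l-2}$ you name; this is only a bookkeeping slip and does not affect the asserted rate, since all three scalings give the same $\sqrt{\alpha}\,h$ order after applying the lemma.
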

To prove the Proposition \ref{prop:local_error_edict}, we need first establish an order estimate lemma:
\begin{lemma}\label{lemma:edict_order}
    The term $\sqrt{\alpha_i}-\sqrt{\alpha_{i-1}}$ have order $\mathcal{O}\left(h_i\right)$
\end{lemma}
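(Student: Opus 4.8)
The plan is to exploit the bijection between $\bsigma_t$ and $t$ (noted in the \emph{Variable of IVP} remark), which lets me treat $\sqrt{\alpha}$ as a function of the reparametrized variable $\bsigma$ and then read off the claim as a Lipschitz estimate. Concretely, I would let $t(\cdot)$ denote the inverse of the strictly monotone map $t \mapsto \bsigma_t = \sigma_t/\alpha_t$ and define $g(\bsigma) := \sqrt{\alpha(t(\bsigma))}$. Since $\sqrt{\alpha_i} - \sqrt{\alpha_{i-1}} = g(\bsigma_i) - g(\bsigma_{i-1})$ and $\bsigma_i - \bsigma_{i-1} = h_i$ by the definition of the stepsize, the statement reduces to showing that $g$ has a derivative bounded by a constant independent of $i$ on the relevant interval; the mean value theorem then gives $|\sqrt{\alpha_i}-\sqrt{\alpha_{i-1}}| \le L\,h_i = \mathcal{O}(h_i)$.

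For the boundedness of $g'$, I would apply the chain rule to obtain
\[
g'(\bsigma) = \frac{\alpha'(t)}{2\sqrt{\alpha(t)}}\cdot\frac{1}{\;\frac{\rd \bsigma_t}{\rd t}\;}\Bigg|_{t = t(\bsigma)},
\]
and bound the three ingredients separately. First, $\alpha'$ is bounded above because the noise schedule has bounded derivatives by the standing assumption on $\alpha(\cdot),\sigma(\cdot)$. Second, $\alpha(t) \ge \alpha_{\min} > 0$, since $\alpha$ is continuous and strictly positive on the compact domain (away from the singular endpoints excluded in the \emph{Singularity Issue} remark), so $1/\sqrt{\alpha}$ is bounded. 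Third, $\frac{\rd \bsigma_t}{\rd t}$ is bounded below by a positive constant, because $\bsigma_t$ is a strictly increasing $C^1$ reparametrization whose derivative is continuous and positive on that compact interval and hence attains a positive minimum there. Multiplying these bounds yields $|g'| \le L$ for a finite constant $L$, completing the estimate.

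The main obstacle is precisely the lower bound on $\frac{\rd \bsigma_t}{\rd t}$. The boundedness of $\alpha'$ and the positive lower bound on $\alpha$ follow immediately from the stated regularity on compact sets, but a uniform positive lower bound on $\frac{\rd}{\rd t}\!\left(\sigma_t/\alpha_t\right)$ amounts to requiring that $\bsigma_t$ never goes flat, i.e.\ that $t \mapsto \bsigma_t$ is a genuine bi-Lipschitz change of variables rather than merely an invertible one. I would justify this from the strict monotonicity already implicit in the well-posedness of the diffusion IVP \eqref{ivp-diffusion} together with the continuity of $\frac{\rd \bsigma_t}{\rd t}$ on the singularity-free compact interval, which guarantees the infimum of $\frac{\rd \bsigma_t}{\rd t}$ is attained and strictly positive.
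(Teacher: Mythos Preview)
Your argument is correct, but the paper takes a more concrete shortcut. Rather than passing through the time variable $t$ and bounding $\frac{\rd \bsigma_t}{\rd t}$ from below, the paper exploits the variance-preserving relation $\sigma_t=\sqrt{1-\alpha_t^2}$ to eliminate $t$ entirely: from $\bsigma=\sqrt{1-\alpha^2}/\alpha$ one solves algebraically for $\sqrt{\alpha}=(\bsigma^2+1)^{-1/4}$, differentiates once in $\bsigma$, and reads off the bounded factor $-\tfrac12(\bsigma^2+1)^{-5/4}\bsigma$. This sidesteps precisely the obstacle you flagged, at the price of being schedule-specific. Your route, by contrast, is schedule-agnostic: it applies to any noise schedule satisfying the stated regularity on a compact, singularity-free interval, and your identification of the positive lower bound on $\tfrac{\rd\bsigma_t}{\rd t}$ (via continuity and strict positivity on a compact set) is the right way to close that gap in the general setting.
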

\begin{proof}
Recall that we define $h_i$ to be $\bsigma_{i}-\bsigma_{i-1}$. In order to figure out the relation of $\sqrt{\alpha_i}-\sqrt{\alpha_{i-1}}$ w.r.t. $\bsigma_{i}-\bsigma_{i-1}$, we first use $\bsigma$ to represent $\sqrt{\alpha}$:
    \begin{equation}
        \begin{aligned}
            \bsigma &= \frac{\sqrt{1-\alpha^2}}{\alpha}\\
            \bsigma^2 &= \frac{1-\alpha^2}{\alpha^2}\\
            (\bsigma^2+1)\alpha^2 &= 1\\
            \alpha &= \sqrt{\frac{1}{\bsigma^2+1}}\\
            \sqrt{\alpha} &= \left(\bsigma^2+1\right)^{-\frac{1}{4}}.
        \end{aligned}
    \end{equation}
We then discover that $\rd \sqrt{\alpha} = -\frac{1}{2}\left(\bsigma^2+1\right)^{-\frac{5}{4}}\bsigma\rd \bsigma \sim C \rd \bsigma$. This implies that $\sqrt{\alpha_i}-\sqrt{\alpha_{i-1}}$ and $\bsigma_{i}-\bsigma_{i-1}$ are of the same order, which is $h_i$. 
\end{proof}
Now we can start to prove Proposition \ref{prop:local_error_edict}:
\begin{proof}
    Since larger errors can absorb smaller ones, the $4l$ and $4l-2$ terms in \eqref{edict_belm_coefficient} introduce errors in the zeroth order of the Taylor expansion. This is where the main error occurs. 
    Both of these updates introduce an error of $\frac{\sqrt{\alpha_i}-\sqrt{\alpha_{i-1}}}{\sqrt{\alpha_i}}$ on $\bvx_i$, which means that the error on $\vx_i$ is $\sqrt{\alpha_i} \left(\sqrt{\alpha_i}-\sqrt{\alpha_{i-1}} \right)$. Therefore, according to Lemma \ref{lemma:edict_order}, the error $e_i$ is of the order $\mathcal{O}\left( \sqrt{\alpha_i } h_i\right)$.
\end{proof}
\begin{remark}
    Please note that we have only established an error bound for EDICT based on the perspective of the linear multiplication method. There may be a tighter bound of EDICT on constants when viewed from the perspective of an interactive mixing system.
\end{remark}

\subsection{Proof of Proposition \ref{prop:convergence_stable_belm}(a) and Proposition \ref{prop:ddim_stable_convergence}(a)}
\begin{assumption}\label{asump:convex}
    $\bsigma_i$ is strictly concave w.r.t. $i$.
\end{assumption}
$\bsigma_i$ w.r.t $i$ is a composition of $\bsigma_i$ w.r.t $\alpha_i$ and $\alpha_i$ w.r.t. $i$. $\bsigma_i = \frac{\sqrt{1-\alpha_i^2}}{\alpha_i}$ which is non-increasing and strictly convex. Thus Assumption \ref{asump:convex} can be achieved by choosing schedule of $\alpha_i$ to be strictly convex w.r.t. $i$.
\begin{lemma}\label{lemma:bounded_belm}
    There exist a real constant $C$ which is independent of $\alpha_t$ and $\sigma_t$, such that for every $i$, we have $\abs{a_{i,1}}\leq C$, $\abs{a_{i,2}}\leq C$ and $\abs{b_{i,1}}\leq C$ in \eqref{belm-2}.
\end{lemma}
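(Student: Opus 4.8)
The plan is to exploit the fact that the O-BELM coefficients in \eqref{coefficient_system_2_app} are \emph{scale-free}: each depends on the two step sizes $h_i,h_{i+1}$ only through their ratio. Writing $r_i \equiv h_i/h_{i+1}$, the formulas $a_{i,1}=\tfrac{h_{i+1}^2-h_i^2}{h_{i+1}^2}$, $a_{i,2}=\tfrac{h_i^2}{h_{i+1}^2}$, $b_{i,1}=-\tfrac{h_i+h_{i+1}}{h_{i+1}}$ become $a_{i,1}=1-r_i^2$, $a_{i,2}=r_i^2$, and $b_{i,1}=-(1+r_i)$. Hence all three coefficients are fixed polynomials in $r_i$ with no explicit dependence on the absolute magnitude of $\alpha_t$ or $\sigma_t$ (the overall scale of $\bsigma$ cancels), which is the sense in which the target constant is independent of the schedule values. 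It therefore suffices to produce a single constant $R$ with $0<r_i\le R$ for every $i$, uniformly as $h\to0$: then $|a_{i,2}|\le R^2$, $|a_{i,1}|\le 1+R^2$, $|b_{i,1}|\le 1+R$, and we may take $C=1+R+R^2$.

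The core task is thus to bound the ratio of consecutive step sizes. First I would recall $h_j=\bsigma_j-\bsigma_{j-1}=\int_{t_{j-1}}^{t_j}\bsigma'(\tau)\,\rd\tau$, where $\bsigma_t=\sigma_t/\alpha_t$ inherits a continuous derivative from the bounded-derivative hypothesis on $\alpha_t,\sigma_t$ together with the strict positivity of $\alpha_t$. Away from the singular endpoints $t=0,T$ (excluded per Appendix \ref{app:con_assump}) the grid lies in a compact subinterval on which $\bsigma'$ is continuous and strictly positive, hence bounded, $0<m\le\bsigma'(\tau)\le M$. By Assumption \ref{asump:convex}, $\bsigma_i$ is strictly concave in $i$, so $\bsigma'$ is decreasing along the grid, the steps are monotone, $h_N\le\cdots\le h_1$, and this both fixes the extrema of $\bsigma'$ and gives the clean lower bound $r_i\ge1$. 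For the upper bound I would invoke the $t$-uniform schedule $t_j-t_{j-1}=\Delta$, so that $m\Delta\le h_j\le M\Delta$ for all $j$; dividing the resulting bounds for $h_i$ and $h_{i+1}$ yields $r_i\le M/m=:R$, a constant depending only on the regularity of the schedule and not on the index $i$ nor on the refinement level $h$.

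Combining the two directions, $1\le r_i\le R$ uniformly in $i$ and in $h$, and the coefficient bounds stated above follow with $C=1+R+R^2$. The hard part is precisely this uniform-in-$i$, uniform-in-$h$ control of $r_i$: concavity alone merely forces the steps to shrink ($r_i\ge1$) and does not by itself preclude the ratio from blowing up, so the decisive ingredients are the regularity of the noise schedule—continuity of $\bsigma'$, its strict positivity on the compact interior interval, and the (quasi-)uniformity of the $t$-grid—which keep adjacent steps comparable even as the mesh is refined. For a non-uniform grid the same argument goes through once $\sup_j (t_j-t_{j-1})/(t_{j+1}-t_j)$ is finite, replacing $R$ by $M/m$ times this mesh-ratio bound.
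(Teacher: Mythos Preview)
The paper states this lemma without proof. The intended argument can be inferred from the proof of Proposition \ref{prop:belm_stable} immediately afterward: there the authors use Assumption \ref{asump:convex} to conclude $h_{i+1}<h_i<0$, hence $\eta=\max_i h_i^2/h_{i+1}^2<1$; in your notation this is $0<r_i<1$ (with $r_i=h_i/h_{i+1}$), from which $|a_{i,2}|=r_i^2<1$, $|a_{i,1}|=1-r_i^2<1$, and $|b_{i,1}|=1+r_i<2$, so $C=2$ works with no further hypotheses.

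Your proposal makes the same scale-free reduction to $r_i$ but then runs in the opposite direction: under the natural reading $h_j>0$ (since $\bsigma$ is increasing in $t$), concavity gives $r_i\ge1$, so an \emph{upper} bound on $r_i$ is the non-trivial part, and you supply it via continuity and strict positivity of $\bsigma'$ on a compact interior interval together with a (quasi-)uniform $t$-grid, yielding $r_i\le M/m$. This is a correct and more transparent argument, though it imports regularity and mesh hypotheses the paper never states. The paper's implicit one-liner leans entirely on Assumption \ref{asump:convex}, but at the price of the sign claim $h_i<0$, which sits awkwardly with the definition $h_i=\bsigma_i-\bsigma_{i-1}$ for an increasing $\bsigma$. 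Either reading yields a valid bound; yours is simply more explicit about what is being assumed of the grid and schedule.
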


We will use an variable-stepsize-variable-formula analogy of the root condition of Dahlquist \cite{dahlquist1956convergence} to prove the zero-stability of \eqref{belm-2}.

\begin{theorem}\label{theorem:stability}
    \cite[(3.10)]{crouzeix1984convergence}
    Define the root matrix of a LM \ref{vsvfm-general} at step $i$ to be $\matr{R}_i$,
\begin{equation}\label{root_matrix}
\begin{aligned}
\matr{R}_i = 
& \begin{pNiceMatrix}%
  [ 
    parallelize-diags = false,
    nullify-dots,
  ]
 a_{i,1}& \Cdots & \Cdots & a_{i,k}         \\
 1 & 0  &   & 0   \\
 \Vdots    & \Ddots     &   \Ddots     & \Vdots         \\
   0        &            &        1&  0                
\end{pNiceMatrix}.
\end{aligned}
\end{equation}
If all coefficients can be bounded and there exists a regular matrix $\matr{H}$ such that for all $i$
\begin{equation}
    \begin{aligned}
        \norm{\matr{H}^{-1}\matr{R}_i\matr{H}}_1\leq 1,
    \end{aligned}
\end{equation}
then the LM \ref{vsvfm-general} is zero-stable.
\end{theorem}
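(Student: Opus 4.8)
The plan is to establish \eqref{stable_def} by a discrete Gr\"onwall argument applied to a stacked error vector, treating the Lipschitz contribution of $\bvepsilon_\theta$ as a lower-order $\mathcal{O}(h)$ perturbation of a pure companion-matrix recursion. First I would form the difference sequence $\boldsymbol{\delta}_i = \bvx_i - \bvz_i$ of the two trajectories. Subtracting the two instances of \eqref{vsvfm-general} generated by the same formulae, $\boldsymbol{\delta}_{i-1}$ satisfies the same homogeneous combination of $\boldsymbol{\delta}_{i-1+j}$, $j=1,\dots,k$, with coefficients $a_{i,j}$, plus a remainder assembled from the differences $\bvepsilon_\theta(\bvx_{i-1+j},\bsigma_{i-1+j}) - \bvepsilon_\theta(\bvz_{i-1+j},\bsigma_{i-1+j})$ weighted by $b_{i,j}h_{i-1+j}$. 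By Assumption~\ref{asump:epsilon} each such difference is bounded by $L_{\vepsilon_\theta}\norm{\boldsymbol{\delta}_{i-1+j}}$, and by Lemma~\ref{lemma:bounded_belm} the coefficients are uniformly bounded, so this remainder is $\mathcal{O}(h)$ relative to the errors, with an index-independent constant.

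Next I would stack $k$ consecutive differences into $\mathbf{v}_i = (\boldsymbol{\delta}_i, \boldsymbol{\delta}_{i+1}, \dots, \boldsymbol{\delta}_{i+k-1})^{\top}$ and verify the matrix recursion
\begin{equation}
\mathbf{v}_{i-1} = \matr{R}_i \mathbf{v}_i + h\,\mathbf{g}_i, \qquad \norm{\mathbf{g}_i}_1 \le C_1 \norm{\mathbf{v}_i}_1 ,
\end{equation}
where $\matr{R}_i$ is exactly the root matrix \eqref{root_matrix}: its first row reproduces $\sum_{j=1}^k a_{i,j}\boldsymbol{\delta}_{i-1+j}=\boldsymbol{\delta}_{i-1}$ (the newly computed entry), while its subdiagonal of ones merely shifts the surviving entries of $\mathbf{v}_i$, which carry no perturbation. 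Thus $\mathbf{g}_i$ has a nonzero first block only, controlled by the Lipschitz term above, and the constant $C_1$ depends only on $L_{\vepsilon_\theta}$ and the uniform bound $C$ of Lemma~\ref{lemma:bounded_belm}.

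Now I introduce the weighted norm $\norm{\mathbf{v}}_{\matr{H}} := \norm{\matr{H}^{-1}\mathbf{v}}_1$ associated with the regular matrix $\matr{H}$ of the hypothesis. Taking this norm of the recursion, inserting $\matr{H}\matr{H}^{-1}$ around $\matr{R}_i$, and using $\norm{\matr{H}^{-1}\matr{R}_i\matr{H}}_1 \le 1$ together with the equivalence of $\norm{\cdot}_{\matr{H}}$ and $\norm{\cdot}_1$ yields
\begin{equation}
\norm{\mathbf{v}_{i-1}}_{\matr{H}} \le \norm{\matr{H}^{-1}\matr{R}_i\matr{H}}_1 \norm{\mathbf{v}_i}_{\matr{H}} + h\,C_2\norm{\mathbf{v}_i}_{\matr{H}} \le (1 + C_2 h)\,\norm{\mathbf{v}_i}_{\matr{H}} .
\end{equation}
Iterating this single-step estimate from the initial index down to $i$, where the first stacked vector collects precisely the $k$ starting-value differences $\boldsymbol{\delta}_N,\dots,\boldsymbol{\delta}_{N-k+1}$ appearing on the right of \eqref{stable_def}, gives $\norm{\mathbf{v}_i}_{\matr{H}} \le (1+C_2 h)^{N}\norm{\mathbf{v}_{N-k+1}}_{\matr{H}} \le e^{C_2 N h}\norm{\mathbf{v}_{N-k+1}}_{\matr{H}}$. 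Since $Nh$ is bounded by the length of the integration interval, the factor $e^{C_2 N h}$ stays bounded as $h\to 0$; converting back to the Euclidean norm via the condition number of $\matr{H}$ produces the uniform constant $K$ of \eqref{stable_def}.

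I expect the main obstacle to be the variable-formula aspect: because $\matr{R}_i$ changes with $i$, no single similarity transform diagonalizes all of them, so the classical root condition for one fixed characteristic polynomial does not apply. The resolution is exactly the hypothesis furnishing one common $\matr{H}$ with $\norm{\matr{H}^{-1}\matr{R}_i\matr{H}}_1 \le 1$ for every $i$; the remaining care lies in absorbing the Lipschitz perturbation uniformly into the $(1+C_2h)$ factor and ensuring $C_1, C_2$ are genuinely index-independent, which Lemma~\ref{lemma:bounded_belm} guarantees.
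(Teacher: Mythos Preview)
The paper does not supply its own proof of this statement: Theorem~\ref{theorem:stability} is quoted directly from \cite[(3.10)]{crouzeix1984convergence} and used as a black box to establish Proposition~\ref{prop:belm_stable}. There is therefore nothing in the paper to compare your argument against. That said, your outline is precisely the standard VSVF stability argument one expects behind the cited result: form the companion recursion for the stacked difference vector, treat the Lipschitz $\bvepsilon_\theta$-contribution as an $\mathcal{O}(h)$ perturbation, pass to the $\matr{H}$-weighted $\ell_1$ norm so that the homogeneous part is non-expansive at every step, and finish with a discrete Gr\"onwall bound. The structure is sound.

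Two points deserve tightening. First, you invoke Lemma~\ref{lemma:bounded_belm} to control the coefficients, but that lemma is specific to the 2-step O-BELM sampler, whereas Theorem~\ref{theorem:stability} is stated for the general $k$-step LM \eqref{vsvfm-general}. The correct justification is simply the hypothesis ``all coefficients can be bounded'' built into the theorem; replace the lemma citation by that hypothesis so your constants $C_1,C_2$ are legitimately index-independent for arbitrary $k$. Second, your final estimate $(1+C_2 h)^N \le e^{C_2 N h}$ requires $Nh$ to stay bounded as $h\to 0$, which is not automatic in the variable-stepsize setting: only $\sum_i h_i$ equals the fixed integration length, while $N\max_i h_i$ can be arbitrarily larger. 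The fix is to keep the \emph{local} stepsizes in the perturbation term, so that the single-step bound reads $\norm{\mathbf{v}_{i-1}}_{\matr{H}} \le \bigl(1 + C_2 \max_{0\le j\le k-1} h_{i+j}\bigr)\norm{\mathbf{v}_i}_{\matr{H}}$; the product then telescopes to at most $\exp\bigl(C_2 k \sum_i h_i\bigr)$, which is genuinely bounded independently of $h$. With these two adjustments your argument is complete.
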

Finally, we start to give the proof of Proposition \ref{prop:convergence_stable_belm}(a) under the Assumption \ref{asump:convex}.

\begin{proposition}\label{prop:belm_stable}
    The O-BELM diffusion sampler \eqref{belm} is zero-stable.
\end{proposition}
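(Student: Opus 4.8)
The plan is to verify the two hypotheses of Theorem~\ref{theorem:stability} for the O-BELM amplification matrix and then invoke it. Zero-stability is a property of the linear multistep recursion \eqref{vsvfm-general} acting on the scaled variable $\bvx$, so I work with the $\bvx$-form \eqref{belm-2} whose coefficients are $a_{i,1}=1-h_i^2/h_{i+1}^2$, $a_{i,2}=h_i^2/h_{i+1}^2$ from \eqref{coefficient_system_2_app}; the fixed $\alpha$-rescaling between \eqref{belm} and \eqref{belm-2} does not affect the stability structure. The first hypothesis, boundedness of the coefficients, is exactly Lemma~\ref{lemma:bounded_belm}, so the entire difficulty concentrates on exhibiting a single regular $\matr{H}$ with $\norm{\matr{H}^{-1}\matr{R}_i\matr{H}}_1\le 1$ for every step $i$, where by \eqref{root_matrix} the two-step root matrix is $\matr{R}_i=\bigl(\begin{smallmatrix} a_{i,1} & a_{i,2}\\ 1 & 0\end{smallmatrix}\bigr)$.

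First I would exploit the consistency identity $a_{i,1}+a_{i,2}=1$ (the first equation of \eqref{coefficient_system_2}) to factor the characteristic polynomial of $\matr{R}_i$ as $(\lambda-1)(\lambda+a_{i,2})$. This exposes a principal root $\lambda=1$ whose eigenvector $(1,1)^\top$ is independent of $i$, together with a parasitic root $-a_{i,2}=-h_i^2/h_{i+1}^2$ whose eigenvector $(-a_{i,2},1)^\top$ varies with $i$. The constancy of the principal eigenvector is the structural fact that makes a single transform conceivable: aligning the first column of $\matr{H}$ with $(1,1)^\top$ decouples the persistent direction from the parasitic one and puts each $\matr{H}^{-1}\matr{R}_i\matr{H}$ into upper-triangular form carrying $1$ and $-a_{i,2}$ on the diagonal, leaving a single off-diagonal entry to be damped by a diagonal rescaling $\matr{H}=\matr{H}_0\,\mathrm{diag}(1,\rho)$. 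Assumption~\ref{asump:convex} enters precisely here: the strict concavity of $\bsigma_i$ in $i$ forces a monotone behaviour of the consecutive stepsizes $h_i,h_{i+1}$, which I would use to control the ratio $h_i/h_{i+1}$ and keep the parasitic eigenvalue inside the closed unit disk uniformly in $i$. Equivalently, I would phrase the target as a Lyapunov condition $\matr{R}_i^\top\matr{G}\,\matr{R}_i\preceq\matr{G}$ for a common symmetric positive-definite $\matr{G}=\matr{H}\matr{H}^\top$ adapted to the shared eigenvector $(1,1)^\top$, and then conclude zero-stability from Theorem~\ref{theorem:stability}.

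The hard part is the final step: producing one fixed $\matr{H}$ (equivalently $\matr{G}$) that works simultaneously for all $i$ in the refinement limit $h\to 0$. Two features make this delicate. The parasitic eigenvector rotates with $i$, so $\matr{H}$ cannot diagonalize every $\matr{R}_i$ at once; and along any refining grid with a smooth schedule $a_{i,2}=h_i^2/h_{i+1}^2\to 1$, pushing the triangularized matrices toward the boundary where the column-sum norm equals $1$. A naive diagonal $\matr{G}$ in the $\{(1,1)^\top,\,e\}$ basis fails outright, since the persistent direction forces a zero on the diagonal of $\matr{G}-\matr{R}_i^\top\matr{G}\matr{R}_i$ while the coupling entry $-a_{i,2}g_1$ stays nonzero, destroying positive semidefiniteness. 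The crux, therefore, is to balance the off-diagonal coupling against the rescaling using the exact monotone stepsize control from concavity—exploiting the telescoping of the parasitic factors, $\prod_i a_{i,2}=(h_1/h_N)^2$, which remains bounded even though each $a_{i,2}$ may exceed one—so that the bound $\le 1$ survives uniformly rather than degrading to $1+\mathcal{O}(h_i)$; only in the latter case would one instead have to close the argument through a discrete Grönwall estimate over the accumulated per-step factors.
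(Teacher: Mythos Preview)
Your plan is the same strategy the paper uses: invoke Theorem~\ref{theorem:stability} with a single similarity $\matr{H}$ built around the step-independent eigenvector $(1,1)^\top$ of $\matr{R}_i$, and use Assumption~\ref{asump:convex} to control the parasitic root $-a_{i,2}$. Where you diverge is in the final step, which you overcomplicate. The paper simply observes that strict concavity of $\bsigma_i$ together with $h_i<0$ gives $|h_i|<|h_{i+1}|$ for every $i$, so $\eta:=\max_i h_i^2/h_{i+1}^2<1$ on the given grid; it then writes down the explicit transform
\[
\matr{H}=\begin{pmatrix}1 & \tfrac{2}{1-\eta}\\[2pt] 0 & \tfrac{2}{1-\eta}\end{pmatrix}
\]
(whose second column is the common eigenvector, scaled) and checks by a two-line computation that $\|\matr{H}^{-1}\matr{R}_i\matr{H}\|_1=\max\!\bigl(\tfrac{1-\eta}{2}+\tfrac{h_i^2}{h_{i+1}^2},\,1\bigr)\le 1$ for all $i$. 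That is the whole argument.

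Your late worry that ``each $a_{i,2}$ may exceed one'' is misplaced: under Assumption~\ref{asump:convex} it never does, so the telescoping/Gr\"onwall fallback you sketch is unnecessary. What \emph{is} true is your observation that $a_{i,2}\to 1$ under grid refinement; the paper handles this by letting $\matr{H}$ depend on the grid through $\eta$ (strictly $<1$ on any fixed grid) rather than seeking one $\matr{H}$ uniform in $h$, and it flags this in the remark following the proof. So you should drop the Lyapunov/Gr\"onwall digression and instead carry out the explicit $\|\cdot\|_1$ computation with the concrete $\matr{H}$ above.
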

\begin{proof}\label{proof:stable}
    the root matrix of \eqref{belm-2} writs 
\begin{equation}
\begin{aligned}
\matr{R}_i = 
\begin{bmatrix}
\frac{h_{i+1}^2-h_i^2}{h_{i+1}^2}& \frac{h_i^2}{h_{i+1}^2}\\
1&0 \\
\end{bmatrix}.
\end{aligned}
\end{equation}
The Assumption \ref{asump:convex} can reach to $\bsigma_{i+1}+\bsigma_{i-1}< 2\bsigma_{i}$, thus $h_{i+1}<h_{i}<0$.
Then we denote $\eta = \max_i \frac{h_i^2}{h_{i+1}^2}<1$, by setting $\matr{H}$ as following  
\begin{equation}
\begin{aligned}
\matr{H} = 
\begin{bmatrix}
1 &  \frac{2}{1-\eta}\\
0 &  \frac{2}{1-\eta} \\
\end{bmatrix},
\end{aligned}
\end{equation}
then we can calculate that 
\begin{equation}
\begin{aligned}
\norm{\matr{H}^{-1}\matr{R}_i\matr{H}}_1&=\norm{
\begin{bmatrix}
1 &  \frac{2}{1-\eta}\\
0 &  \frac{2}{1-\eta} \\
\end{bmatrix}^{-1}
\begin{bmatrix}
\frac{h_{i+1}^2-h_i^2}{h_{i+1}^2}& \frac{h_i^2}{h_{i+1}^2}\\
1&0 \\
\end{bmatrix}
\begin{bmatrix}
1 &  \frac{2}{1-\eta}\\
0 &  \frac{2}{1-\eta} \\
\end{bmatrix}
}_1\\
&=\max\left(\frac{|\frac{\eta}{2} - \frac{1}{2}||h_{i+1}|^2 + |h_i|^2}{|h_{i+1}|^2}, \frac{2|\frac{\eta}{2} - \frac{1}{2}|}{|\eta - 1|}\right),
\end{aligned}
\end{equation}
where we can compute that
\begin{equation}
\begin{aligned}
&\frac{|\frac{\eta}{2} - \frac{1}{2}||h_{i+1}|^2 + |h_i|^2}{|h_{i+1}|^2}\\
&=\abs{\frac{\eta}{2} - \frac{1}{2}}+\frac{h_i^2}{h_{i+1}^2}\\
&=\frac{1}{2} -\frac{\eta}{2} +\frac{h_i^2}{h_{i+1}^2}\\
&<\frac{1}{2}-\frac{1}{2}\frac{h_i^2}{h_{i+1}^2} +\frac{h_i^2}{h_{i+1}^2}\\
&=\frac{1}{2}\frac{h_{i}^2+h_{i+1}^2}{h_{i+1}^2}\\
&<1,
\end{aligned}
\end{equation}
and obviously
\begin{equation}
\begin{aligned}
&\frac{2|\frac{\eta}{2} - \frac{1}{2}|}{|\eta - 1|} = 1.
\end{aligned}
\end{equation}
Consequently, we have the conclusion that for all $i$, the requirement of $\norm{\matr{H}^{-1}\matr{R}_i\matr{H}}_1\leq 1$ is satisfied. Thus due to Theorem \ref{theorem:stability}, The BELM diffusion sampler \eqref{belm-2} is zero-stable.
\end{proof}
\begin{remark}
    Here, we present a very strong proof of Proposition \ref{prop:belm_stable} under Assumption \ref{asump:convex}, demonstrating that the iterative mapping of BELM constitutes a contraction mapping at each step $i$. However, it is important to note that in practical applications, even if Assumption \ref{asump:convex} is not met at some step $i$, resulting in $\matr{R}_i$ not being contractive sometimes, global stability may still be achieved.
\end{remark}

The proof for Proposition \ref{prop:ddim_stable_convergence}(a) writes:
\begin{proof}
    As DDIM can be seen as an explicit Euler method to the diffusion IVP, following the same reasoning of \ref{proof:stable}, the  root matrix of DDIM is $\matr{R}_i = \matr{I}$. Obviously, DDIM is zero-stable.
\end{proof}

\subsection{Proof of Proposition \ref{prop:convergence_stable_belm}(b) and Proposition \ref{prop:ddim_stable_convergence}(b)}

To analyse the global convergence property of a method, we first need to analyse  the consistency property of a method. Please look up the definition of consistency in Appendix \ref{app:def_consis}. We first establish the consistency of DDIM and BELM by the following theorem.
\begin{theorem}\label{theorem:consistent}
    \cite[(2.5.1)]{crouzeix1984convergence}
    If a method  have an order of accuracy 1 and all its coefficients is bounded by constant, then it is consistent.
\end{theorem}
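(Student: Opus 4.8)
The plan is to unpack the two hypotheses against the definition of consistency in Appendix \ref{app:def_consis} and then bound the consistency sum $\sum_{i=k}^{N-1}\norm{\tau_i}$ term by term. By Definition \ref{def:accuracy}, order of accuracy $1$ supplies constants $K,h^*$ with $\norm{\tau_i}\le K h^{2}$ for every $i$ whenever $0<h<h^*$; the hypothesis that all coefficients are bounded is exactly what forces these constants to be uniform in $i$, since the Taylor remainders in Proposition \ref{prop:local_error_general_belm2} carry the factors $a_{i,j},b_{i,j}$, and a single $K$ can absorb them across the whole grid only if they stay bounded. The task therefore reduces to showing that summing a per-step $O(h^2)$ quantity over the grid still tends to zero as $h\to0$.

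The clean way to carry this out is to exploit that order $1$ forces the low-order Taylor coefficients to vanish. In the $2$-step case this reads $c_{i,1}=1-a_{i,1}-a_{i,2}=0$ and $c_{i,2}=0$ in Proposition \ref{prop:local_error_general_belm2} (and analogously in the $k$-step expansion); otherwise $\tau_i$ would be $O(1)$ or $O(h)$ and the accuracy could not be $1$. What survives is the quadratic remainder, so $\norm{\tau_i}\le K'(h_i+h_{i+1})^2$. Writing $(h_i+h_{i+1})^2\le 2h\,|h_i+h_{i+1}|$ with the maximal stepsize $h=\max_i|h_i|$ turns the sum into a telescoping bound against the total stepsize,
\begin{equation}
\sum_{i=k}^{N-1}\norm{\tau_i}\;\le\;2K'h\sum_{i=k}^{N-1}|h_i+h_{i+1}|\;\le\;4K'h\sum_{i}|h_i|\;=\;4K'h\,\abs{\bsigma_N-\bsigma_0},
\end{equation}
and since $\abs{\bsigma_N-\bsigma_0}$ is a fixed constant independent of the grid, the right-hand side is $O(h)\to0$.

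The main obstacle is precisely the step I just finessed: because $N\to\infty$ as $h\to0$, the uniform bound $\norm{\tau_i}\le Kh^2$ on its own only gives $\sum\norm{\tau_i}\le NKh^2$, which is useless without an a priori estimate $N=O(1/h)$. I avoid assuming a quasi-uniform mesh by observing that the per-step error genuinely scales with the \emph{local} stepsizes $h_i,h_{i+1}$, so the sum collapses against the fixed total $\sum_i|h_i|=\abs{\bsigma_N-\bsigma_0}$. A secondary subtlety is that consistency is demanded for every $y\in C^1$, whereas the Taylor argument uses more smoothness; I would settle this either by invoking Assumption \ref{asump:smooth} on $\bvepsilon_\theta$, or by a density argument approximating a $C^1$ solution by smooth ones, where boundedness of the coefficients keeps the approximation error uniformly controlled across the grid. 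With those two points resolved, consistency of both DDIM and BELM follows, since each has order of accuracy at least $1$ (Proposition \ref{prop:accuracy_order_belm}) and bounded coefficients (Lemma \ref{lemma:bounded_belm}).
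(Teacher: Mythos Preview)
The paper does not prove Theorem~\ref{theorem:consistent}; it is quoted as an external result from \cite[(2.5.1)]{crouzeix1984convergence} and used as a black box to deduce Lemmas~\ref{lemma:consis_belm} and~\ref{lemma:consis_ddim}. There is therefore no in-paper argument to compare your proposal against.

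As a standalone reconstruction, your telescoping estimate is the right idea and the arithmetic is sound: bounding $(h_i+h_{i+1})^2\le 2h\,|h_i+h_{i+1}|$ and summing against the fixed total $\sum_i|h_i|=|\bsigma_N-\bsigma_0|$ is precisely how one avoids a quasi-uniformity assumption on the mesh. The two caveats you raise are real and worth keeping explicit. First, Definition~\ref{def:accuracy} gives only $\norm{\tau_i}\le Kh^2$ with the \emph{global} $h$, so to extract the local bound $\norm{\tau_i}\le K'(h_i+h_{i+1})^2$ you must, as you say, go back to the Taylor expansion (Proposition~\ref{prop:local_error_general_belm2} or its $k$-step analogue) and invoke the bounded-coefficient hypothesis to make $K'$ uniform in $i$; this is where that hypothesis actually does work, not merely in ``absorbing'' constants. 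Second, the consistency definition in Appendix~\ref{app:def_consis} quantifies over $y\in C^1$, which is weaker than what a second-order Taylor remainder needs; the density argument you propose is the standard fix, but it requires showing that the map $y\mapsto\sum_i\norm{\tau_i}$ is continuous in the $C^1$ topology, and here again the uniform bound on the $a_{i,j},b_{i,j}$ is what makes the approximation error controllable across the whole grid.
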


\begin{lemma}\label{lemma:consis_belm}
    The BELM diffusion sampler \eqref{belm-2} is consistent.
\end{lemma}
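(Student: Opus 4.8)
The plan is to obtain consistency by directly invoking the packaged criterion of Theorem \ref{theorem:consistent}, which reduces the statement to verifying two hypotheses: that the method has order of accuracy at least one, and that all of its coefficients are bounded by a constant. Both ingredients have already been established earlier, so the argument is a short assembly rather than a fresh computation. I would \emph{not} try to prove $\lim_{h\to 0}\sum_{i=k}^{N-1}\norm{\tau_i}=0$ from the definition of consistency by hand, since that bookkeeping is precisely what Theorem \ref{theorem:consistent} abstracts away.

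First I would recall Proposition \ref{prop:accuracy_order_belm}, which shows the $2$-step BELM sampler \eqref{belm-2} enjoys second-order accuracy; since order two trivially implies order one, there exist constants $K$ and $h^*$ with $\norm{\tau_i}\leq K h^{p+1}$ for some $p\geq 1$ and all $0<h<h^*$, satisfying the first hypothesis. Next I would invoke Lemma \ref{lemma:bounded_belm}, which supplies a single constant $C$, independent of the noise schedule, bounding $|a_{i,1}|$, $|a_{i,2}|$, and $|b_{i,1}|$ for every step $i$; this is exactly the second hypothesis. With both in place, Theorem \ref{theorem:consistent} immediately delivers $\lim_{h\to 0}\sum_{i=k}^{N-1}\norm{\tau_i}=0$, which is the consistency of \eqref{belm-2}.

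The one point demanding genuine care — and the main obstacle were one to prove this without the packaged theorem — is the \emph{uniformity} of the per-step constant $K$ across all $i$. Summing $N$ local truncation errors, each of size $\mathcal{O}(h^{p+1})$, against the fact that the step count satisfies $N=\mathcal{O}(1/h)$, yields a total of order $\mathcal{O}(h^{p})\to 0$ only when the constant does not degenerate as the stepsizes $h_i$ and $h_{i+1}$ vary with $i$. This is exactly what the schedule-independent bound of Lemma \ref{lemma:bounded_belm} secures: without it, the coefficients entering the Taylor expansion of Proposition \ref{prop:local_error_general_belm2} could in principle blow up for degenerating stepsize ratios, and the summed error would not be guaranteed to vanish.
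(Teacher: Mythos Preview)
Your proposal is correct and matches the paper's own proof essentially line for line: the paper simply states that the lemma is a direct consequence of Lemma \ref{lemma:bounded_belm}, Proposition \ref{prop:accuracy_order_belm}, and Theorem \ref{theorem:consistent}, which is exactly the assembly you describe. Your additional paragraph on the uniformity of the constant $K$ is a nice elaboration, but the paper itself omits it and relies entirely on the packaged criterion.
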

\begin{proof}
This lemma is a direct result of Lemma \ref{lemma:bounded_belm}, Proposition \ref{prop:accuracy_order_belm} and Theorem \ref{theorem:consistent}.
\end{proof}

\begin{lemma}\label{lemma:consis_ddim}
    The DDIM diffusion sampler \eqref{DDIM} is consistent.
\end{lemma}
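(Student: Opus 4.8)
The plan is to mirror the proof of Lemma~\ref{lemma:consis_belm} exactly, invoking Theorem~\ref{theorem:consistent}, which reduces consistency to two checks: first-order accuracy and boundedness of all coefficients. Both ingredients have already been supplied elsewhere in the paper, so the proof should be a short application of the cited theorem.

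First I would recall that DDIM is precisely the explicit Euler discretization of the normalized diffusion IVP \eqref{ivp-diffusion}. Written in the $\bvx$ coordinates, the update \eqref{DDIM} takes the form $\bvx_{i-1} = \bvx_i - h_i \bvepsilon_\theta(\bvx_i,\bsigma_i)$, i.e. a one-step ($k=1$) instance of the VSVF LM scheme \eqref{vsvfm-general} with $a_{i,1}=1$, single derivative coefficient $-1$, and $b_{i,0}=0$ (so the method is explicit). These coefficients are absolute constants, in particular independent of the noise schedule $\alpha_t,\sigma_t$, so the boundedness hypothesis of Theorem~\ref{theorem:consistent} is met immediately; this plays the role that Lemma~\ref{lemma:bounded_belm} plays in the BELM case.

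Next I would invoke Proposition~\ref{prop:accuracy_order_belm}, which states that DDIM has first-order accuracy. Equivalently, by Proposition~\ref{prop:local_error_ddim} one has $\ve_i = \mathcal{O}(\alpha_{i-1} h_i^2)$ and hence $\tau_i = \ve_i/\alpha_{i-1} = \mathcal{O}(h_i^2)$, matching $\norm{\tau_i}\leq K h^{p+1}$ with $p=1$ in Definition~\ref{def:accuracy}. With the order-one and boundedness conditions both verified, Theorem~\ref{theorem:consistent} delivers consistency directly, completing the proof.

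I do not expect a genuine obstacle here: the content is essentially a one-line application of the cited theorem, exactly parallel to Lemma~\ref{lemma:consis_belm}. The only point requiring a moment of care is to carry out the verification in the normalized frame, where DDIM is constant-coefficient explicit Euler and coefficient-boundedness is trivial; stating the check in the original $\vx$ coordinates would needlessly entangle the $\alpha_{i-1}/\alpha_i$ factors, which is why routing the argument through the IVP \eqref{ivp-diffusion} is the clean approach.
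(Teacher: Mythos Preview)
Your proposal is correct and follows essentially the same approach as the paper: verify the hypotheses of Theorem~\ref{theorem:consistent} (first-order accuracy, already established in Proposition~\ref{prop:accuracy_order_belm}, together with bounded coefficients) and then apply the theorem. The only cosmetic difference is that the paper checks coefficient boundedness in the original $\vx$-coordinates, noting that $\frac{\alpha_{i-1}}{\alpha_i}$ and $\sigma_{i-1}-\frac{\alpha_{i-1}}{\alpha_i}\sigma_i$ are bounded for common noise schedules, whereas you pass to the normalized $\bvx$-frame where the explicit Euler coefficients are the constants $1$ and $-1$ and boundedness is trivial.
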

\begin{proof}
In common choice of noise schedule, $\frac{\alpha_{i-1}}{\alpha_{i}}$ and $\sigma_{i-1} - \frac{\alpha_{i-1}}{\alpha_{i}} \sigma_{i}$ is bounded.
Thus this lemma is a direct result of Theorem \ref{theorem:consistent}.
\end{proof}
After we establish the consistency of DDIM and BELM, we can prove their global convergence by a famous sufficiency of conditions for convergence.
\begin{theorem}\label{theorem:convergent}
    \cite[p.342~(Theorem 406D)]{butcher2016numerical}
    A linear multistep method is convergent if it is consistent and zero-stable.
\end{theorem}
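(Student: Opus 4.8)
The plan is to prove the classical Dahlquist-type equivalence directly in the variable-stepsize-variable-formula setting, using a discrete variation-of-constants (Grönwall) argument that is driven by the local truncation errors and damped by zero-stability. First I would introduce the per-step global error $\boldsymbol{\delta}_i = \bvx_i - \bvx(t_i)$ and compare the two difference equations it links. The numerical iterate obeys \eqref{belm-k} exactly, while the exact solution obeys the same recurrence up to its local truncation error; by the definition of $\tau_i$ this reads
\begin{equation}
\bvx(t_{i-1}) = \sum_{j=1}^{k} a_{i,j}\bvx(t_{i-1+j}) + \sum_{j=1}^{k-1} b_{i,j}\, h_{i-1+j}\,\bvepsilon_\theta(\bvx(t_{i-1+j}),\bsigma_{i-1+j}) + \tau_i .
\end{equation}
Subtracting the numerical recurrence and invoking the Lipschitz continuity of $\bvepsilon_\theta$ from Assumption \ref{asump:epsilon} yields a linear recurrence for $\boldsymbol{\delta}_{i-1}$ whose forcing term is $-\tau_i$ and whose coupling to earlier errors is controlled by $L_{\vepsilon_\theta}\, h_{i-1+j}\,\norm{\boldsymbol{\delta}_{i-1+j}}$.

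Next I would recast this scalar-indexed recurrence into companion form governed by the root matrix $\matr{R}_i$ of \eqref{root_matrix}: stacking $\boldsymbol{\delta}_i,\dots,\boldsymbol{\delta}_{i+k-1}$ into a block vector $\matr{E}_i$, the error satisfies $\matr{E}_{i-1} = \matr{R}_i\matr{E}_i + \matr{g}_i$, where $\matr{g}_i$ collects the $-\tau_i$ term and the Lipschitz perturbation and obeys $\norm{\matr{g}_i} \le C\bigl(h_i L_{\vepsilon_\theta}\norm{\matr{E}_i} + \norm{\tau_i}\bigr)$. The crucial move is to turn zero-stability into a uniform non-expansiveness of the homogeneous propagation: by Theorem \ref{theorem:stability} there is a single regular $\matr{H}$ with $\norm{\matr{H}^{-1}\matr{R}_i\matr{H}}_1 \le 1$ for every $i$, so in the equivalent norm $\norm{\matr{v}}_\star := \norm{\matr{H}^{-1}\matr{v}}_1$ one obtains $\norm{\matr{R}_i\matr{E}_i}_\star \le \norm{\matr{E}_i}_\star$. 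Combining this with the bound on $\matr{g}_i$ and the equivalence of norms produces the one-step estimate $\norm{\matr{E}_{i-1}}_\star \le (1 + C' L_{\vepsilon_\theta} h_i)\,\norm{\matr{E}_i}_\star + C'\norm{\tau_i}$.

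I would then close the argument with a discrete Grönwall inequality. Iterating the one-step estimate from $i=N$ downward and using $\sum_i h_i \le T$ bounds the accumulated factor $\prod_i (1 + C' L_{\vepsilon_\theta} h_i)$ by $\exp(C' L_{\vepsilon_\theta} T)$, uniformly in $h$ and in the number of steps, which gives
\begin{equation}
\max_{0\le i\le N}\norm{\matr{E}_i}_\star \le \exp\!\bigl(C' L_{\vepsilon_\theta} T\bigr)\Bigl(\norm{\matr{E}_N}_\star + C'\sum_{i}\norm{\tau_i}\Bigr).
\end{equation}
Consistency (Appendix \ref{app:def_consis}, i.e.\ \eqref{consistent_def}) forces $\sum_i\norm{\tau_i}\to 0$ as $h\to 0$, while the convergence hypothesis makes the starting-value error $\norm{\matr{E}_N}_\star\to 0$; norm equivalence then yields $\max_i\norm{\boldsymbol{\delta}_i}\to 0$, which is exactly the statement \eqref{convergent_def} of global convergence.

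I expect the main obstacle to be the variable-formula structure. Unlike the classical fixed-step proof, there is no single characteristic polynomial here, so the boundedness of arbitrarily long products $\matr{R}_i\matr{R}_{i+1}\cdots$ cannot simply be read off a root condition and must instead be extracted from the uniform $\matr{H}$-norm contraction supplied by zero-stability via Theorem \ref{theorem:stability}. The delicate point is that zero-stability as defined governs the \emph{homogeneous} recurrence with perturbed starting values, whereas the error recurrence above is \emph{forced} by $\tau_i$ and by the Lipschitz coupling; bridging this gap—showing that the uniform non-expansiveness in $\norm{\cdot}_\star$ is precisely what keeps the forced Grönwall recursion bounded independently of $h$ and of the step count—is the technical heart of the argument.
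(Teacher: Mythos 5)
First, a point of calibration: the paper does not prove this statement at all --- it is imported verbatim from Butcher (Theorem 406D in the cited reference) and used as a black box to deduce Propositions \ref{prop:convergence_stable_belm}(b) and \ref{prop:ddim_stable_convergence}(b). So your proposal is being measured against the classical textbook argument, whose skeleton you reproduce correctly: the exact solution satisfies the numerical recurrence up to the forcing $\tau_i$, the error is stacked into companion form $\matr{E}_{i-1} = \matr{R}_i\matr{E}_i + \matr{g}_i$, the Lipschitz term is absorbed into a factor $(1+C'L_{\vepsilon_\theta}h_i)$, and a discrete Gr\"onwall iteration combined with the paper's consistency definition \eqref{consistent_def} (which is exactly $\sum_i\norm{\tau_i}\to 0$) and vanishing starting error closes the bound. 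That part is sound, including the observation that the reverse-time direction only reverses the order of iteration.

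There is, however, a genuine logical gap at the step you yourself identify as the technical heart. You write that ``by Theorem \ref{theorem:stability} there is a single regular $\matr{H}$ with $\norm{\matr{H}^{-1}\matr{R}_i\matr{H}}_1\le 1$ for every $i$.'' Theorem \ref{theorem:stability} asserts the \emph{converse}: the existence of such an $\matr{H}$ is a \emph{sufficient} condition for zero-stability, not a consequence of it. The hypothesis of the theorem you are proving is zero-stability in the sense of \eqref{stable_def}, and nothing supplies the matrix $\matr{H}$ from that hypothesis. Moreover, the inference is not merely unproven but delicate in the VSVF setting: the definition \eqref{stable_def} controls only the propagation of perturbations of the \emph{starting} values, i.e., products of companion matrices anchored at the initial index $N$, whereas your forced Gr\"onwall recursion injects a perturbation $\tau_i$ at \emph{every} step and therefore needs uniform bounds on products $\matr{R}_i\cdots\matr{R}_j$ over \emph{arbitrary} windows. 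For a fixed-step method these coincide (all windows are powers of one matrix, and the root condition yields a common non-expansive norm); for variable formulae they do not --- one can have anchored products bounded while interior windows blow up --- which is precisely why the cited VSVF reference of Crouzeix defines stability through windowed products in the first place. To repair the proof you must either (i) strengthen the hypothesis to the windowed-product bound (or directly to the existence of $\matr{H}$, from which a common non-expansive norm and hence your one-step estimate follow), or (ii) prove that the paper's anchored definition \eqref{stable_def} implies the windowed bound for the class of methods at hand. Note that for the paper's actual applications the gap is harmless: zero-stability of O-BELM and DDIM is established in Proposition \ref{prop:belm_stable} precisely by exhibiting $\matr{H}$, so your argument does prove convergence of those two samplers --- it just does not prove the general theorem as stated.
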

With the help of Theorem \ref{theorem:convergent}, we can reach to Proposition \ref{prop:ddim_stable_convergence}(b) by Lemma \ref{lemma:consis_ddim} and Proposition \ref{prop:ddim_stable_convergence}(a);
and reach to Proposition \ref{prop:convergence_stable_belm}(b) by Lemma \ref{lemma:consis_belm} and Proposition \ref{prop:belm_stable}.

\section{Experiments Details and Extra Results}
In these image tasks, we only apply our 2-step O-BELM, as it has been demonstrated that higher-order numerical methods can lead to strong oscillations in stiff spaces such as images \cite[p.343]{suli2003introduction}. 
However, the application of higher-order O-BELM in other domains of Diffusion Models (DMs) continues to hold promise.

For the sake of open accessibility, the dataset used in this paper is publicly available on the internet. We have included codes, accompanied by corresponding instructions, in the supplementary materials and plan to make them accessible on GitHub. However, our Stable Diffusion-related code is intricately interwoven with our proprietary business code, and we are in the process of decoupling the codebase. As soon as this task is completed, we will make the codes available on GitHub.

\subsection{Image Reconstruction}\label{app:exp_recon}
Figure \ref{fig:recon} presents the reconstruction results from several example images under 50 steps. It is evident that DDIM reconstructs images with non-negligible distortions compared to the original images, as marked by the red rectangle in Figure \ref{fig:recon}. Our findings suggest that the exact inversion samplers (EDICT, BDIA, and O-BELM) indeed achieve exact inversion at the latent level, thereby achieving the lower bound of the reconstruction error of AE in latent diffusion models. Although the encoding and decoding processes of AE introduce some reconstruction error, these errors do not result in any detectable inconsistencies in the image as perceived by the human eye.
It's also important to note that exact inversion requires the storage of two intermediates for precise reconstruction. This is feasible in downstream tasks such as image editing.

We have also conducted an additional experiment to assess the reconstruction error in the latent space of O-BELM and other baseline methods as shown in Figure \ref{tab:recon_mse_latent}.
 \begin{table}[h]
\centering
\small
\renewcommand{\arraystretch}{1.0}
\caption{Comparison of different samplers on MSE reconstruction loss on latent space on COCO-14.}
\begin{tabular}{c|ccccc}
\Xhline{1pt}
{\multirow{2}{*}{}} & \multicolumn{3}{c}{MSE loss of reconstruction on latents}\\
\Xcline{2-5}{0.5pt} 
& DDIM  & EDICT & BDIA & \textbf{O-BELM}\\
\Xhline{1pt}
10 steps & 0.414  & 0.000 & 0.000 & 0.000 \\
20 steps & 0.243  & 0.000 & 0.000 & 0.000  \\
50 steps & 0.063  & 0.000 & 0.000 & 0.000  \\
100 steps & 0.041  & 0.000 & 0.000 & 0.000  \\
\Xhline{1pt}
\end{tabular}
\label{tab:recon_mse_latent}
\vspace{-5mm}
\end{table}

\begin{figure*}[h]
\centering
\includegraphics[width=0.95\textwidth]{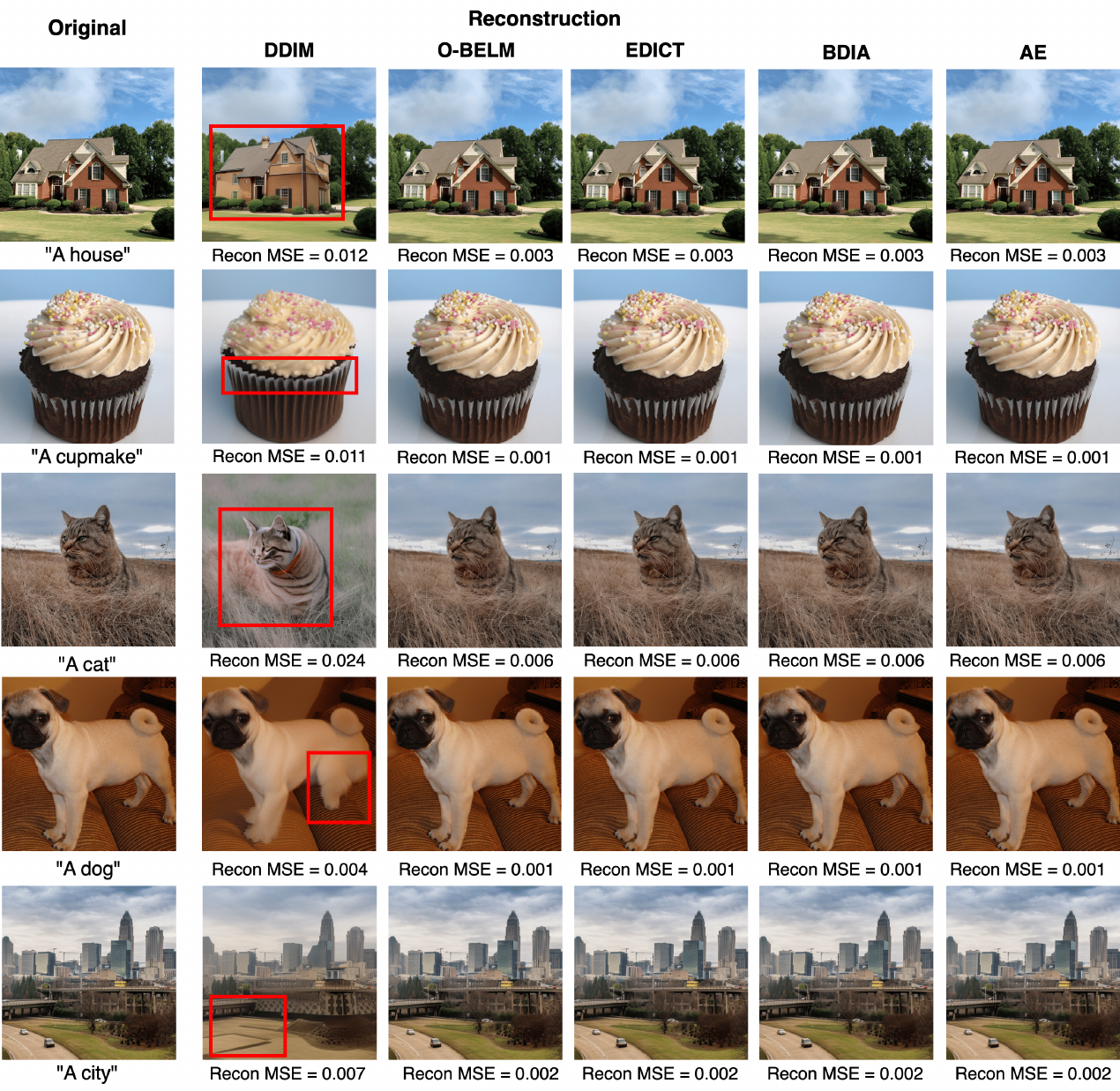} 
\caption{Results of image reconstruction and MSE error using DDIM and exact inversion samplers under 50 steps. The \textcolor{red}{red rectangle} point out the inconsistent part in the reconstructed images of DDIM.}
\label{fig:recon}
\end{figure*}

\subsection{Image Generation Results}\label{app:exp_generation}
\paragraph{hyperparameter choosing for EDICT and BDIA}
For EDICT and BDIA, each has an additional hyperparameter ($\gamma$ and $p$ respectively) whose optimal values are sensitive to the task at hand. 
To ascertain their appropriate hyperparameters for CIFAR-10 and CelebA-HQ, we executed a grid search in the 10-step scenario, as depicted in Table \ref{tab:hyperparameter_bdia} and Table \ref{tab:hyperparameter_edict}. These values were then fixed when performing cases with more steps. We evaluated $\gamma$ in BDIA from 0 to 1 with a grid increment of 0.1, and assessed $p$ in EDICT from 0.90 to 0.97 with a grid increment of 0.01, adhering to their suggested hyperparameter intervals.
For the text-guided generation task using COCO-14 captions, we employed the values recommended in their respective papers.

\begin{table}[h]
\centering
\scriptsize
\renewcommand{\arraystretch}{1.1}
\caption{Comparison of FID score( $\downarrow$) of BDIA method for the task of CIFAR10/CelebA-HQ generation with different choice of $\gamma$ in the 10-step scenario.}
\begin{tabular}{c|ccccccccccc}
\Xhline{1pt}
{\multirow{2}{*}{BDIA}} & \multicolumn{11}{c}{The choice of $\gamma$} \\
\Xcline{2-12}{0.5pt}
& 0.0 & 0.1 & 0.2 & 0.3 & 0.4 & 0.5 & 0.6 & 0.7 & 0.8 & 0.9 & 1.0 \\
\Xhline{1pt}
CIFAR10& 17.41& \textbf{12.27} & 15.60 & 23.62 & 33.39 & 43.93 & 54.93 & 66.32 & 78.33 & 92.16 & 106.37  \\
CelebA-HQ & \textbf{27.41} & 29.52 & 41.66 & 52.56 & 61.19 & 68.94 & 76.31 & 83.53 & 91.18 & 98.95 & 106.24   \\
\Xhline{1pt}
\end{tabular}
\label{tab:hyperparameter_bdia}
\end{table}

\begin{table}[h]
\centering
\small
\renewcommand{\arraystretch}{1.1}
\caption{Comparison of FID score( $\downarrow$) of EDICT method for the task of CIFAR10/CelebA-HQ generation with different choice of $p$ in the 10-step scenario.}
\begin{tabular}{c|cccccccc}
\Xhline{1pt}
{\multirow{2}{*}{EDICT}} & \multicolumn{8}{c}{The choice of $p$} \\
\Xcline{2-9}{0.5pt}
& 0.90 & 0.91 & 0.92 & 0.93 & 0.94 & 0.95 & 0.96 & 0.97  \\
\Xhline{1pt}
CIFAR10: 10 steps & 149.00 & 142.81 & 135.05 & 127.52 & 119.57 & 110.50 & 99.86 & \textbf{87.11}   \\
\Xcline{1-9}{0.5pt}
CelebA-HQ: 10 steps & 82.16 & 78.09 & 74.10 & 70.18 & 66.61 & 63.16 & 60.43 & \textbf{57.82}   \\
\Xhline{1pt}
\end{tabular}
\label{tab:hyperparameter_edict}
\end{table}

\paragraph{Guidance Weight in Conditional Generation} 
The 30k prompts is randomly selected from the COCO dataset \cite{lin2014microsoft} as the test set.
For the text-guided generation task, we utilize a classifier-free technique \cite{ho2022classifier} which requires a guidance weight. For BDIA, we select a guidance weight of $4.0$ and for EDICT, we choose $3.0$, as recommended in their respective papers. For DDIM, we perform a grid search in the 20-step scenario, as shown in Table \ref{tab:hyperparameter_guidance}, and determine the optimal guidance weight to be $5.5$. This value is then fixed for other scenarios as well as for the O-BELM sampler.

\begin{table}[h]
\centering
\footnotesize
\renewcommand{\arraystretch}{1.1}
\caption{Comparison of FID score( $\downarrow$) of DDIM method for the task of text-guided generation with different choice of guidance weight.}
\begin{tabular}{c|ccccccccccc}
\Xhline{1pt}
{\multirow{2}{*}{DDIM}} & \multicolumn{10}{c}{The choice of guidance weight} \\
\Xcline{2-12}{0.5pt}
& 2.5 & 3.0 & 3.5 & 4.0 & 4.5 & 5.0 & 5.5 & 6.0 & 6.5 & 7.0 & 7.5  \\
\Xhline{1pt}
COCO-14 & 26.03 & 22.66 & 20.90 & 20.14 & 19.66 & 19.47 & \textbf{19.45} & 19.47 & 19.61 & 19.81 & 19.97   \\
\Xhline{1pt}
\end{tabular}
\label{tab:hyperparameter_guidance}
\end{table}

\paragraph{Examples of O-BELM}
We present unconditionally generated samples of O-BELM sampler in Figure \ref{fig:unconditional-a} (CIFAR10, 32$\cross$32) and Figure \ref{fig:unconditional-b} (CelebA-HQ, 256$\cross$256). 
Furthermore, we display text-guided generated samples in Figure \ref{fig:coco_sd15}, utilizing the pretrained SD-1.5 model \cite{rombach2022high} (512$\cross$512) with captions from the COCO-14 dataset \cite{lin2014microsoft}. The guidance weight for our O-BELM has been set at $5.5$ to align with the choice of DDIM.

\begin{figure}
    \centering
    \subfigure[]{\includegraphics[width=0.44\textwidth]{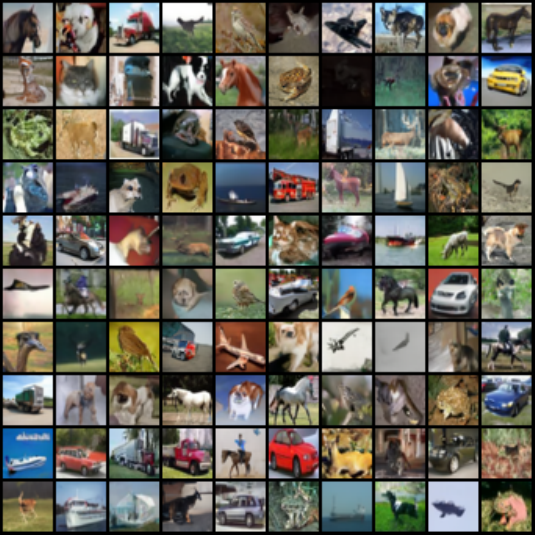}\label{fig:unconditional-a}} 
    \subfigure[]{\includegraphics[width=0.44\textwidth]{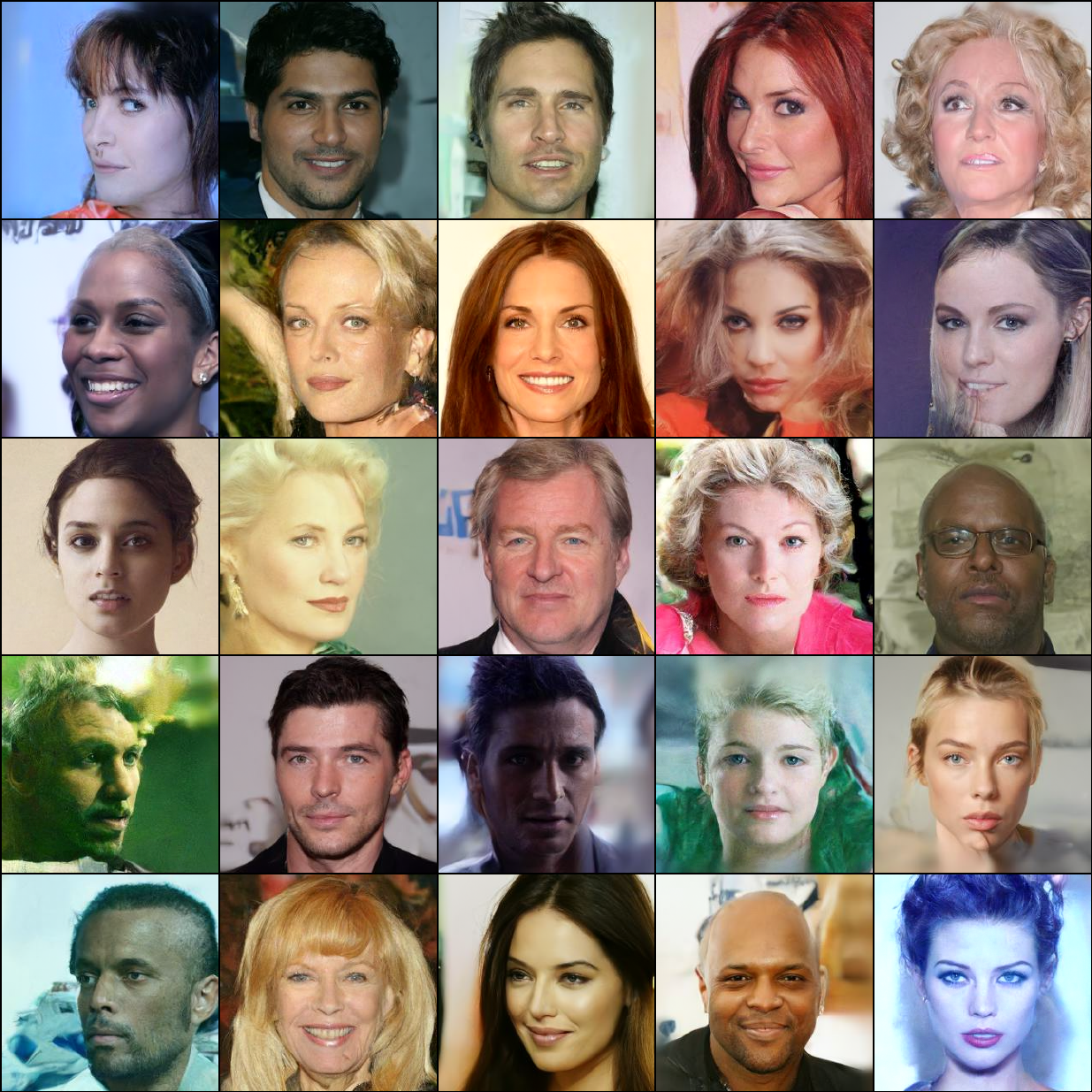}\label{fig:unconditional-b}} 
    \caption{(a) uncurated CIFAR10 samples with BELM, steps = 100 (b) uncurated CelebA-HQ samples with BELM, steps = 100 }
    \label{fig:unconditional}
\end{figure}

\begin{figure*}[h]
\centering
\includegraphics[width=0.90\textwidth]{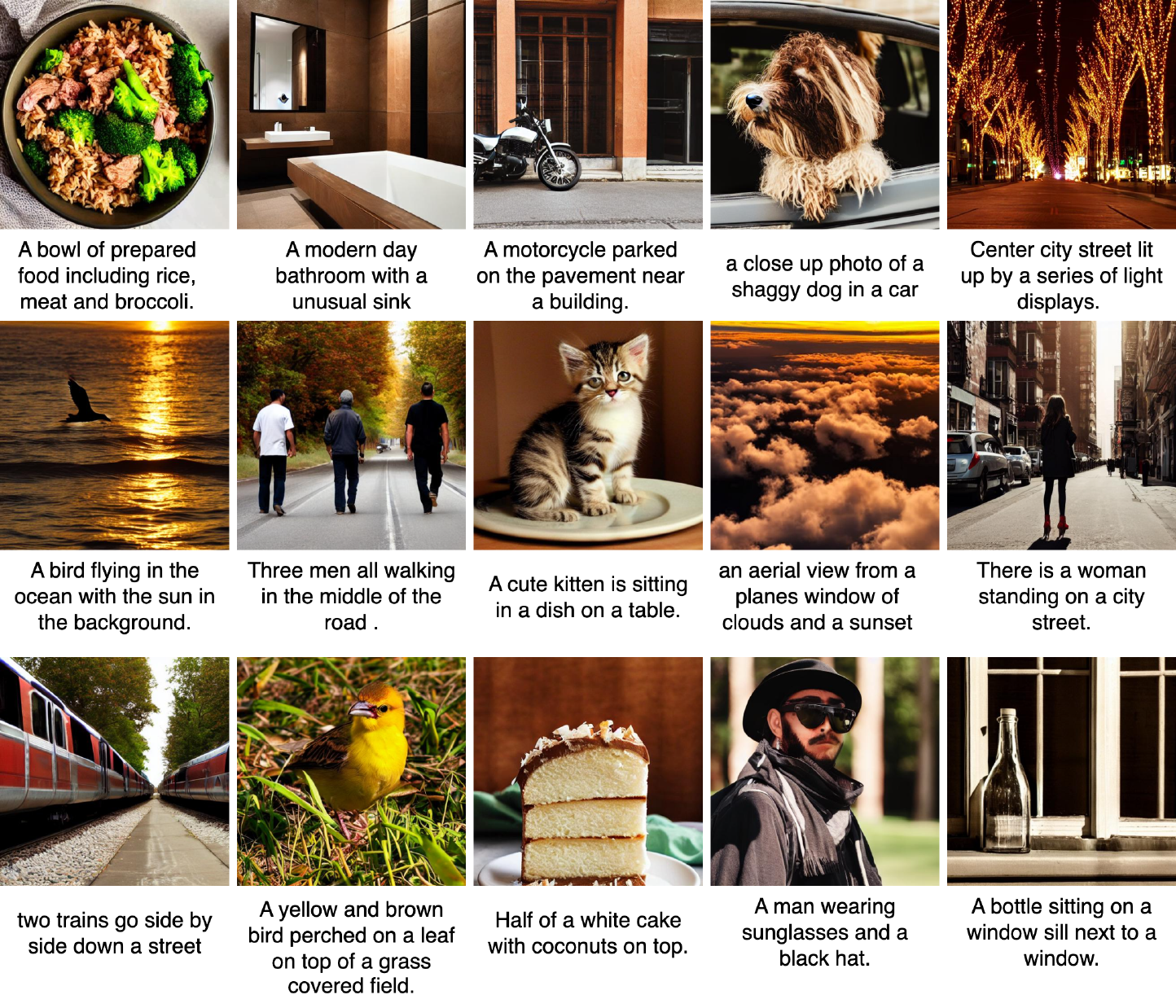} 
\caption{Prompts and generated images by O-BELM on COCO-14 dataset using SD-1.5 with 100 steps.}
\label{fig:coco_sd15}
\end{figure*}

\subsection{Image Interpolation}\label{app:exp_inter}
Image interpolation refers to the process of morphing between two images by interpolating between their corresponding latent vectors in the latent space, usually expecting to achieve a smooth transition between these images. 

The diffusion ODE \eqref{PFODE} establishes a correspondence between latent noise and samples, which can also be perceived as a coding for the samples. Given that O-BELM can more effectively simulate the diffusion ODE while preserving the one-to-one relationship of the coding, we believe that the exact inversion of O-BELM can intrinsically provide a more rational correspondence. This, in turn, facilitates superior interpolation effects.

We follow the experiment setting in \cite{song2021denoising} to generate interpolations on a line, which randomly sample two initial values $x_T^{(0)}$ and $x_T^{(1)}$ from the standard Gaussian $\mathcal{N}(0,1)$, interpolate them with spherical linear interpolation \cite{shoemake1985animating}, then use the BELM to obtain $x_0$ samples. The spherical linear interpolation $x_T^{(\alpha)}$ is calculated by
\begin{equation} 
    \begin{aligned}
        \vx_T^{(\alpha)} = \frac{\sin((1-\alpha)\theta)}{\sin(\theta)}\vx_T^{(0)}+\frac{\sin(\alpha\theta)}{\sin(\theta)}\vx_T^{(1)},
    \end{aligned}
\end{equation}
where $\theta = \arccos\left(\frac{\left(\vx_T^{(0)}\right)^T\left(\vx_T^{(1)}\right)}{\norm{\vx_T^{(0)}}\norm{\vx_T^{(1)}}}\right)$. We demonstrate the interpolation results of various models including CelebA-HQ (a), Butterflies (b), Emoji (c) and Anime (d) in Figure \ref{fig:inter}.
\begin{figure*}[h]
\centering
\includegraphics[width=0.95\textwidth]{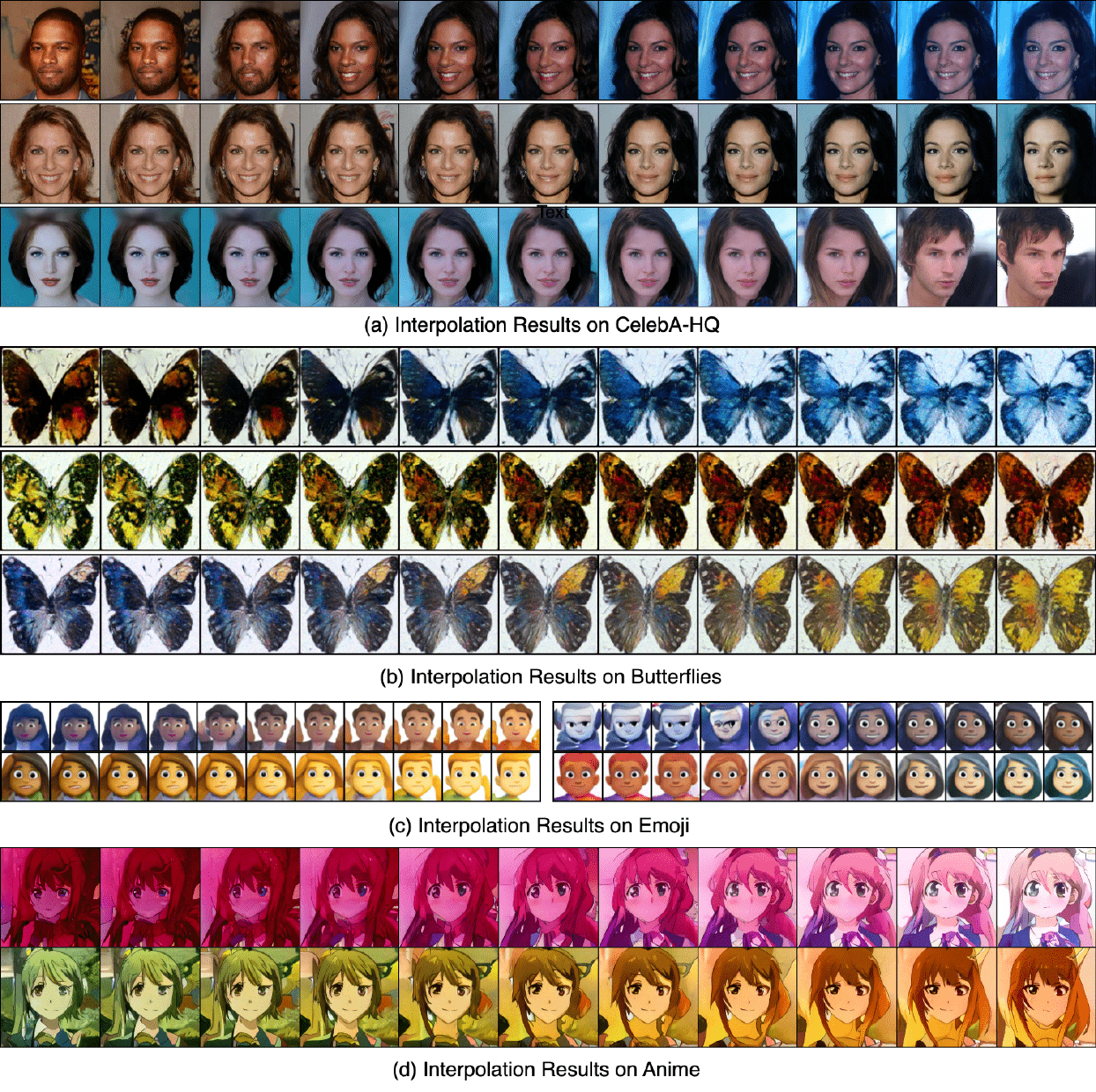} 
\caption{Interpolation of samples of various models using O-BELM with 100 steps.}
\label{fig:inter}
\end{figure*}

\subsection{Image Editing}
We adhere to the experimental setup of \cite{wallace2023edict}, initially introducing inversion noise to the images while preserving 20 percent of the steps during the inversion process. We utilize new prompts to reconstruct and edit the images. The guidance weight is consistently set at 3.0 for all instances.

\subsubsection{ControlNet-Based Image Editing}
We evaluated O-BELM and baseline algorithms on ControlNet-based image editing tasks, which included canny-based and depth-map-based editing as illustrated in Figure \ref{fig:control}. The editing hyperparameters are chosen the same as our original paper. The ControlNet hyperparameters were kept at their default values, consistent across all methods. We set the number of steps to 100. The canny images were obtained using the Canny function from the opencv-python library, and the depth-map model used was Intel/dpt-large (\url{https://huggingface.co/Intel/dpt-large}). We use stable-diffusion-v1-5 model (\url{https://huggingface.co/runwayml/stable-diffusion-v1-5}) as our base model. 

\begin{figure*}[h]
\centering
\includegraphics[width=1\textwidth]{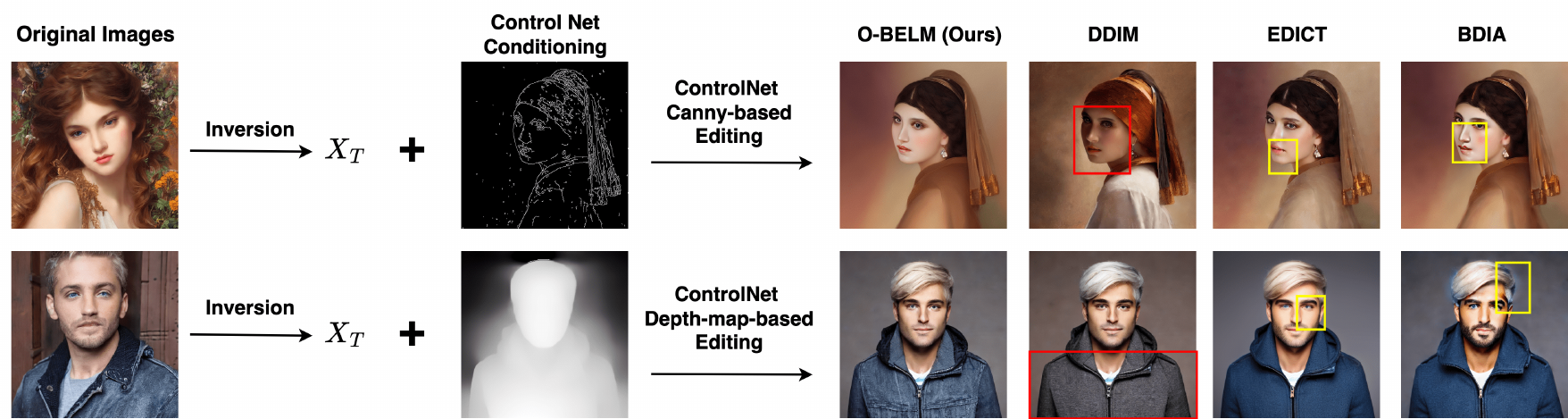} 
\caption{
Comparison of ControlNet-based editing results of different samplers.
DDIM leads to inconsistencies (\textcolor{red}{red rectangle}), and the EDICT and BDIA samplers introduce low-quality sections (\textcolor{orange}{yellow rectangle}). Our O-BELM sampler ensures consistency and demonstrates high-quality results, even in such large scale editing and still preserve features from original images (face in the first example and clothing in the second example).}
\label{fig:control}
\end{figure*}

\subsubsection{Style Transfer}
We evaluated O-BELM and baseline algorithms on style transfer tasks using the style transfer sub-dataset of the PIE-Bench dataset (\url{https://paperswithcode.com/dataset/pie-bench}) as illustrated in Figure \ref{fig:style}. The editing hyperparameters were selected to match those in our original paper. We use stable-diffusion-2-base model (\url{https://huggingface.co/stabilityai/stable-diffusion-2}) as our base model.

\begin{figure*}[h]
\centering
\includegraphics[width=0.79\textwidth]{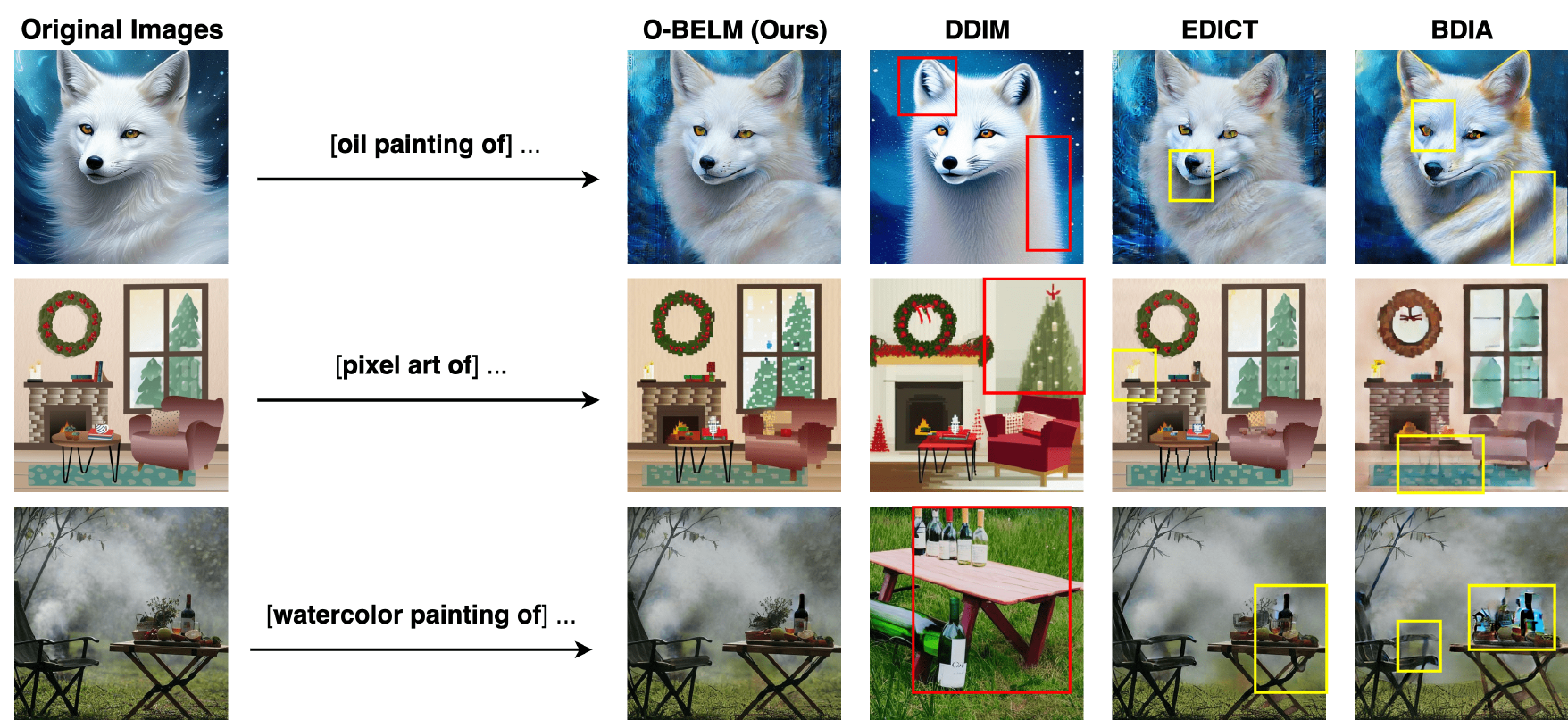} 
\caption{
Comparison of Style Transfer results of different samplers on the PIE Benchmark.
DDIM leads to inconsistencies (\textcolor{red}{red rectangle}), and the EDICT and BDIA samplers introduce low-quality sections (\textcolor{orange}{yellow rectangle}). Our O-BELM sampler ensures structure preservation and high-quality style transfer, thus show the robustness and effectiveness of O-BELM sampler. }
\label{fig:style}
\end{figure*}

\subsection{Pretrained Models}\label{app:exp_models}
All of the pretrained models used in our research are open-sourced and available online as follows:
\begin{itemize}
    \item CIFAR10 generation : ddpm\_ema\_cifar10
     
     \href{https://github.com/VainF/Diff-Pruning/releases/download/v0.0.1/ddpm_ema_cifar10.zip}{https://github.com/VainF/Diff-Pruning/releases/download/v0.0.1/ddpm\_ema\_cifar10.zip}
    \item CelebA-HQ generation and interpolation : ddpm-ema-celebahq-256 
    
     \href{https://huggingface.co/google/ddpm-ema-celebahq-256}
     {https://huggingface.co/google/ddpm-ema-celebahq-256}
     \item Text-to-Image generation : stable-diffusion-v1-5, stable-diffusion-2-base
    
     \href{https://huggingface.co/runwayml/stable-diffusion-v1-5}
     {https://huggingface.co/runwayml/stable-diffusion-v1-5}

     \href{https://huggingface.co/stabilityai/stable-diffusion-2-base}
     {https://huggingface.co/stabilityai/stable-diffusion-2-base}
     \item Butterflies interpolation : ddim-butterflies-128
    
     \href{ https://huggingface.co/dboshardy/ddim-butterflies-128}
     { https://huggingface.co/dboshardy/ddim-butterflies-128}
     \item Emoji interpolation : ddpm-EmojiAlignedFaces-64
    
     \href{ https://huggingface.co/Norod78/ddpm-EmojiAlignedFaces-64}
     { https://huggingface.co/Norod78/ddpm-EmojiAlignedFaces-64}
     \item Anime interpolation : ddpm-ema-anime-256
    
     \href{https://huggingface.co/mrm8488/ddpm-ema-anime-256}
     { https://huggingface.co/mrm8488/ddpm-ema-anime-256}
     
\end{itemize}
\paragraph{The scheduler setting}
 For these pre-trained diffusion models, we adopt the noise scheduler outlined in their respective configurations and apply it consistently across all our experiments. As our experiments do not involve the training or fine-tuning of diffusion models, there is no requirement to develop a new scheduler setting.
\section{Discussions}\label{app:discussion}
\subsection{Hyperparameters of BDIA and EDICT}\label{app:dis_hyper}
Notice that, the intuitive exact inversion samplers achieve exact diffusion inversion at a cost of introducing an additional hyperparameter. comparing to DDIM, including both BDIA \eqref{BDIA} (with additional hyperparameter $\gamma$) and EDICT \eqref{EDICT} (with additional hyperparameter $p$). We point out that the need for additional hyperparameters would hinder the widespread application of the exact inversion samplers. The sampling quality of the previous exact inversion samplers is highly inrobust to the additional hyperparameter. EDICT recommend to choose $p\in[0.9,0.97]$ as EDICT would result in inconvergence when $p \leq 0.9$.

As depicted in Figure \ref{fig:inrobust}, we observe that the use of different hyperparameters within the recommended interval could potentially result in divergence. In Table \ref{tab:hyperparameter_bdia} and Table \ref{tab:hyperparameter_edict}, we note that the Frechet Inception Distance (FID) fluctuates significantly with respect to these unstable hyperparameters. Furthermore, the optimal hyperparameters vary across different datasets and steps.

\begin{figure*}[h]
\centering
\includegraphics[width=0.90\textwidth]{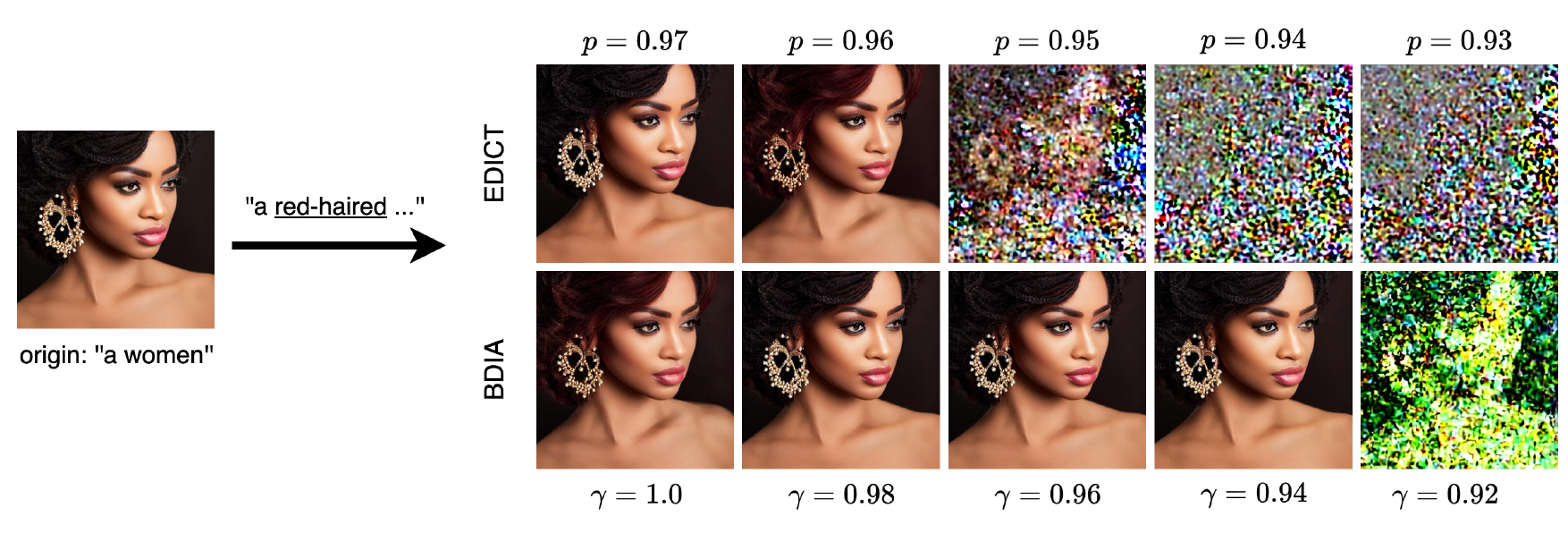} 
\caption{Image editing example for EDICT and BDIA with different hyperparameters, carried out over 200 steps. We observe that even within the interval advised in the original paper, the editing result may still diverge.}
\label{fig:inrobust}
\end{figure*}

\subsection{The Different Definition on LTE}
We would like to draw our readers' attention to the fact that the term Local Truncation Error (LTE) as used in this paper might differ from its usage in some other mathematical papers \cite[p.317(12.24)]{suli2003introduction}. Specifically, what is referred to as $\tau_i/h_i$ in this paper is often called LTE in other contexts, implying that their definition of LTE includes an additional division by a stepsize. However, in the context of variable-stepsize-variable-formula (VSVF), our definition proves to be more convenient and is more commonly adopted in papers dealing with VSVF \cite[(2.1)]{crouzeix1984convergence}.

\subsection{Time Complexity and Memory Complexity}
\paragraph{Time complexity} 
Regarding the sampling task of diffusion models, the time cost bottleneck is the access to the noise network $\vepsilon_\theta(\vx_i,i)$. The number of accesses to $\vepsilon_\theta(\vx_i,i)$ is also referred to as NFE (the number of function evaluations). For the same value of $N$, we observed that O-BELM, DDIM, and BDIA all require an NFE equal to $N$ for a single sampling chain. However, EDICT doubles this requirement to $2N$. 

Experimentally, we've conducted additional tests to compare the average time cost of different methods across sampling, editing, and reconstruction tasks. The results show that O-BELM does not incur any additional computational overhead compared to DDIM across all these tasks. Detailed information about these experiments can be found in Table \ref{tab:time}.
\paragraph{Memory complexity} 
During the sampling process of diffusion models, typically the entire chain of the process is maintained. Both BDIA and O-BELM do not require additional memory beyond the previous sampling path. However, due to the auxiliary states, the memory requirements of EDICT need to be doubled.

All experiments were conducted on a single V100 GPU and an Intel Xeon Platinum 8255C CPU. The sampling of 30k images using SD models under 100 steps took approximately 24 hours. The sampling of 50k images using a pre-trained CIFAR10 model under 100 steps took around 4 hours. Meanwhile, the sampling of 50k images using a pre-trained CelebA-HQ model under 100 steps required about 40 hours.

\begin{table}[h]
\centering
\renewcommand{\arraystretch}{1.0}

\begin{tabular}{c|ccc} 
\Xhline{1pt} 
{\multirow{2}{*}{}} & \multicolumn{3}{c}{Time costs of Different Tasks (50 steps)}\\ 
\Xcline{2-4}{0.5pt} 
& Image Generation (s) & Image Editing (s) & Image Reconstruction (s) \\ 
\Xhline{1pt} 
DDIM & 6.67 & 13.30 & 13.20 \\ 
EDICT & 12.67 & 25.77 & 25.72 \\ 
BDIA & 6.59 & 13.37 & 13.28 \\ 
\textbf{O-BELM(Ours)} & \textbf{6.53} & \textbf{13.22} & \textbf{13.20} \\ 
\Xhline{1pt} 
\end{tabular}
\vspace{3mm}
\caption{Comparison of time costs for different methods on the PIE Benchmark using the SD-2b model, as tested on a single NVIDIA Tesla V100. The results indicate that O-BELM does not incur additional computational time costs compared to DDIM across Generation, Editing, and Reconstruction tasks. We assessed the time costs of O-BELM and baseline algorithms using the PIE-Bench dataset (\url{https://paperswithcode.com/dataset/pie-bench}), which included tasks such as image generation, image editing, and image reconstruction. The number of steps was set to 50. We employed the stable-diffusion-2-base model (\url{https://huggingface.co/stabilityai/stable-diffusion-2}) as our base model and conducted tests on a single NVIDIA V100 chip and an Intel Xeon Platinum 8255C CPU.}
\label{tab:time}
\end{table}

\subsection{Other Inversion Techniques}
We observe that the field of diffusion inversion is rapidly evolving. Recently, several works related to diffusion inversion have been proposed. 

For instance, the study by \cite{huberman2023edit} suggests altering the prior distribution, as opposed to using Gaussian noise, for more convenient inversion. However, this approach requires training new models, rendering it incompatible with existing pretrained models. 

The research conducted by \cite{zhang2023robustness,garibi2024renoise} advocates for the training of a model-dependent bias corrector for precise inversion. Despite this, it fails to achieve mathematically exact inversion. 

The work of \cite{meiri2023fixed,hong2023exact} proposes the use of an implicit method in inversion to align with the sampling. However, this approach is time-consuming and residual optimization errors persist. 

And, the study by \cite{li2024bidirectional} suggests training a reverse one-step consistency model. However, its experimental performance also demonstrates reconstruction inconsistency.

We understand that there are several techniques proposed to address the inexact inversion issue of DDIM within the context of \textit{classifier-free-text-guided image editing}. These include NMG \cite{cho2024noise}, DirectingInv \cite{ju2023direct}, ProxEdit \cite{han2024proxedit}, NPT \cite{miyake2023negative} and NT \cite{mokady2023null}. We point out that the proposed O-BELM and these techniques should not be considered as comparative algorithms for following reasons.
\begin{itemize}
\item These methods are orthogonal. O-BELM modifies the discretization formula to achieve exact inversion, while these techniques adjust the classifier-free-guidance mechanism. They address this problem from different directions.
\item They can be used together in the classifier-free-text-guided image editing. Take DirectingInv as an instance, its inversion is just DDIM inversion and its forward process encompasses two state-interacting DDIM forward processes with different prompts. We can substitute the DDIM inversion/forward in DirectingInv to be O-BELM inversion/forward and get O-BELM+DirectingInv.
\item Their working scenarios differ. The O-BELM is built on the general diffusion IVP, and can guarantee exact inversion and minimized error under all tasks based on diffusion ODE (PF-ODE). O-BELM can always converge to underlying IVP solution as demonstrated by Proposition \ref{prop:convergence_stable_belm}. This means that the BELM framework is compatible with a wide variety of diffusion-based tasks, irrespective of the data type (images or words), the task type (editing or interpolation), the guidance method (unconditional, classifier-free, classifier-based, or adjoint ODE-based), or the network structure (whether it includes an attention layer or not). On the contrary, these techniques are developed specific for classifier-free-text-guided image editing task.
\end{itemize}

\subsection{Broader (Social) Impacts}
The development of accurate and stable exact inversion diffusion sampling like O-BELM for DMs, as discussed in this paper, holds significant potential for several domains, including machine learning, healthcare, environmental modeling, and economics.

However, while this research holds great potential for positive impacts, it is also important to consider potential negative societal impacts. 
The enhanced ability of an accurate and stable exact diffusion sampler could potentially be misused. 
For instance, it could be exploited to creating deepfakes, leading to misinformation.
It may also raise privacy concerns, as more detailed and source data can be decoded from the intermediates using O-BELM. 
In healthcare, if not properly regulated, the use of synthetic patient data could lead to ethical issues.
Therefore, it is crucial to ensure that the findings of this research are applied ethically and responsibly, with necessary safeguards in place to prevent misuse and protect privacy.

 \subsection{Limitations} 
 This paper does not explore the integration of high-accuracy exact inversion samplers such as O-BELM with more powerful image editing pipelines. Additionally, the application of high-accuracy exact inversion samplers like O-BELM to tasks beyond image processing remains uninvestigated in this work.
The concept of employing bidirectional explicit constraints to ensure exact inversion when applied to accelerated DM-solvers remains unexplored.
It will be also interesting to apply the exact inversion ODE sampler in variational inference \cite{wibisono2016variational,wang2024gad} or flow matching \cite{lipman2022flow,zhu2024neural}.

\end{document}